\newtheorem{assumption}{\textbf{Assumption}}
\newtheorem{definition}{\textbf{Definition}}
\newtheorem{lemma}{\textbf{Lemma}}
\newtheorem{theorem}{\textbf{Theorem}}
\newtheorem{remark}{\textbf{Remark}}
\newcommand{\kp}{\mathsf P}
\newcommand{\cp}{\mathcal{P}}
\newcommand{\mcs}{\mathcal{S}}
\newcommand{\mca}{\mathcal{A}}
\newcommand{\nn}{\nonumber}
\newcommand{\mE}{\mathbb{E}}
\newcommand{\su}{\acute{s}}
\newcommand{\sm}{\grave{s}}
\title{Robust Average-Reward Markov Decision Processes}
\author {
   Yue Wang,\textsuperscript{\rm 1}
    Alvaro Velasquez, \textsuperscript{\rm 2}
    George Atia, \textsuperscript{\rm 3}
    Ashley Prater-Bennette, \textsuperscript{\rm 4}
    Shaofeng Zou \textsuperscript{\rm 1}
}
\begin{document}

\maketitle

\begin{abstract}
In robust Markov decision processes (MDPs), the uncertainty in the transition kernel is addressed by finding a policy that optimizes the worst-case performance over an uncertainty set of MDPs. While much of the literature has focused on discounted MDPs, robust average-reward MDPs remain largely unexplored. In this paper, we focus on robust average-reward MDPs, where the goal is to find a policy that optimizes the worst-case average reward over an uncertainty set. We first take an approach that approximates average-reward MDPs using discounted MDPs. We prove that the robust discounted value function converges to the robust average-reward as the discount factor $\gamma$ goes to $1$, and moreover, when $\gamma$ is large, any optimal policy of the robust discounted MDP is also an optimal policy of the robust average-reward. We further design a robust dynamic programming approach, and theoretically characterize its convergence to the optimum. Then, we investigate robust average-reward MDPs directly without using discounted MDPs as an intermediate step. We derive the robust Bellman equation for robust average-reward MDPs, prove that the optimal policy can be derived from its solution, and further design a robust relative value iteration algorithm that provably find its solution, or equivalently, the optimal robust policy.

\end{abstract}

\section{Introduction}
A Markov decision process (MDP) is an effective mathematical tool for 
sequential decision-making in stochastic environments \cite{derman1970finite,puterman1994markov}. Solving an MDP problem entails finding an optimal policy that maximizes a cumulative reward according to a given criterion. 
However, in practice there could exist a mismatch between the assumed MDP model and the underlying environment due to various factors, such as non-stationarity of the environment, modeling error, exogenous perturbation, partial observability, and adversarial attacks. The ensuing model mismatch could result in solution policies with poor performance. 

This challenge spurred noteworthy efforts on developing and analyzing  a framework of robust MDPs e.g., \cite{bagnell2001solving,nilim2004robustness,iyengar2005robust}. 
Rather than adopting a fixed MDP model, in the robust MDP setting, one seeks to optimize the worst-case performance over an uncertainty set of possible MDP models.  
The solution to the robust MDP problem provides performance guarantee for all uncertain MDP models, and is thus robust to the model mismatch. 

Robust MDP problems falling under different reward optimality criteria are fundamentally different. 
In robust discounted  MDPs, the goal is to find a policy that maximizes the discounted cumulative reward in the worst case. In this setting, as the agent interacts with the environment, the reward 
received diminishes exponentially over time. 
Much of the prior work in the robust setting has focused on the discounted reward formulation. 
The model-based method, e.g., \citep{iyengar2005robust,nilim2004robustness,bagnell2001solving,satia1973markovian,wiesemann2013robust,tamar2014scaling,lim2019kernel,xu2010distributionally,yu2015distributionally,lim2013reinforcement}, where information about the uncertainty set is assumed to be known to the learner, unveiled several fundamental characterizations of robust discounted  MDPs. This was further extended to the more practical model-free setting in which only samples from a simulator (the centroid of the uncertainty set) are available to the learner. For example, the value-based method \cite{roy2017reinforcement,badrinath2021robust,wang2021online,tessler2019action,zhou2021finite,yang2021towards,panaganti2021sample,goyal2018robust,kaufman2013robust,ho2018fast,ho2021partial,si2020distributionally} optimizes the worst-case performance using the robust value function as an intermediate step; on the other hand, the model-free policy-based method \cite{russel2020robust,derman2021twice,eysenbach2021maximum,wang2022policy} directly optimizes the policy and is thus scalable to large/continuous state and action spaces.

Although discounted MDPs induce an elegant Bellman operator that is a contraction, and have been studied extensively, the policy obtained usually has poor long-term performance when a system operates for an extended period of time. When the discount factor is very close to $1$, the agent may prefer to compare policies on the basis of their average expected reward instead of their expected total discounted reward, e.g., queueing control, inventory management in supply chains, scheduling automatic guided vehicles and applications in communication networks \cite{kober2013reinforcement}. Therefore, it is also important to optimize the long-term average performance of a system.

However, robust MDPs under the average-reward criterion are largely understudied. Compared to the discounted setting, the average-reward setting depends on the limiting behavior of the underlying stochastic process, and hence is markedly more intricate. A recognized instance of such intricacy concerns the one-to-one correspondence between the stationary policies and the limit points of state-action frequencies, which while true for discounted MDPs, breaks down under the average-reward criterion even in the non-robust setting except in some very special cases \cite{puterman1994markov,atia2021steady}. This is largely due to dependence of the necessary conditions for establishing a contraction in average-reward settings on the graph structure of the MDP, versus the discounted-reward setting where it simply suffices to have a discount factor that is strictly less than one. Heretofore, only a handful of studies have considered average-reward MDPs in the robust setting. The first work by \cite{tewari2007bounded} considers robust average-reward MDPs under a specific finite interval uncertainty set, but their method is not easily applicable to other uncertainty sets. More recently, \cite{lim2013reinforcement} proposed an algorithm for robust average-reward MDPs under the $\ell_1$ uncertainty set. However, obtaining fundamental characterizations of the problem 
and convergence guarantee remains elusive. 
\subsection{Challenges and Contributions}
In this paper, we derive characterizations 
of robust average-reward MDPs with general uncertainty sets, and develop model-based approaches with provable theoretical guarantee. Our approach is fundamentally different from previous work on robust discounted MDPs, robust and non-robust average-reward MDPs. In particular, the key challenges and the main contributions are summarized below.

\begin{itemize}[leftmargin=*]
    \item \textbf{We characterize the limiting behavior of robust discounted  value function as the discount factor $\gamma\to 1$.} For the standard \emph{non-robust} setting and for a specific transition kernel, the discounted non-robust value function converges to the average-reward non-robust value function as $\gamma\to 1$ \cite{puterman1994markov}. However, in the robust setting, we need to consider the worst-case limiting behavior under all possible transition kernels in the uncertainty set. Hence, the previous point-wise convergence result \cite{puterman1994markov} cannot be directly applied. In \cite{tewari2007bounded}, a finite interval uncertainty set is studied, where due to its special structure, the number of possible worst-case transition kernels of robust discounted MDPs is finite, and hence the order of $\min$ (over transition kernel) and $\lim_{\gamma\to 1}$ can be exchanged, and therefore, the robust discounted value function converges to the robust average-reward value function. This result, however, does not hold for  general uncertainty sets investigated in this paper.
    We first prove the \textit{uniform} convergence of discounted non-robust value function to average-reward   w.r.t. the transition kernels and policies. Based on this uniform convergence, we show the convergence of the robust discounted value function to the robust average-reward. This uniform convergence result is the first in the literature and is of key importance to motivate our algorithm design and to guarantee convergence to the optimal robust policy in the average-reward setting.   
    
    \item \textbf{We design algorithms for robust policy evaluation and optimal control based on the limit method.} Based on the uniform convergence, we then use robust discounted  MDPs to approximate robust average-reward MDPs. 
    We show that when $\gamma$ is large, any optimal policy of the robust discounted MDP is also an optimal policy of the robust average-reward, and hence solves the robust optimal control problem in the average reward setting. This result is similar to the Blackwell optimality \cite{blackwell1962discrete,hordijk2002blackwell} for the non-robust setting, however, our proof is fundamentally different. Technically, the proof in \cite{blackwell1962discrete,hordijk2002blackwell} is based on the fact that the difference between the discounted value functions of two policies is a rational function of the discount factor, which has a finite number of zeros. However, in the robust setting with a general uncertainty set, the difference is no longer a rational function due to the min over the transition kernel. 
    We construct a novel proof based on the limiting behavior of robust discounted MDPs, and show that the (optimal) robust discounted value function converges to the (optimal) robust average-reward as $\gamma\to1$. Motivated by these insights, we then design our algorithms by applying a sequence of robust discounted Bellman operators while increasing the discount factor at a certain rate. We prove that our method can (i) evaluate the robust average-reward for a given policy and; (ii) find the optimal robust value function and, in turn, the optimal robust policy for general uncertainty sets. 
    
      \item \textbf{We design a robust relative value iteration method without using the discounted MDPs as an intermediate step.} 
      We further pursue a direct approach that solves the robust average-reward MDPs without using the limit method, i.e., without using discounted MDPs as an intermediate step. We derive a robust Bellman equation for robust average-reward MDPs, and show that the pair of robust relative value function and robust average-reward is a solution to the robust Bellman equation under the average-reward setting. We further prove that if we can find any solution to the robust Bellman equation, then the optimal policy can be derived by a greedy approach. The problem hence can be equivalently solved by solving the robust Bellman equation. We then design a robust value iteration method which provably converges to the solution of the robust Bellman equation, i.e., solve the optimal policy for the robust average-reward MDP problem.
\end{itemize}

\subsection{Related Work}

\noindent
\textbf{Robust discounted MDPs.}
 Model-based methods for robust discounted MDPs were studied in \cite{iyengar2005robust,nilim2004robustness,bagnell2001solving,satia1973markovian,wiesemann2013robust,lim2019kernel,xu2010distributionally,yu2015distributionally,lim2013reinforcement,tamar2014scaling}, where the uncertainty set is assumed to be known, and the problem can be solved using robust dynamic programming. Later, the studies were generalized to the model-free setting where stochastic samples from the centroid MDP of the uncertainty set are available in an online fashion \cite{roy2017reinforcement,badrinath2021robust,wang2021online,wang2022policy,tessler2019action} and an offline fashion \cite{zhou2021finite,yang2021towards,panaganti2021sample,goyal2018robust,kaufman2013robust,ho2018fast,ho2021partial,si2020distributionally}.  There are also empirical studies on robust RL, e.g., \cite{vinitsky2020robust,pinto2017robust,abdullah2019wasserstein,hou2020robust,rajeswaran2017epopt,huang2017adversarial,kos2017delving,lin2017tactics,pattanaik2018robust,mandlekar2017adversarially}.  For discounted MDPs, the robust Bellman operator is a contraction, based on which robust dynamic programming and value-based methods can be designed. In this paper, we focus on robust average-reward MDPs. However, the robust Bellman operator for average-reward MDPs is not a contraction, and its fixed point may not be unique. Moreover, the average-reward setting depends on the limiting
behavior of the underlying stochastic process, which is thus more intricate.


\noindent
\textbf{Robust average-reward MDPs.} Studies on robust average-reward MDPs are quite limited in the literature.
Robust average-reward MDPs under a specific finite interval uncertainty set was studied in \cite{tewari2007bounded}, where the authors showed the existence of a Blackwell optimal policy, i.e., there exists some $\delta\in[0,1)$, such that the optimal robust policy exists and remains unchanged for any discount factor $\gamma\in[\delta,1)$. However, this result depends on the structure of the uncertainty set. For general uncertainty sets, the existence of a Blackwell optimal policy may not be guaranteed. More recently, \cite{lim2013reinforcement} designed a model-free algorithm for a specific $\ell_1$-norm uncertainty set and characterized its regret bound. However, their method also relies on the structure of the $\ell_1$-norm uncertainty set, and may not be generalizable to other types of uncertainty sets. In this paper, our results can be applied to various types of uncertainty sets, and thus is more general.

\section{Preliminaries and Problem Model}
In this section, we introduce some preliminaries on discounted MDPs, average-reward MDPs, and robust MDPs.

\vspace{0.2cm}
\noindent\textbf{Discounted MDPs.}
A discounted MDP  $(\mathcal{S},\mathcal{A}, \mathsf P, r, \gamma)$ is specified by: a state space $\mcs$, an action space $\mca$, a transition kernel $\mathsf P=\left\{p^a_s \in \Delta(\mcs), a\in\mca, s\in\mcs\right\}$\footnote{$\Delta(\mcs)$: the $(|\mcs|-1)$-dimensional probability simplex on $\mcs$. }, where $p^a_s$ is the distribution of the next state over $\mcs$ upon taking action $a$ in state $s$ (with $p^a_{s,s'}$ denoting the probability of transitioning to $s'$), a reward function $r: \mcs\times\mca \to [0,1]$, and a discount factor $\gamma\in[0,1)$. At each time step $t$, the agent at state $s_t$ takes an action $a_t$, the  environment then transitions to the next state $s_{t+1}$ according to $p^{a_t}_{s_t}$, and produces a reward signal $r(s_t,a_t)\in [0,1]$ to the agent. In this paper, we also write $r_t=r(s_t,a_t)$ for convenience.

A stationary policy $\pi: \mcs\to \Delta(\mca)$ is a distribution over $\mca$ for any given state $s$, and the agent takes action $a$ at state $s$ with probability $\pi(a|s)$. The discounted value function of a stationary policy $\pi$ starting from $s\in\mcs$ is defined as the expected discounted cumulative reward by following policy $\pi$: $V^\pi_{\kp,\gamma}(s)\triangleq\mathbb{E}_{\pi, \kp}\left[\sum_{t=0}^{\infty}\gamma^t   r_t|S_0=s\right]$. 

\vspace{0.2cm}
\noindent\textbf{Average-Reward MDPs.}
Different from discounted MDPs,  average-reward MDPs do not discount the reward over time, and consider the 
behavior of the underlying Markov process under the steady-state distribution. More specifically, under a specific transition kernel $\kp$, the average-reward  of a policy $\pi$ starting from $s\in\mcs$ is defined as
\begin{align}
    g_\kp^\pi(s)\triangleq \lim_{n\to\infty} \mE_{\pi,\kp}\bigg[\frac{1}{n}\sum^{n-1}_{t=0} r_t|S_0=s \bigg],
\end{align}
which we also refer to in this paper as the average-reward value function for convenience. 

The average-reward value function can also be equivalently written as follows:
$
    g^\pi_\kp=\lim_{n\to\infty} \frac{1}{n}\sum^{n-1}_{t=0} (\kp^\pi)^t r_\pi\triangleq \kp^\pi_* r_\pi,
$
where $(\kp^\pi)_{s,s'}\triangleq \sum_a \pi(a|s) p^a_{s,s'}$ and $r_\pi(s)\triangleq\sum_a \pi(a|s)r(s,a)$ are the transition matrix and reward function induced by $\pi$, and $ \kp^\pi_*\triangleq\lim_{n\to\infty} \frac{1}{n}\sum^{n-1}_{t=0} (\kp^\pi)^t$ is the limit matrix of $\kp^\pi$. 

In the average-reward setting, we also define the following relative value function 
\begin{align}\label{eq:relativevaluefunction}
    V^\pi_\kp(s)\triangleq \mE_{\pi,\kp}\bigg[\sum^\infty_{t=0} (r_t-g^\pi_\kp)|S_0=s \bigg],
\end{align}
which is the cumulative difference over time between the reward and the average value $g^\pi_\kp$.
It has been shown that \cite{puterman1994markov}:
$
    V^\pi_\kp=H^\pi_\kp r_\pi,
$
where $H^\pi_\kp\triangleq (I-\kp^\pi+\kp^\pi_*)^{-1}(I-\kp^\pi_*)$ is defined as the deviation matrix of $\kp^\pi$. 

The relationship between the average-reward and the relative value functions 
can be characterized by the following Bellman equation \cite{puterman1994markov}:
\begin{align}
    V^\pi_\kp(s)=\mE_{\pi}\bigg[r(s,A)-g^\pi_\kp(s)+\sum_{s'\in\mcs} p^A_{s,s'}V^\pi_\kp (s') \bigg]. 
\end{align}

\vspace{0.2cm}
\noindent\textbf{Robust discounted and average-reward MDPs.}
For robust MDPs, the transition kernel is not fixed but belongs to some uncertainty set $\mathcal{P}$. After the agent takes an action, the environment transits to the next state according to an arbitrary transition kernel $\kp\in\cp$. In this paper, we focus on the $(s,a)$-rectangular uncertainty set \cite{nilim2004robustness,iyengar2005robust}, i.e., $\mathcal{P}=\bigotimes_{s,a} \mathcal{P}^a_s$, where $\mathcal{P}^a_s \subseteq \Delta(\mcs)$. We note that there are also studies on relaxing the $(s,a)$-rectangular uncertainty set to $s$-rectangular  uncertainty set, which is not the focus of this paper.

Under the robust setting, we consider the worst-case performance over the uncertainty set of MDPs. More specifically, the  robust discounted value function of a policy $\pi$ for a discounted MDP is defined as 
\begin{align}\label{eq:Vdef}
    V^\pi_{\cp,\gamma}(s)\triangleq \min_{\kappa\in\bigotimes_{t\geq 0} \mathcal{P}} \mathbb{E}_{\pi,\kappa}\left[\sum_{t=0}^{\infty}\gamma^t   r_t|S_0=s\right],
\end{align}
where $\kappa=(\mathsf P_0,\mathsf P_1...)\in\bigotimes_{t\geq 0} \mathcal{P}$.

In this paper, we focus on the following worst-case average-reward for a policy $\pi$: \begin{align}\label{eq:gppi}
    g^\pi_\cp(s)\triangleq \min_{\kappa\in\bigotimes_{t\geq 0} \mathcal{P}} \lim_{n\to\infty}\mathbb{E}_{\pi,\kappa}\left[\frac{1}{n}\sum_{t=0}^{n-1}r_t|S_0=s\right],
\end{align}
to which, for convenience, we refer as the robust average-reward value function.

For robust discounted MDPs, it has been shown that the robust discounted  value function is the unique fixed-point of the robust discounted  Bellman operator \cite{nilim2004robustness,iyengar2005robust,puterman1994markov}:
\begin{align}\label{eq:robustbellmanoperator}
    \mathbf T_\pi V(s)\triangleq \sum_{a\in\mca} \pi(a|s) \left(r(s,a)+\gamma \sigma_{\mathcal{P}^a_s}(V) \right),
\end{align}
where $\sigma_{\mathcal{P}^a_s}(V)\triangleq \min_{p\in\mathcal{P}^a_s} p^\top V$ is the support function of $V$ on $\mathcal{P}^a_s$. Based on the contraction of $\mathbf T_\pi$, robust dynamic programming approaches, e.g., robust value iteration, can be designed \cite{nilim2004robustness,iyengar2005robust} (see Appendix \ref{app:robustmdp} for a review of these methods).
However, there is no such contraction result for robust average-reward MDPs. In this paper, our goal is to find a policy that optimizes the robust average-reward value function:
\begin{align}\label{eq:objective}
    \max_{\pi\in\Pi} g^\pi_\cp(s), \text{ for any }s\in\mcs,
\end{align}
where $\Pi$ is the set of all stationary policies, and we denote
by $g^*_{\cp}(s)\triangleq \max_{\pi} g^\pi_{\cp}(s)$ the optimal robust average-reward. 

\section{Limit Approach for Robust Average-Reward MDPs}\label{sec:limit}
We first take a limit approach to solve the problem of robust average-reward MDPs in \cref{eq:objective}. It is known that under the non-robust setting, for any fixed $\pi$ and $\kp$, the discounted value function converges to the average-reward value function as the discount factor $\gamma$ approaches $1$ \cite{puterman1994markov}, i.e.,  
\begin{align}\label{eq:converge_nonrobust}
    \lim_{\gamma\to 1}(1-\gamma)V^\pi_{\kp,\gamma} = g^\pi_\kp.
\end{align}
We take a similar idea, and show that the same result holds in the robust case: $\lim_{\gamma\to 1}(1-\gamma)V^\pi_{\mathcal{P},\gamma}=g^\pi_{\mathcal{P}}$ under a mild assumption. Based on this result, we further design algorithms (Algorithms \ref{alg:evaluatoin} and \ref{alg:valueiteration}) that apply a sequence of robust discounted  Bellman operators while increasing the discount factor at a certain rate. We then theoretically prove that our algorithms converge to the optimal solutions.

In the following, we first show that the convergence $\lim_{\gamma\to 1}(1-\gamma)V^\pi_{\kp,\gamma} = g^\pi_\kp$ is uniform on the set $\Pi\times\cp$. 
In studies of average-reward MDPs, it is usually the case that a certain class of MDPs are considered, e.g., unichain and communicating \cite{wei2020model,zhang2021policy,chen2022learning,wan2021learning}. In this paper, we focus on the unichain setting to highlight the major technical novelty to achieve robustness.
\begin{assumption}\label{ass:compact}
For any $s\in\mcs,a\in\mca$, the uncertainty set $\cp^a_s$ is a compact subset of $\Delta(\mathcal{S})$. And for any $\pi\in\Pi, \kp\in\cp$, the induced MDP is a unichain.  
\end{assumption}
The first part of Assumption \ref{ass:compact} amounts to assuming that the uncertainty set is closed. We remark that many standard uncertainty sets satisfy this assumption, e.g., those defined by $\epsilon$-contamination \cite{hub65}, finite interval \cite{tewari2007bounded}, total-variation \cite{rahimian2022effective} and KL-divergence \cite{hu2013kullback}. The unichain assumption is also widely used in studies of average-reward MDPs, e.g., \cite{puterman1994markov,wan2021learning,zhang2021policy,lan2020first,zhang2021finite}. Also it is worth noting that under the unichain assumption, the robust average-reward is identical for every starting state, i.e., $g^\pi_\kp(s_1)=g^\pi_\kp(s_2), \forall s_1, s_2\in\mcs$ \cite{bertsekas2011dynamic}. 
\begin{remark}
    The results in this section actually only require the uniform boundedness of $\|H^\pi_\kp\|, \forall \pi\in\Pi,\kp\in\cp$ (\Cref{lemma:hbounded} in Appendix). Assumption \ref{ass:compact} is one sufficient condition.
\end{remark}

In \cite{puterman1994markov}, the convergence $\lim_{\gamma\to 1}(1-\gamma)V^\pi_{\kp,\gamma} = g^\pi_\kp$ for a fixed policy $\pi$ and a fixed transition kernel $\kp$  (non-robust setting)  is point-wise. However, such point-wise convergence does not provide any convergence guarantee on the robust discounted value function, as the robust value function measures the worst-case performance over the uncertainty set and the order of $\lim$ and $\min$ may not be exchanged in general. 
In the following theorem,  we prove the uniform convergence of the discounted value function under the foregoing assumption. 
\begin{theorem}[Uniform convergence]\label{thm:uniform}
Under Assumption \ref{ass:compact}, the discounted value function converges uniformly to the average-reward value function  on $\Pi\times\mathcal{P}$ as $\gamma \to 1$, i.e.,  
\begin{align}
    \lim_{\gamma\to 1}(1-\gamma)V^\pi_{\kp,\gamma}=g^\pi_{\kp}, \text{ uniformly.}
\end{align}
 
\end{theorem}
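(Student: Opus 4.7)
The plan is to prove uniform convergence by combining a quantitative Laurent--series expansion of the resolvent $(I-\gamma \kp^\pi)^{-1}$ with a compactness argument on the parameter space $\Pi\times\cp$. For any fixed $(\pi,\kp)$ one has the classical identity (cf.\ Puterman, Thm.~8.2.3):
\begin{equation*}
(1-\gamma)(I-\gamma \kp^\pi)^{-1}=\kp^\pi_* +(1-\gamma)H^\pi_\kp+(1-\gamma)^2 R_\gamma(\pi,\kp),
\end{equation*}
where $H^\pi_\kp=(I-\kp^\pi+\kp^\pi_*)^{-1}(I-\kp^\pi_*)$ is the deviation matrix and $R_\gamma$ is the higher-order analytic remainder. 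Multiplying on the right by $r_\pi$ and using $g^\pi_\kp=\kp^\pi_* r_\pi$ gives
\begin{equation*}
(1-\gamma)V^\pi_{\kp,\gamma}-g^\pi_\kp=(1-\gamma)H^\pi_\kp r_\pi+(1-\gamma)^2 R_\gamma(\pi,\kp) r_\pi,
\end{equation*}
so the theorem will follow once the coefficients $\|H^\pi_\kp r_\pi\|_\infty$ and $\|R_\gamma(\pi,\kp)r_\pi\|_\infty$ are shown to be uniformly bounded on $\Pi\times\cp$ for $\gamma$ close to $1$.

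I would then handle the uniformity via compactness: $\Pi$ is a finite product of simplices, hence compact, and $\cp$ is compact by Assumption~\ref{ass:compact}, so $\Pi\times\cp$ is compact. If the map $(\pi,\kp)\mapsto H^\pi_\kp$ is continuous there, both coefficients attain finite suprema $C_1,C_2$, and
\begin{equation*}
\sup_{(\pi,\kp)\in\Pi\times\cp}\|(1-\gamma)V^\pi_{\kp,\gamma}-g^\pi_\kp\|_\infty\le (1-\gamma)C_1+(1-\gamma)^2 C_2,
\end{equation*}
which vanishes as $\gamma\to 1$, delivering the uniform limit.

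The principal obstacle is justifying continuity (or even uniform boundedness) of the deviation matrix $H^\pi_\kp$ on $\Pi\times\cp$: the Cesaro projection $\kp^\pi_*$ is continuous only on stochastic matrices of a fixed recurrence structure and can jump at kernels where the number of recurrent classes changes, so $H^\pi_\kp$ may blow up near multi-chain instances. I plan to address this in two stages. First, I would exploit the $(s,a)$-rectangular structure of $\cp$ together with whatever regularity is implicit in the model (for instance a uniform unichain / aperiodicity property on $\Pi\times\cp$) to secure continuity directly. If that structural shortcut is unavailable, I would fall back on an Abel--Tauberian transfer: first establish \emph{uniform} Cesaro convergence $\tfrac{1}{n}\sum_{t=0}^{n-1}(\kp^\pi)^t r_\pi\to g^\pi_\kp$ on $\Pi\times\cp$ from compactness, and then upgrade this to uniform Abel (discounted) convergence by a two-scale summation-by-parts in which the geometric weight is split at an $N=N(\gamma)$ dictated by the uniform Cesaro rate. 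This route sidesteps the need to bound $H^\pi_\kp$ pointwise and is the one I expect to actually execute.
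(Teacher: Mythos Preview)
Your primary route---the Laurent expansion $(1-\gamma)V^\pi_{\kp,\gamma}=g^\pi_\kp+(1-\gamma)H^\pi_\kp r_\pi+\text{(remainder)}$ followed by a compactness argument to bound $H^\pi_\kp$ uniformly---is exactly what the paper does. The paper quotes the same identity from Puterman (Thm.~8.2.3), writes the remainder explicitly as the series $f^\pi_\kp(\gamma)=\tfrac{1}{\gamma}\sum_{n\ge1}(-1)^n\big(\tfrac{1-\gamma}{\gamma}\big)^n(H^\pi_\kp)^{n+1}r_\pi$, and then shows $(1-\gamma)h^\pi_\kp\to0$ and $(1-\gamma)f^\pi_\kp(\gamma)\to0$ uniformly once $\|H^\pi_\kp\|\le h$ is known to hold uniformly on $\Pi\times\cp$. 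The uniform bound on the remainder is obtained by the Weierstrass $M$-test (pick $\delta$ with $\tfrac{1-\delta}{\delta}h<1$), which is a more explicit version of your ``$(1-\gamma)^2R_\gamma$ is uniformly bounded for $\gamma$ close to $1$.''

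On the obstacle you single out---continuity of $H^\pi_\kp$ on $\Pi\times\cp$---the paper does \emph{not} take your Abel--Tauberian fallback. It simply records, as a lemma ``following Proposition~8.4.6 in Puterman,'' that $H^\pi_\kp$ is continuous on $\Pi\times\cp$ and hence uniformly bounded on the compact set; no multichain pathologies are discussed. So your first option (assume enough regularity and cite Puterman) is precisely the paper's choice, and your second option (uniform Cesaro convergence plus an Abel transfer) is a genuinely different and more self-contained route. The latter would buy you robustness to changes in the recurrence structure that the paper's argument glosses over, at the price of a somewhat longer proof; the paper's route is shorter but leans entirely on the cited proposition for the delicate step.
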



With uniform convergence in Theorem \ref{thm:uniform}, the order of the limit $\gamma\to1$ and  $\min_{\mathsf P}$ can be interchanged, then the following convergence of the robust discounted value function can be established.
\begin{theorem}\label{thm:lim of robust}
The robust discounted  value function in \cref{eq:Vdef} converges to the robust average-reward uniformly on $\Pi$:
\begin{align}
    \lim_{\gamma\to 1}(1-\gamma)V^\pi_{\mathcal{P},\gamma}=g^\pi_{\mathcal{P}} \text{ uniformly.}
\end{align}

\end{theorem}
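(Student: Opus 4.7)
The plan is to deduce the uniform convergence of the robust discounted value function from the non-robust uniform convergence (Theorem 1), together with the fact that under $(s,a)$-rectangularity the worst-case transition kernel for the robust discounted problem is stationary. The key identity to exploit is the elementary inequality
\begin{equation*}
    \bigl|\inf_{x\in X} f(x) - \inf_{x\in X} h(x)\bigr| \le \sup_{x\in X} |f(x) - h(x)|,
\end{equation*}
applied with $X=\mathcal{P}$ and $f,h$ being $(1-\gamma)V^\pi_{\kp,\gamma}$ and $g^\pi_\kp$, respectively.

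First, I would recall the classical fact for robust discounted MDPs with an $(s,a)$-rectangular uncertainty set \cite{iyengar2005robust,nilim2004robustness}: the minimizer in \eqref{eq:Vdef} over all history-dependent kernels $\kappa \in \bigotimes_{t\ge 0}\mathcal{P}$ can be taken to be a single stationary kernel, so that
\begin{equation*}
    V^\pi_{\mathcal{P},\gamma}(s) \;=\; \min_{\kp \in \mathcal{P}} V^\pi_{\kp,\gamma}(s).
\end{equation*}
Combined with Theorem 1, which asserts that $(1-\gamma)V^\pi_{\kp,\gamma}(s) \to g^\pi_\kp(s)$ uniformly over $(\pi,\kp)\in\Pi\times\mathcal{P}$, the display inequality above gives, for each $s\in\mcs$,
\begin{equation*}
    \sup_{\pi\in\Pi}\bigl|(1-\gamma)V^\pi_{\mathcal{P},\gamma}(s) - \min_{\kp\in\mathcal{P}} g^\pi_\kp(s)\bigr|
    \;\le\; \sup_{\pi\in\Pi,\kp\in\mathcal{P}} \bigl|(1-\gamma)V^\pi_{\kp,\gamma}(s) - g^\pi_\kp(s)\bigr| \;\xrightarrow{\gamma\to 1}\; 0.
\end{equation*}
This already establishes uniform convergence of $(1-\gamma)V^\pi_{\mathcal{P},\gamma}$ to $\min_{\kp\in\mathcal{P}} g^\pi_\kp$ on $\Pi$. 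Note that Assumption~\ref{ass:compact} is what guarantees that the minima over $\mathcal{P}$ are attained, keeping the manipulations well-defined.

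The remaining step, which I expect to be the main obstacle, is to verify that
\begin{equation*}
    \min_{\kp\in\mathcal{P}} g^\pi_\kp(s) \;=\; g^\pi_{\mathcal{P}}(s),
\end{equation*}
i.e., that the worst-case average reward over non-stationary kernels $\kappa\in\bigotimes_{t\ge 0}\mathcal{P}$ coincides with the stationary worst case. The inequality ``$\ge$'' is immediate since stationary kernels embed into the non-stationary set. For the reverse ``$\le$'', the plan is to use the two convergences already in hand: for any non-stationary $\kappa$, one has $(1-\gamma)\mathbb{E}_{\pi,\kappa}[\sum_t \gamma^t r_t]\ge (1-\gamma)V^\pi_{\mathcal{P},\gamma}$, and an Abel/Tauberian-type argument (exploiting boundedness of $r_t\in[0,1]$) bounds the Cesàro average of $r_t$ below by the liminf of the Abel average. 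Letting $\gamma\to 1$ along a suitable sequence and invoking the uniform convergence just proved yields $\lim_n \mathbb{E}_{\pi,\kappa}[\frac{1}{n}\sum_t r_t] \ge \min_{\kp\in\mathcal{P}} g^\pi_\kp(s)$, and taking the infimum over $\kappa$ gives the desired inequality.

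Combining the two displays concludes $\lim_{\gamma\to 1}(1-\gamma)V^\pi_{\mathcal{P},\gamma}(s) = g^\pi_{\mathcal{P}}(s)$ uniformly in $\pi\in\Pi$. The delicate point throughout is that one cannot naively interchange $\lim_{\gamma\to 1}$ and $\min_{\kp}$; the proof buys this interchange precisely through the uniform convergence of Theorem 1 and the stationarity of the discounted worst case under $(s,a)$-rectangularity, with a Tauberian argument closing the gap between stationary and non-stationary minimizers in the average-reward definition.
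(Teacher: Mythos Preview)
Your core argument---using $|\inf_x f-\inf_x h|\le\sup_x|f-h|$ together with Theorem~\ref{thm:uniform} to push the limit through $\min_{\kp\in\mathcal P}$, and invoking the known stationarity of the discounted worst case---is exactly the paper's approach (the paper packages the same inequality as a separate interchange lemma). Where you diverge is in establishing $\min_{\kp\in\mathcal P}g^\pi_\kp=g^\pi_{\mathcal P}$. The paper obtains this from a separate structural result: a stationarity theorem for the average-reward worst case, proved in the appendix via the robust average-reward Bellman equation of Theorem~\ref{thm:bellman}, not from Abel--Tauber considerations.

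Your Tauberian route is a genuinely different idea, but the step ``bounds the Ces\`aro average below by the liminf of the Abel average'' does not hold in the generality you need. For a bounded sequence $a_t\in[0,1]$ with Ces\`aro means $C_n=\frac{1}{n}\sum_{t<n}a_t$ and Abel means $A(\gamma)=(1-\gamma)\sum_t\gamma^t a_t$, the Abelian inequalities read $\liminf_n C_n\le\liminf_\gamma A(\gamma)\le\limsup_\gamma A(\gamma)\le\limsup_n C_n$ (write $A(\gamma)$ as a convex combination of the $C_n$ with weights concentrating at large $n$). Hence from $A(\gamma)\ge (1-\gamma)V^\pi_{\mathcal P,\gamma}\to m:=\min_\kp g^\pi_\kp$ you can conclude only $\limsup_n C_n\ge m$, not $\liminf_n C_n\ge m$; boundedness of $r_t$ alone is not a Tauberian condition that reverses this. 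The gap closes \emph{if} the Ces\`aro limit exists for the given $\kappa$, but for an arbitrary time-varying $\kappa=(\kp_0,\kp_1,\dots)$ that existence is precisely what is at issue and is not automatic under Assumption~\ref{ass:compact} alone. So as written the Tauberian step is incomplete: you must either justify existence of the Ces\`aro limit for every admissible $\kappa$, or replace this step by a direct structural argument along the lines of the paper's average-reward Bellman equation.
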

We note that a similar convergence result is shown in \cite{tewari2007bounded}, but only for a special uncertainty set of finite interval. Our \Cref{thm:lim of robust} holds for general compact uncertainty sets. Moreover, it is worth highlighting that our proof technique is fundamentally different from the one in \cite{tewari2007bounded}. Specifically, under the finite interval uncertainty set, the worst-case transition kernels are from a finite set, i.e., $V^\pi_{\cp,\gamma}=\min_{\kp\in\mathcal{M}}V^\pi_{\kp,\gamma}$ for a finite set $\mathcal{M} \subseteq \cp$. This hence implies the interchangeability of $\lim$ and $\min$. However, for general uncertainty sets, the number of worst-case transition kernels may not be finite. We demonstrate the interchangeability via our uniform convergence result in \Cref{thm:uniform}.

The previous two convergence results play a fundamental role in limit method for robust average-reward MDPs, and are of key importance to motivate the design of the following two algorithms, the basic idea of which is to apply a sequence of robust discounted Bellman operators on an arbitrary initialization while increasing the discount factor at a certain rate.

We first consider the robust policy evaluation problem, which aims to estimate the robust average-reward $g^\pi_\cp$ for a fixed policy $\pi$. This problem for robust discounted MDPs is well studied in the literature, however, results for robust average-reward MDPs are quite limited except for the one in \cite{tewari2007bounded} for a specific finite interval uncertainty set. We present the a robust value iteration (robust VI) algorithm for evaluating the robust average-reward with general uncertainty sets in Algorithm \ref{alg:evaluatoin}.
%
%
\vskip -0.1in
\begin{algorithm}[htb]
\caption{Robust VI: Policy Evaluation}
\label{alg:evaluatoin}
\textbf{Input}: $\pi, V_0(s)=0,\forall s ,T$ 
\begin{algorithmic}[1] 
\FOR{$t=0,1,...,T-1$}
\STATE{$\gamma_t\leftarrow \frac{t+1}{t+2}$}
\FOR{all $s\in\mathcal{S}$}
\STATE {$V_{t+1}(s)\leftarrow \mathbb{E}_{\pi}[(1-\gamma_t)r(s,A)+\gamma_t \sigma_{\mathcal{P}^A_s}(V_t)]$}
\ENDFOR
\ENDFOR
\STATE \textbf{return} $V_{T}$
\end{algorithmic}
\end{algorithm}
\vskip -0.1in
At each time step $t$, the discount factor $\gamma_t$ is set to $\frac{t+1}{t+2}$, which converges to $1$ as $t\to\infty$. Subsequently, a robust Bellman operator w.r.t discount factor $\gamma_t$ is applied on the current estimate $V_t$ of the robust discounted  value function $(1-\gamma_t) V^\pi_{\cp,\gamma_t}$. As the discount factor approaches $1$, the estimated robust discounted  value function converges to the robust average-reward $g^\pi_\cp$ by Theorem \ref{thm:lim of robust}. The following result shows that the output of Algorithm \ref{alg:evaluatoin} converges to the robust average-reward.
\begin{theorem}\label{thm:evaluation}
Algorithm \ref{alg:evaluatoin} converges to robust average reward, i.e., $\lim_{T\to\infty}V_T=g^\pi_{\mathcal{P}}$.
\end{theorem}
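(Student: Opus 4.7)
\textbf{Proof plan for \Cref{thm:evaluation}.}
The plan is to recognize that the update in Algorithm \ref{alg:evaluatoin} is the robust discounted Bellman operator applied to the rescaled value function. Specifically, define
$\tilde{\mathbf{T}}_{\gamma}(V)(s) \triangleq \mathbb{E}_\pi[(1-\gamma)r(s,A)+\gamma\sigma_{\mathcal{P}^A_s}(V)]$;
this is exactly the operator used at step $t$ with $\gamma=\gamma_t$. Using the positive homogeneity $\sigma_{\mathcal{P}^a_s}(\alpha V)=\alpha\sigma_{\mathcal{P}^a_s}(V)$ for $\alpha\geq 0$, together with the fact that $V^\pi_{\mathcal{P},\gamma}$ is the unique fixed point of the ordinary robust Bellman operator (as reviewed around \cref{eq:robustbellmanoperator}), I would first verify that $\tilde{\mathbf{T}}_\gamma$ is a $\gamma$-contraction in $\|\cdot\|_\infty$ whose unique fixed point is $W_\gamma\triangleq (1-\gamma)V^\pi_{\mathcal{P},\gamma}$. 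Thus the ``moving target'' at step $t$ is $W_{\gamma_{t-1}}$, and by \Cref{thm:lim of robust} we have $W_{\gamma_{t-1}}\to g^\pi_\mathcal{P}$.

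By the triangle inequality
$\|V_T-g^\pi_\mathcal{P}\|_\infty\leq \|V_T-W_{\gamma_{T-1}}\|_\infty + \|W_{\gamma_{T-1}}-g^\pi_\mathcal{P}\|_\infty$,
so since the second term vanishes by \Cref{thm:lim of robust}, it suffices to show that $f_T\triangleq\|V_T-W_{\gamma_{T-1}}\|_\infty\to 0$. Using $V_t=\tilde{\mathbf{T}}_{\gamma_{t-1}}V_{t-1}$, the contraction property, and the triangle inequality, one obtains the recurrence
$f_t\leq \gamma_{t-1}(f_{t-1}+\Delta_t)$ with $\Delta_t\triangleq\|W_{\gamma_{t-1}}-W_{\gamma_{t-2}}\|_\infty$.
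With the choice $\gamma_t=(t+1)/(t+2)$, the telescoping product satisfies $\prod_{i=j-1}^{t-1}\gamma_i=j/(t+1)$, so unrolling yields
$f_t\leq \frac{2}{t+1}f_1+\frac{1}{t+1}\sum_{j=2}^{t} j\,\Delta_j$.
The first term is $O(1/t)$, so everything reduces to showing that the weighted sum of the $\Delta_j$'s vanishes.

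The hard part, in my view, is precisely the final step: since \Cref{thm:lim of robust} gives $\Delta_j\to 0$ without a quantitative rate, a direct Ces\`aro estimate fails because the weights $j/(t+1)$ sum to $\Theta(t)$. I see two ways to close the gap. The first is to exploit the telescoping at the vector level: Abel summation gives
\begin{align*}
\sum_{j=2}^t \tfrac{j}{t+1}\bigl(W_{\gamma_{j-1}}-W_{\gamma_{j-2}}\bigr) = \tfrac{t}{t+1}W_{\gamma_{t-1}}-\tfrac{2}{t+1}W_{\gamma_0}-\tfrac{1}{t+1}\sum_{i=1}^{t-2}W_{\gamma_i},
\end{align*}
whose $\|\cdot\|_\infty$ limit is $g^\pi_\mathcal{P}-0-g^\pi_\mathcal{P}=0$ by the standard Ces\`aro theorem applied to $W_{\gamma_i}\to g^\pi_\mathcal{P}$; one then lifts this vector cancellation to a bound on $f_t$ by tracking signed component-wise errors through the monotone operator $\tilde{\mathbf{T}}_{\gamma_t}$. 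The second, more quantitative route is to establish a uniform Lipschitz bound on $\gamma\mapsto W_\gamma$ near $\gamma=1$: under Assumption \ref{ass:compact} each $(1-\gamma)V^\pi_{\kappa,\gamma}$ admits a Laurent expansion about $\gamma=1$ with bounded leading coefficient, and since a pointwise minimum of uniformly Lipschitz functions is Lipschitz with the same constant, $W_\gamma$ is Lipschitz in $\gamma$ uniformly in $\pi$. This gives $\Delta_j\leq L\,|\gamma_{j-1}-\gamma_{j-2}|=O(1/j^2)$, so $\sum_{j\leq t}\tfrac{j}{t+1}\Delta_j=O(\log t/t)\to 0$, which combined with the decomposition above completes the proof.
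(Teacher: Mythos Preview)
Your route (b) is exactly the paper's argument: the paper derives the same recurrence $\Delta_{t+1}\le\|W_{\gamma_t}-W_{\gamma_{t+1}}\|_\infty+\gamma_t\Delta_t$ (with a slightly different indexing of the target) and then closes it by proving a uniform Lipschitz bound $\|W_{\gamma_1}-W_{\gamma_2}\|_\infty\le K|\gamma_1-\gamma_2|$ for $\gamma_1,\gamma_2\in[\delta,1]$, obtained precisely from the Laurent-type expansion $(1-\gamma)V^\pi_{\mathsf P,\gamma}=g^\pi_{\mathsf P}+(1-\gamma)h^\pi_{\mathsf P}+(1-\gamma)f^\pi_{\mathsf P}(\gamma)$ with uniformly bounded $h^\pi_{\mathsf P}$ and uniformly Lipschitz $f^\pi_{\mathsf P}$, and then passing to the pointwise minimum over $\mathsf P\in\mathcal P$ (your ``min of uniformly Lipschitz is Lipschitz'' observation). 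With $\gamma_t=(t+1)/(t+2)$ this gives $\Delta_j=O(1/j^2)$, and the paper invokes a standard sequence lemma to conclude $\Delta_t\to 0$.

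Route (a), however, has a genuine gap. Your Abel identity and Ces\`aro limit are correct at the \emph{vector} level, but the recurrence you unrolled lives at the \emph{norm} level: $f_t\le\tfrac{2}{t+1}f_1+\tfrac{1}{t+1}\sum_{j=2}^t j\,\|W_{\gamma_{j-1}}-W_{\gamma_{j-2}}\|_\infty$, where no telescoping survives. The proposed fix of ``tracking signed component-wise errors through the monotone operator'' does not obviously work: monotonicity plus the constant-shift property of $\tilde{\mathbf T}_\gamma$ only yield the two-sided bound $-\gamma\|V-W\|_\infty\mathbf 1\le \tilde{\mathbf T}_\gamma V-\tilde{\mathbf T}_\gamma W\le \gamma\|V-W\|_\infty\mathbf 1$, which is again a scalar norm bound. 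Because $\tilde{\mathbf T}_\gamma$ is nonlinear (the $\min$ inside $\sigma_{\mathcal P^a_s}$), you cannot write $V_t-W_{\gamma_{t-1}}$ as a linear combination of past target shifts that would let the Abel cancellation propagate. Without a quantitative rate on the target shifts---which is exactly what route (b) supplies---route (a) does not close as stated.
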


Besides the robust policy evaluation problem, it is also of great practical importance to find an optimal policy that maximizes the worst-case average-reward,  
i.e., to solve \cref{eq:objective}.
Based on a similar idea as the one of Algorithm \ref{alg:evaluatoin}, we extend our limit approach to solve the robust optimal control problem in Algorithm \ref{alg:valueiteration}.
\vskip -0.1in
\begin{algorithm}[H]
\caption{Robust VI: Optimal Control}
\label{alg:valueiteration}
\textbf{Input}: $V_0(s)=0,\forall s , T$ 
\begin{algorithmic}[1] 
\FOR{$t=0,1,...,T-1$}
\STATE{$\gamma_t\leftarrow \frac{t+1}{t+2}$}
\FOR{all $s\in\mathcal{S}$}
\STATE {$V_{t+1}(s)\leftarrow \underset{a\in\mca}{\max}\left\{(1-\gamma_t)r(s,a)+\gamma_t \sigma_{\mathcal{P}^a_s}(V_t)\right\}$}
\ENDFOR
\ENDFOR
\FOR{$s\in\mcs$}
\STATE $\pi_T(s)\leftarrow \arg\max_{a\in\mca}\left\{(1-\gamma_t)r(s,a)+\gamma_t \sigma_{\mathcal{P}^a_s}(V_T)\right\}$
\ENDFOR
\STATE \textbf{return} $V_{T},\pi_T$
\end{algorithmic}
\end{algorithm}
\vskip -0.1in

Similar to Algorithm \ref{alg:evaluatoin}, at each time step, the discount factor $\gamma_t$ is set to be closer to $1$, and a one-step robust discounted Bellman operator (for optimal control) w.r.t. $\gamma_t$ is applied to the current estimate $V_t$. The following theorem establishes that $V_T$ in Algorithm \ref{alg:valueiteration} converges to the optimal robust value function, hence can find the optimal robust policy.  
\begin{theorem}
\label{thm:val_converges}
The output $V_T$ in Algorithm \ref{alg:valueiteration} converges to the optimal robust average-reward $g^*_\cp$: $V_T \rightarrow g^*_\cp$ as $T\to\infty$.
\end{theorem}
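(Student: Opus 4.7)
The plan is to treat the update rule
\[
V_{t+1}(s)=\max_{a\in\mca}\bigl\{(1-\gamma_t)r(s,a)+\gamma_t\,\sigma_{\mathcal{P}^a_s}(V_t)\bigr\}=:\mathbf L_{\gamma_t}V_t(s)
\]
as the action of a family of contractive operators $\{\mathbf L_\gamma\}_{\gamma\in[0,1)}$ whose fixed points drift toward $g^*_\cp$. First I would verify that $\mathbf L_\gamma$ is a $\gamma$-contraction in the sup-norm and that its unique fixed point is $U_\gamma:=(1-\gamma)V^*_{\cp,\gamma}$. This rests on positive homogeneity of the support function, $\sigma_{\mathcal{P}^a_s}(cV)=c\,\sigma_{\mathcal{P}^a_s}(V)$ for $c\ge 0$, through the conjugation $\mathbf L_\gamma V=(1-\gamma)\,\mathbf T^*_\gamma\!\bigl(V/(1-\gamma)\bigr)$, where $\mathbf T^*_\gamma$ is the max analogue of the operator in \cref{eq:robustbellmanoperator} and is known to be a $\gamma$-contraction with unique fixed point $V^*_{\cp,\gamma}$.

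Second, I would lift Theorem \ref{thm:lim of robust} from fixed policies to optimal values. Using $V^*_{\cp,\gamma}=\max_\pi V^\pi_{\cp,\gamma}$ and $g^*_\cp=\max_\pi g^\pi_\cp$ together with $|\max_\pi f_\pi-\max_\pi h_\pi|\le\sup_\pi|f_\pi-h_\pi|$, the uniform convergence in Theorem \ref{thm:lim of robust} yields
\[
\|U_\gamma-g^*_\cp\|_\infty\le\sup_{\pi\in\Pi}\bigl\|(1-\gamma)V^\pi_{\cp,\gamma}-g^\pi_\cp\bigr\|_\infty\xrightarrow[\gamma\to 1]{}0.
\]

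Third, I would bound the tracking error $\tilde E_t:=\|V_t-U_{\gamma_t}\|_\infty$. Contraction plus the triangle inequality give
\[
\tilde E_{t+1}\le\|\mathbf L_{\gamma_t}V_t-\mathbf L_{\gamma_t}U_{\gamma_t}\|_\infty+\|U_{\gamma_t}-U_{\gamma_{t+1}}\|_\infty\le\gamma_t\tilde E_t+D_t,
\]
with $D_t:=\|U_{\gamma_t}-U_{\gamma_{t+1}}\|_\infty$. The choice $\gamma_t=(t+1)/(t+2)$ telescopes to $\prod_{j=k+1}^t\gamma_j=(k+2)/(t+2)$, so unrolling yields
\[
\tilde E_{t+1}\le\frac{\tilde E_0}{t+2}+\frac{1}{t+2}\sum_{k=0}^t(k+2)D_k.
\]
By Stolz--Ces\`aro the second term vanishes provided $tD_t\to 0$, after which $\|V_t-g^*_\cp\|_\infty\le\tilde E_t+\|U_{\gamma_t}-g^*_\cp\|_\infty\to 0$ by Step~2.

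The hard part will be establishing $tD_t\to 0$. A crude Lipschitz bound $\|V^*_{\cp,\gamma}-V^*_{\cp,\gamma'}\|_\infty\le|\gamma-\gamma'|/\bigl((1-\gamma)(1-\gamma')\bigr)$ read off from the robust discounted Bellman equation only yields $D_t=O(1/t)$, which is borderline insufficient. I would therefore establish a Laurent-type expansion $U_\gamma=g^*_\cp+(1-\gamma)h+o(1-\gamma)$ at $\gamma=1$, the robust analogue of the classical gain-plus-bias decomposition of discounted value functions; combined with $\gamma_{t+1}-\gamma_t=1/\bigl((t+2)(t+3)\bigr)$ this forces $D_t=o(1/t)$ and closes the argument. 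The expansion itself should be obtainable from the compactness Assumption~\ref{ass:compact} together with continuity in $\gamma$ near $1$ of the worst-case transition kernel and the greedy action associated with the robust discounted problem.
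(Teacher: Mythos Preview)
Your skeleton is exactly the one the paper uses: identify $U_\gamma=(1-\gamma)V^*_{\cp,\gamma}$ as the fixed point of the scaled optimal robust Bellman operator, set $\Delta_t=\|V_t-U_{\gamma_t}\|_\infty$, derive the recursion $\Delta_{t+1}\le\gamma_t\Delta_t+D_t$ with $D_t=\|U_{\gamma_t}-U_{\gamma_{t+1}}\|_\infty$, and finish via $\|V_t-g^*_\cp\|_\infty\le\Delta_t+\|U_{\gamma_t}-g^*_\cp\|_\infty$ together with the uniform convergence of Theorem~\ref{thm:lim of robust} pushed through the $\max_\pi$.

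Where you diverge is in the final step. You aim for a robust Laurent expansion $U_\gamma=g^*_\cp+(1-\gamma)h+o(1-\gamma)$ and propose to obtain it from continuity in $\gamma$ of the worst-case kernel and greedy action. That continuity is not available in general---argmin over a compact uncertainty set and argmax over a finite action set can jump infinitely often as $\gamma\uparrow 1$---so this route is not safe as stated. The paper avoids the issue entirely: it never seeks a robust expansion. Instead it uses the \emph{non-robust} Laurent decomposition $(1-\gamma)V^\pi_{\kp,\gamma}=g^\pi_\kp+(1-\gamma)h^\pi_\kp+(1-\gamma)f^\pi_\kp(\gamma)$ from \cite{puterman1994markov}, together with the uniform bounds $\|h^\pi_\kp\|\le h$, $\|f^\pi_\kp(\gamma)\|\le c_f$ and the uniform Lipschitz estimate for $f^\pi_\kp$ (Lemmas~\ref{lemma:hbounded}--\ref{lemma:lip_f}), to conclude that $(1-\gamma)V^\pi_{\kp,\gamma}$ is Lipschitz in $\gamma$ with a constant $K$ independent of $(\pi,\kp)$. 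Since $U_\gamma=\max_\pi\min_\kp(1-\gamma)V^\pi_{\kp,\gamma}$ is a max--min of uniformly $K$-Lipschitz functions, it is itself $K$-Lipschitz, giving $D_t\le K(\gamma_{t+1}-\gamma_t)=O(1/t^2)$ directly. This is strictly stronger than the $o(1/t)$ you were targeting, and requires no selection-continuity of optimizers. With $D_t=O(1/t^2)$ your unrolled recursion (or the lemma from \cite{tewari2007bounded} the paper invokes) immediately yields $\Delta_t\to 0$.
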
 


As discussed in \cite{blackwell1962discrete,hordijk2002blackwell}, the average-reward criterion is insensitive and under selective since it is only interested in the performance under the steady-state distribution. For example, two policies providing rewards: $100+0+0+\cdots$ and $0+0+0+\cdots$ are equally good/bad. Towards this issue, for the non-robust setting, a more sensitive term of optimality was introduced by Blackwell \cite{blackwell1962discrete}. More specifically, a policy is said to be Blackwell optimal if it optimizes the discounted value function for all discount factor $\gamma\in(\delta,1)$ for some $\delta\in(0,1)$. Together with \cref{eq:converge_nonrobust}, the optimal policy obtained by taking $\gamma\rightarrow 1$ is optimal not only for the average-reward criterion, but also for the discounted criterion with large $\gamma$. Intuitively, it is optimal under the average-reward setting, and is sensitive to early rewards. 

Following a similar idea, we justify that the obtained policy from \Cref{alg:valueiteration} is not only optimal in the robust average-reward setting, but also sensitive to early rewards.


Denote by $\Pi^*_D$ the set of all the deterministic optimal policies for robust average-reward (proved to exist in \Cref{lemma:deter opt policy}), i.e.
$
    \Pi_D^*=\left\{\pi\in\Pi_D: g^\pi_\cp=g^*_\cp \right\}. 
$
\begin{theorem}[Blackwell optimality]\label{thm:blackwell}
There exists $0<\delta<1$, such that for any $\gamma>\delta$, the deterministic optimal robust policy for robust discounted value function $V^*_{\cp,\gamma}$ belongs to $\Pi^*_D$. 
Moreover, when $\Pi^*_D$ is a singleton, there exists a unique Blackwell optimal policy. 
\end{theorem}
This result implies that using the limit method in this section to find the optimal robust policy for average-reward MDPs has an additional advantage that the policy it finds not only optimizes the average reward in steady state, but also is sensitive to early rewards. 


It is worth highlighting the distinction of our results from the technique used in the proof of Blackwell optimality 
\cite{blackwell1962discrete}.
%
%
In the non-robust setting, the existence of a stationary Blackwell optimal policy is proved via contradiction, where a difference function of two policies $\pi$ and $\nu$: $f_{\pi,\nu}(\gamma)\triangleq V^\pi_{\kp,\gamma}-V^\mu_{\kp,\gamma}$ is used in the proof. It was shown by contradiction that $f$ has infinitely many zeros, which however contradicts with the fact that $f$ is a rational function of $\gamma$ with a finite number of zeros. A similar technique was also used in \cite{tewari2007bounded} for the finite interval uncertainty set. Specifically, 
in \cite{tewari2007bounded}, it was shown that the worst-case transition kernels for any $\pi,\gamma$ are from a finite set $\mathcal{M}$, hence $f_{\pi,\nu}(\gamma)\triangleq \min_{\kp\in\mathcal{M} }V^\pi_{\kp,\gamma}-\min_{\kp\in\mathcal{M}}V^\mu_{\kp,\gamma}$ can also be shown to be a rational function with a finite number of zeroes. 
For a general uncertainty set $\mathcal{P}$, the difference function $f_{\pi,\nu}(\gamma)$, however, may not be rational. 
This makes the method in \cite{blackwell1962discrete,tewari2007bounded} inapplicable to our problem. 

\section{Direct Approach for Robust Average-Reward  MDPs}\label{sec:direct}
The limit approach in Section \ref{sec:limit} is based on the uniform convergence of the discounted value function, and uses discounted MDPs to approximate average-reward MDPs. 
In this section, we develop a direct approach to solving the robust average-reward MDPs that does not adopt discounted MDPs as intermediate steps.

For average-reward MDPs, the relative value iteration (RVI) approach \cite{puterman1994markov} is commonly used since it is numerically stable and has convergence guarantee.  
In the following, we generalize the RVI algorithm to the robust setting, and design the robust RVI algorithm in Algorithm \ref{alg:relative}.

We first generalize the relative value function in \cref{eq:relativevaluefunction} to the robust relative value function. The  robust relative value function measures the  difference between the worst-case cumulative reward and the worst-case average-reward for a policy $\pi$. 
\begin{definition}
The robust relative value function is defined as
\begin{align}
     V^\pi_\cp(s)\triangleq\min_{\kappa\in\bigotimes_{t\geq 0}\cp}\mathbb{E}_{\kappa,\pi}\bigg[\sum^\infty_{t=0}(r_t-g^\pi_\cp)|S_0=s\bigg],
\end{align}
where $g^\pi_\cp$ is the worst-case average-reward defined in \cref{eq:gppi}.
\end{definition}

The following theorem presents a robust Bellman equation for robust average-reward MDPs. 
%
%
%
\begin{theorem}\label{thm:bellman}
For any $s$ and $\pi$, $(V^\pi_\cp,g^\pi_\cp)$ is a solution to the following robust Bellman equation:
\begin{align}\label{eq:bellman}
    V(s)+g=\sum_a \pi(a|s)\left(r(s,a)+ \sigma_{\cp^a_s}(V)\right). 
\end{align}
\end{theorem}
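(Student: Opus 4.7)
The plan is to derive the equation by unrolling the definition of $V^\pi_\cp(s)$ by one time step and exploiting the $(s,a)$-rectangular product structure of the uncertainty set together with the temporal product $\bigotimes_{t\geq 0}\cp$.

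First I would separate the $t=0$ term from the infinite sum:
\begin{align*}
V^\pi_\cp(s) = \min_{\kappa=(\kp_0,\kp_1,\ldots)} \Big\{\sum_a \pi(a|s)(r(s,a)-g^\pi_\cp) + \mathbb{E}_{\kappa,\pi}\Big[\sum_{t=1}^\infty(r_t-g^\pi_\cp)\,\Big|\,S_0=s\Big]\Big\},
\end{align*}
noting that the first bracket is independent of $\kappa$. Conditioning on $(A_0,S_1)$ and re-indexing time rewrites the tail expectation as $\sum_a \pi(a|s)\sum_{s'} p^{a,0}_{s,s'} \mathbb{E}_{\kappa',\pi}[\sum_{t=0}^\infty (r_t-g^\pi_\cp)\mid S_0=s']$, where $\kappa'=(\kp_1,\kp_2,\ldots)\in\bigotimes_{t\geq 0}\cp$. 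Since the admissible kernel set factors as $\cp\times\bigotimes_{t\geq 0}\cp$, the minimum splits into an inner optimization over $\kappa'$ (which, for each $s'$, returns $V^\pi_\cp(s')$) and an outer optimization over $\kp_0\in\cp$. Applying $(s,a)$-rectangularity to the outer min decouples it across state-action pairs and yields
\begin{align*}
V^\pi_\cp(s) = \sum_a \pi(a|s)\Big(r(s,a)-g^\pi_\cp+\min_{p\in\cp^a_s}p^\top V^\pi_\cp\Big) = \sum_a \pi(a|s)\Big(r(s,a)-g^\pi_\cp+\sigma_{\cp^a_s}(V^\pi_\cp)\Big),
\end{align*}
and rearranging gives the desired identity.

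The main subtlety is justifying that $\min$ over $\kappa'$ can be pushed inside the sum over next states $s'$: the inequality $\min_{\kappa'}\sum_{s'} p_{s'} f(s',\kappa') \geq \sum_{s'} p_{s'} \min_{\kappa'} f(s',\kappa')$ is trivial, but the reverse requires that a single time-varying kernel sequence realize the worst case from every starting state simultaneously, which is the standard consequence of $(s,a)$-rectangularity combined with the temporal product structure (and is what makes robust dynamic programming work in the first place). If this combinatorial step feels slippery, a safer route is to pass through the discounted case: start from the robust discounted Bellman equation $V^\pi_{\cp,\gamma}(s)=\sum_a \pi(a|s)(r(s,a)+\gamma\sigma_{\cp^a_s}(V^\pi_{\cp,\gamma}))$, subtract the constant $V^\pi_{\cp,\gamma}(s_0)$ inside the support function using $\sigma_{\cp^a_s}(V+c\mathbf{1})=\sigma_{\cp^a_s}(V)+c$, and let $\gamma\to 1$ using Theorem \ref{thm:lim of robust} together with continuity of $\sigma_{\cp^a_s}(\cdot)$ under Assumption \ref{ass:compact}; because the Bellman equation is invariant under shifting $V$ by a constant, any limit produces a valid solution with $g=g^\pi_\cp$, and this solution can be identified with $(V^\pi_\cp,g^\pi_\cp)$ up to that additive-constant ambiguity. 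One must also either invoke a unichain-type condition or argue separately that $g^\pi_\cp$ is constant in $s$ so that the scalar $g$ in the equation is well defined.
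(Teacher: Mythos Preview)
Your primary approach---separating the $t=0$ term, splitting the minimization over $\kappa=(\kp_0,\kp_1,\ldots)$ into an outer $\min$ over $\kp_0\in\cp$ and an inner $\min$ over the tail $(\kp_1,\kp_2,\ldots)$, and then decoupling across $(s,a)$ via rectangularity to obtain $\sigma_{\cp^a_s}(V^\pi_\cp)$---is exactly the paper's proof, and your identification of the key subtlety (pushing $\min_{\kappa'}$ inside $\sum_{s'}$) is the same place the paper appeals to the product structure.

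One caution on your proposed ``safer route'' via the discounted Bellman equation and Theorem~\ref{thm:lim of robust}: in the paper's logical structure, Theorem~\ref{thm:lim of robust} relies on Theorem~\ref{thm:stationary} (to write $g^\pi_\cp=\min_{\kp\in\cp}g^\pi_\kp$), and Theorem~\ref{thm:stationary} in turn invokes Theorem~\ref{thm:bellman}. So that alternative would be circular---the paper explicitly footnotes that Theorem~\ref{thm:bellman} must be kept independent of Theorem~\ref{thm:stationary}. The direct one-step unrolling you give first is therefore not just equivalent to the paper's argument but the only non-circular option in this development.
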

It can be seen that the robust Bellman equation for average-reward MDPs has a similar structure to the one for discounted MDPs in \cref{eq:robustbellmanoperator} except for a discount factor. 
This actually reveals a fundamental difference between the robust Bellman operator of the  discounted  MDPs and the average-reward ones. For a discounted MDP, its robust Bellman operator is a contraction with constant $\gamma$ \cite{nilim2004robustness,iyengar2005robust}, and hence the fixed point is unique. Based on this, the robust value function can be found by recursively applying the robust Bellman operator (see Appendix \ref{app:robustmdp}). In sharp contrast, in the average-reward setting, the robust Bellman is not necessarily a contraction, and the fixed point may not be unique. Therefore, repeatedly applying the robust Bellman operator in the average-reward setting may not even converge,
which underscores that the two problem settings are fundamentally different.

We first derive the following equivalent optimality condition for robust average-reward MDPs. 
\begin{theorem}
\label{thm:opt_eq}
For any $(g,V)$ that is a solution to  
\begin{align}\label{eq:opt eq}
   \max_{a}\left\{r(s,a)-g+\sigma_{\cp^a_s}(V)-V(s) \right\}=0, \forall s,
\end{align}
$
    g=g^*_\cp.
$
If we further set
\begin{align}\label{eq:112}
\pi^*(s)=\arg\max_a \left\{ r(s,a)+\sigma_{\cp^a_s}(V)\right\}
\end{align}
for any $s\in\mcs$, 
then $\pi^*$ is an optimal robust policy. 
\end{theorem}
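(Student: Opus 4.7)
My plan is to split the proof into two inequalities that sandwich $g$ between $g^*_\cp$ and $g^{\pi^*}_\cp$. The starting observation is that \cref{eq:opt eq} is equivalent to the pointwise statement $V(s)+g \ge r(s,a)+\sigma_{\cp^a_s}(V)$ for every $(s,a)$, with equality attained at $a=\pi^*(s)$ by the definition in \cref{eq:112}. This single handle powers both directions and plays the role that the contraction of the robust Bellman operator plays in the discounted setting.

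For the lower bound $g \ge g^*_\cp$: given an arbitrary $\pi\in\Pi$, Assumption \ref{ass:compact} (compactness of $\cp^a_s$ and continuity of $p\mapsto p^\top V$) guarantees a minimizer $\hat p^a_s\in\arg\min_{p\in\cp^a_s}p^\top V$ for every $(s,a)$; assembling these yields a stationary kernel $\hat\kp\in\cp$. Averaging the pointwise inequality over $a\sim\pi(\cdot|s)$ gives $V+g\cdot\mathbf{1}\ge r_\pi+\hat\kp^\pi V$, i.e.\ $r_\pi-g\cdot\mathbf{1}\le V-\hat\kp^\pi V$. Iterating under $\hat\kp^\pi$ and telescoping yields $\sum_{t=0}^{n-1}(\hat\kp^\pi)^t r_\pi - n g\cdot\mathbf{1}\le V-(\hat\kp^\pi)^n V$. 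Dividing by $n$, letting $n\to\infty$, and using the Cesàro limit $\hat\kp^\pi_*$ (which exists for any stochastic matrix) produces $g^\pi_{\hat\kp}-g\cdot\mathbf{1}\le 0$; since $\hat\kp\in\cp$ we have $g\ge g^\pi_{\hat\kp}\ge g^\pi_\cp$. Supremizing over $\pi$ gives $g\ge g^*_\cp$.

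For the matching upper bound $g\le g^{\pi^*}_\cp$: at $a=\pi^*(s)$ the pointwise inequality is an equality, and since $\sigma_{\cp^a_s}(V)\le p^\top V$ for any $p\in\cp^a_s$, for an arbitrary non-stationary adversary $\kappa=(\kp_0,\kp_1,\ldots)\in\bigotimes_{t\ge 0}\cp$ we obtain $V(s_t)+g\le r(s_t,\pi^*(s_t))+\mE_{\kp_t}[V(S_{t+1})\mid S_t=s_t]$ pointwise at each $t$. Taking expectation under $\pi^*,\kappa$ with $S_0=s$ and telescoping over $t=0,\ldots,n-1$ yields $V(s)+n g\le\sum_{t=0}^{n-1}\mE_{\pi^*,\kappa}[r_t\mid S_0=s]+\mE_{\pi^*,\kappa}[V(S_n)\mid S_0=s]$. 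Dividing by $n$, sending $n\to\infty$ (boundedness of $V$ kills the boundary term), and then taking the infimum over $\kappa$ produces $g\le g^{\pi^*}_\cp$. Combining with the previous step, $g^*_\cp\le g\le g^{\pi^*}_\cp\le g^*_\cp$, so all three coincide, proving $g=g^*_\cp$ and simultaneously that $\pi^*$ is an optimal robust policy.

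The main obstacle I anticipate lies in the rigor of the upper-bound step for non-stationary adversaries, since the Cesàro average $\frac{1}{n}\sum_t\mE[r_t]$ need not converge for every $\kappa$. The cleanest way around it is to carry the intermediate bound $g\le\liminf_n\frac{1}{n}\sum_{t=0}^{n-1}\mE_{\pi^*,\kappa}[r_t\mid S_0=s]$ before taking the infimum, invoking either a $\liminf$-version of the robust average-reward or the known fact that under $(s,a)$-rectangularity the worst-case is attained by a stationary Markov kernel, which reduces the outer infimum to stationary kernels where the limit exists via the deviation-matrix identity $g^\pi_\kp=\kp^\pi_* r_\pi$.
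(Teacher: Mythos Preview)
Your proposal is correct and follows essentially the same route as the paper: the lower bound $g\ge g^*_\cp$ is obtained exactly as the paper does (average the pointwise inequality over $\pi$, iterate under the minimizing kernel $\hat{\mathsf P}$, telescope, and pass to the Cesàro limit), and the upper bound proceeds from the equality at the greedy action combined with $\sigma_{\cp^a_s}(V)\le p^\top V$. The only cosmetic difference is in how the upper bound handles the adversary: you telescope against an arbitrary (non-stationary) $\kappa$ and then face the $\liminf$ issue you flag, whereas the paper invokes \Cref{thm:stationary} up front to pick the single stationary worst-case kernel for $g^{\pi^*}_\cp$ and telescopes under that fixed matrix, so the Cesàro limit exists automatically---precisely the ``stationary reduction'' you name as the cleanest fix.
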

\Cref{thm:opt_eq} suggests that as long as we find a solution $(g,V)$ to \cref{eq:opt eq}, which though may not be unique, then $g$ is the optimal robust average-reward $g^*_\cp$, and the greedy policy $\pi^*$ is the optimal policy to our robust average-reward MDP problem in \cref{eq:objective}. 



In the following, we generalize the RVI approach to the robust setting, and design a robust RVI algorithm in \Cref{alg:relative}. We will further show that the output of this algorithm converges to a solution to \cref{eq:opt eq}, and further the optimal policy could be obtained by \cref{eq:112}.
\begin{algorithm}[htb]
\caption{Robust RVI}
\label{alg:relative}
\textbf{Input}: $V_0$, $\epsilon$ and arbitrary $s^*\in \mcs$
\begin{algorithmic}[1] 
\STATE {$w_0\leftarrow V_0-V_0(s^*)\mathbbm 1$}
\WHILE{$sp(w_t-w_{t+1})\geq \epsilon$}
\FOR{all $s\in\mathcal{S}$}
\STATE {$V_{t+1}(s)\leftarrow \max_a (r(s,a)+\sigma_{\cp^a_s}(w_t))$}
\STATE {$w_{t+1}(s)\leftarrow V_{t+1}(s)-V_{t+1}(s^*)$}
\ENDFOR
\ENDWHILE
\STATE \textbf{return} $w_{t}, V_t$
\end{algorithmic}
\end{algorithm}
Here $\mathbbm 1$ denotes the all-ones vector, and $sp$ denotes the span semi-norm: $sp(w)=\max_s w(s) -\min_s w(s)$.
Different from Algorithm \ref{alg:valueiteration}, in Algorithm \ref{alg:relative}, we do not need to apply the robust discounted  Bellman operator. The method directly solves the robust optimal control problem for average-reward robust MDPs. 


To study the convergence of the robust RVI algorithm, we first make an additional assumption as follows. 
\begin{assumption}\label{ass:unichain}
There exists a positive integer $J$ such that for any $\kp=\left\{p^a_{s}\in\Delta(\mcs) \right\}\in\cp$ and any stationary deterministic policy $\pi$, there exists $\kappa>0$ and a state $s\in\mcs$, such that  $((\kp^\pi)^J)_{x,s}\geq \kappa, \forall x\in\mcs$.
\end{assumption}
This assumption is shown to be equivalent to assuming unichain and aperiodic \cite{bertsekas2011dynamic}. It can be also replaced using some weaker ones, e.g., Proposition 4.3.2 of \cite{bertsekas2011dynamic}, or be removed by designing a variant of RVI, e.g., Proposition 4.3.4 of \cite{bertsekas2011dynamic}. 
In the following theorem, we show that our \Cref{alg:relative} converges to a solution of \cref{eq:opt eq}, hence according to \Cref{thm:opt_eq} if we set $\pi$ according to \eqref{eq:112},
then $\pi$ is the optimal robust policy.
\begin{theorem}\label{thm:conv of vi}
$(w_t,V_t)$ converges to a solution $(w,V)$ to \cref{eq:opt eq} as $\epsilon\to 0$. 
\end{theorem}


\begin{remark}
    In this section, we mainly present the robust RVI algorithm for the robust optimal control problem, and its convergence and optimality guarantee. A robust RVI algorithm for robust policy evaluation can be similarly designed by replacing the $\max$ in line 4, \Cref{alg:relative} with an expectation w.r.t. $\pi$. The convergence results in \Cref{thm:conv of vi} can also be similarly derived. 
\end{remark}

\section{Examples and Numerical Results}
In this section, we study  several commonly used uncertainty set models, including contamination model, Kullback-Lerbler (KL) divergence and total-variation defined model.

As can be observed from \Cref{alg:evaluatoin,alg:valueiteration,alg:relative}, for different uncertainty sets, the only difference lies in how the support function $\sigma_{\cp^a_s}(V)$ is calculated. In the sequel, we discuss how to efficiently calculate the support function for various uncertainty sets. 

We numerically compare our robust (relative) value iteration methods v.s. non-robust (relative) value iteration method on different uncertainty sets. Our experiments are based on the Garnet problem $\mathcal{G}(20,40)$ \cite{archibald1995generation}. More specifically, there are $20$ states and $30$ actions; the nominal transition kernel $\kp=\left\{ p^a_s\in\Delta(\mcs)\right\}$ is randomly generated according to the uniform distribution, and the reward functions $r(s,a)\sim\mathcal{N}(0,\sigma_{s,a})$, where $\sigma_{s,a}\sim \text{Uniform}[0,1]$. In our experiments, the uncertainty sets are designed to be centered at the nominal transition kernel. We run different algorithms, i.e., (robust) value iteration and (robust) relative value iteration, and obtain the greedy policies at each time step. Then, we use robust average-reward policy evaluation (Algorithm \ref{alg:evaluatoin}) to evaluate the robust average-reward of these policies. We plot the robust average-reward against the number of iterations.

\noindent\textbf{Contamination model.} For any $(s,a)$ the uncertainty set $\cp_s^a$ is defined as $\cp_s^a=\left\{q: q=(1-R)p_s^a+Rp', p'\in\Delta(\mcs) \right\}$, where $p_s^a$ is the nominal transition kernel. It can be viewed as an adversarial model, where at each time-step, the environment  transits according to the nominal transition kernel $p$ with probability $1-R$, and according to an arbitrary kernel $p'$ with probability $R$. Note that
$
    \sigma_{\cp^a_s}(V)=(1-R)(p_s^a)^\top V+R\min_s V(s).
$
Our experimental results under the contamination model are shown in \Cref{fig:contamination}.
\vskip -0.24in
\begin{figure}[ht]
\begin{center}
\subfigure[Robust VI.]{
\includegraphics[width=0.47\linewidth]{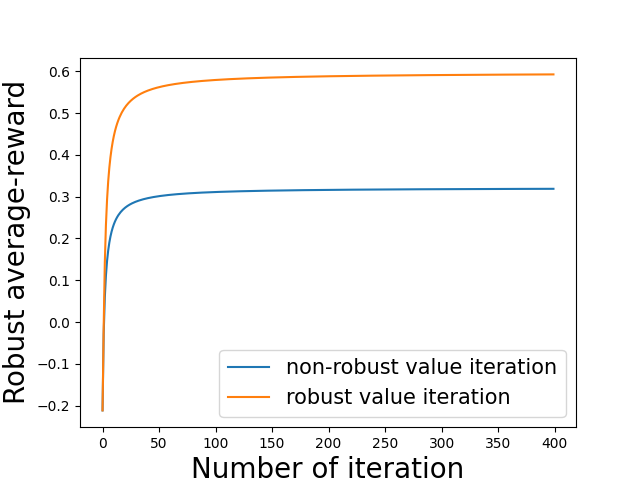}}
\subfigure[Robust RVI.]{
\includegraphics[width=0.47\linewidth]{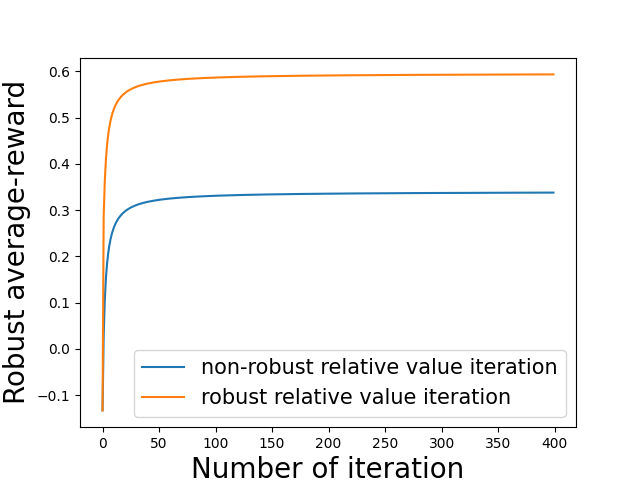}}
\vspace{-.25cm}
\caption{Comparison on  contamination model with $R=0.4$.}
\label{fig:contamination}
\end{center}
\vskip -0.2in
\end{figure}

\noindent\textbf{Total variation.} The total variation distance is another commonly used distance metric to measure the difference between two distributions. For two distributions $p$ and $q$, it is defined as
$
    D_{TV}(p,q)=\frac{1}{2} \|p-q\|_1. 
$
Consider an uncertainty set defined via total variation: $\cp_s^a=\left\{q: D_{TV}(q||p_s^a)\leq R \right\}$.
Then, its support function can be efficiently solved as follows \cite{iyengar2005robust}: 
$
    \sigma_{\cp^a_s}(V)=
    p^\top V  -R\min_{\mu\geq 0}\left\{\max_s (V(s)-\mu(s)) -\min_s(V(s)-\mu(s))\right\}.
$


Our experimental results under the total variation model are shown in \Cref{fig:tt}. 
\begin{figure}[ht]
\begin{center}
\subfigure[Robust VI.]{
\includegraphics[width=0.47\linewidth]{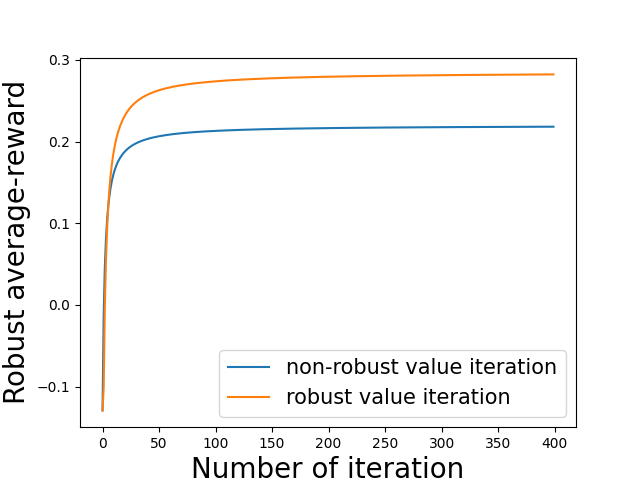}}
\subfigure[Robust RVI.]{
\includegraphics[width=0.47\linewidth]{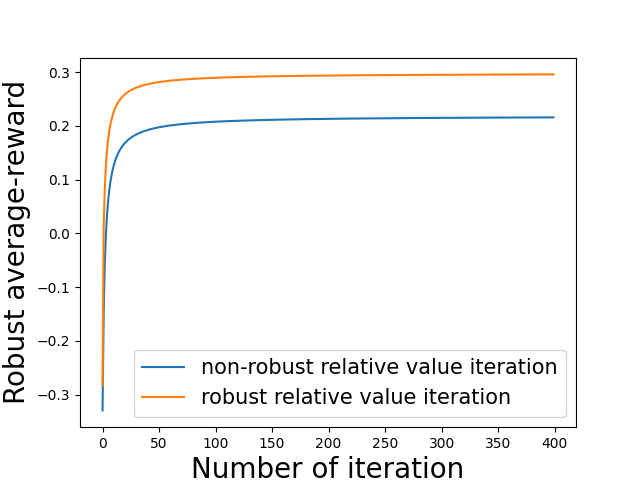}}
\caption{Comparison on total variation model with $R=0.6$.}
\label{fig:tt}
\end{center}
\vskip -0.2in
\end{figure}

\noindent\textbf{Kullback-Lerbler (KL) divergence.} 
The Kullback–Leibler divergence is widely used 
to measure the distance between two probability distributions. 
For distributions $p,q$, it is defined as 
$
    D_{KL}(q||p)=\sum_s q(s)\log\frac{q(s)}{p(s)}. 
$
Consider an uncertainty set defined via KL divergence: $\cp_s^a=\left\{q: D_{KL}(q||p_s^a)\leq R \right\}$. Then, its support function can be efficiently solved using the duality result in \cite{hu2013kullback}:
$\sigma_{\cp^a_s}(V)=
    -\min_{\alpha\geq 0} \left\{R\alpha+\alpha\log \left( p^\top e^{\frac{-V}{\alpha}}\right) \right\}. $
Our experimental results under the KL-divergence model are shown in \Cref{fig:kl}.
\vskip -0.24in
\begin{figure}[ht]
\begin{center}
\subfigure[Robust VI.]{
\includegraphics[width=0.47\linewidth]{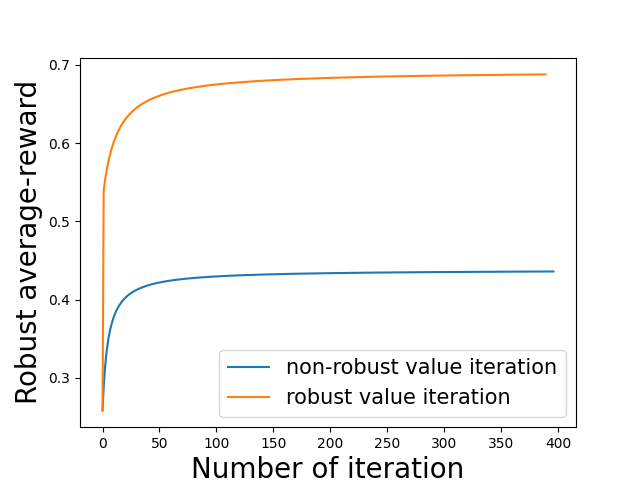}}
\subfigure[Robust RVI.]{
\includegraphics[width=0.47\linewidth]{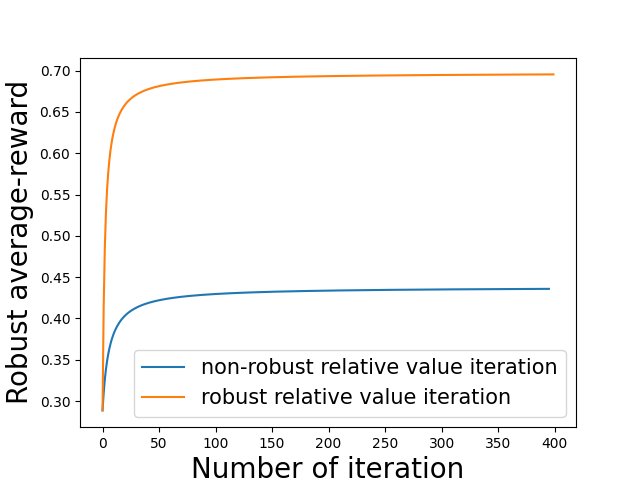}}
\caption{Comparison on KL-divergence model with $R=0.8$.}
\label{fig:kl}
\end{center}
\vskip -0.2in
\end{figure}



It can be seen that our robust methods can obtain policies that achieve higher worst-case reward. Also, both our limit-based robust value iteration and our direct method of robust relative value iteration converge to the optimal robust policies, which validates our theoretical results.

\section{Conclusion}
In this paper, we investigated the problem of robust MDPs under the average-reward setting. We established \textit{uniform} convergence of the discounted value function to average-reward, which further implies the uniform convergence of the \textit{robust} discounted value function to \textit{robust} average-reward. Based on this insight, we designed a robust dynamic programming approach using the robust discounted MDPs as an approximation (the limit method). We theoretically proved their convergence and optimality and proved a robust version of the Blackwell optimality \cite{blackwell1962discrete}. 
We then designed a direct approach for robust average-reward MDPs, where we derived the robust Bellman equation for robust average-reward MDPs.
We further designed a robust RVI method, which was proven to converge to the optimal robust solution. Technically, our proof techniques are fundamentally different from existing studies on average-reward robust MDPs, e.g., those in \cite{blackwell1962discrete,tewari2007bounded}.


\section{Acknowledgment}
This work was supported by the National Science Foundation under Grants CCF-2106560, CCF-2007783, CCF-2106339 and CCF-1552497. 

\newpage
\bibliography{aaai23}

\onecolumn
\appendix
\section{Review of Robust Discounted MDPs}\label{app:robustmdp}
In this section, we provide a brief review on the existing methods and results for robust discounted MDPs.

\subsection{Robust Policy Evaluation}
We first consider the robust policy evaluation problem, where we aim to estimate the robust value function $V^\pi_{\cp,\gamma}$ for any policy $\pi$. 
It has been shown that the robust Bellman operator $\mathbf T_\pi$ is a $\gamma$-contraction, and the robust value function $V^\pi_{\cp,\gamma}$ is its unique fixed-point. Hence by recursively applying the robust Bellman operator, we can find the robust discounted value function \cite{nilim2004robustness,iyengar2005robust}. 
\begin{algorithm}[H]
\caption{Policy evaluation for robust discounted MDPs}
\label{alg:discount evaluatoin}
\textbf{Input}: $\pi, V_0,T$ 
\begin{algorithmic}[1] 
\FOR{$t=0,1,...,T-1$}
\FOR{all $s\in\mathcal{S}$}
\STATE {$V_{t+1}(s)\leftarrow \mathbb{E}_{\pi}[r(s,A)+\gamma \sigma_{\mathcal{P}^A_s}(V_t)]$}
\ENDFOR
\ENDFOR
\STATE \textbf{return} $V_{T}$
\end{algorithmic}
\end{algorithm}

\subsection{Robust Optimal Control}
Another important problem in robust MDP is to find the optimal policy which maximizes the robust discounted value function:
\begin{align}
    \pi^*=\arg\max_{\pi} V^\pi_{\cp,\gamma}. 
\end{align}
A robust value iteration approach is developed in \cite{nilim2004robustness,iyengar2005robust} as follows. 
\begin{algorithm}[H]
\caption{Optimal Control for robust discounted MDPs}
\label{alg:discount value}
\textbf{Input}: $ V_0,T$ 
\begin{algorithmic}[1] 
\FOR{$t=0,1,...,T-1$}
\FOR{all $s\in\mathcal{S}$}
\STATE {$V_{t+1}(s)\leftarrow \max_a\left\{ r(s,a)+\gamma \sigma_{\mathcal{P}^a_s}(V_t)\right\}$}
\ENDFOR
\ENDFOR
\STATE {$\pi^*(s)\leftarrow \arg\max_a \left\{r(s,a)+\gamma\sigma_{\mathcal{P}^a_s}(V_T) \right\}, \forall s$}
\STATE \textbf{return} $\pi^*$
\end{algorithmic}
\end{algorithm}
\section{Equivalence between Time-Varying and Stationary Models}
We first provide an equivalence result between time-varying and stationary transition kernel models under stationary policies, which is an analog result to the one for robust discounted MDPs \cite{iyengar2005robust,nilim2004robustness}. This result will be used in our following proofs.

Recall the definitions of robust discounted value function and worst-case average reward in \cref{eq:Vdef,eq:gppi}, the worst-case is taken w.r.t. $\kappa=(\mathsf P_0,\mathsf P_1...)\in\bigotimes_{t\geq 0} \mathcal{P}$, therefore, the transition kernel at each time step could be different. This model is referred to as time-varying transition kernel model (as in \cite{iyengar2005robust,nilim2004robustness}). Another commonly used setting is that the transition kernels at different time step are the same, which is referred to as the stationary model \cite{iyengar2005robust,nilim2004robustness}. In this paper, we use the following notations to distinguish the two models. By $\mE_{\mathsf P}[\cdot]$, we denote the expectation when the transition kernels at all time steps are the same, $\mathsf P$, i.e., the stationary model. We also denote by $g^\pi_\kp(s)\triangleq \lim_{n\to\infty}\mathbb{E}_{\mathsf P,\pi}\left[\frac{1}{n}\sum_{t=0}^{n-1}r_t\big|S_0=s\right]$ and $V^\pi_{\kp,\gamma}(s)\triangleq \mathbb{E}_{\mathsf P,\pi}\left[ \sum_{t=0}^{\infty}\gamma^t r_t\big|S_0=s\right]$ being the expected average-reward and expected discounted value function under the stationary model $\kp$. By $\mE_{\kappa}[\cdot]$, we denote the expectation when the transition kernel at time $t$ is $\mathsf P_t$,  i.e., the time-varying model. 

For the discounted setting, it has been shown in \cite{nilim2004robustness} that for a stationary policy $\pi$,  any $\gamma\in[0,1)$, and any $s\in\mcs$,
\begin{align}\label{eq:equivalencediscounted}
        V^\pi_{\cp,\gamma}(s)&= \min_{\kappa\in\bigotimes_{t\geq 0} \mathcal{P}} \mathbb{E}_{\pi,\kappa}\left[\sum_{t=0}^{\infty}\gamma^t   r_t|S_0=s\right]\nn\\
        &=\min_{\mathsf P\in\mathcal{P}} \mathbb{E}_{\pi,\mathsf P}\left[\sum_{t=0}^{\infty}\gamma^t   r_t|S_0=s\right].
\end{align}
In the following theorem, we prove an analog of \cref{eq:equivalencediscounted} for robust-average reward MDPs that if we consider stationary policies, then the robust average-reward problem with the time-varying model can be equivalently solved by a stationary model.

Specifically, we define the worst-case average reward for the stationary transition kernel model as follows:
\begin{align}\label{eq:gppi1}
   \min_{\mathsf P\in \mathcal{P}} \lim_{n\to\infty}\mathbb{E}_{\pi,\mathsf P}\left[\frac{1}{n}\sum_{t=0}^{n-1}r_t\big|S_0=s\right].
\end{align}
Recall the worst-case average reward for the time-varying model in \cref{eq:gppi}. We will show that for any stationary policy, \cref{eq:gppi} can be equivalently solved by solving \cref{eq:gppi1}.
\begin{theorem}\label{thm:stationary}
Consider an arbitrary stationary policy $\pi$. Then, the worst-case average-reward under the time-varying model is the same as the one under the stationary model:
\begin{align}\label{eq:stationary g}
    g^\pi_\cp(s)&\triangleq \min_{\kappa\in\bigotimes_{t\geq 0} \mathcal{P}} \lim_{n\to\infty}\mathbb{E}_{\kappa,\pi}\left[\frac{1}{n}\sum_{t=0}^{n-1}r_t|S_0=s\right]\nn\\
    &=\min_{\mathsf P\in \mathcal{P}} \lim_{n\to\infty}\mathbb{E}_{\mathsf P,\pi}\left[\frac{1}{n}\sum_{t=0}^{n-1}r_t\big|S_0=s\right].
\end{align}
Similar result also holds for the robust relative value function:
\begin{align}\label{eq:stationary V}
        V^\pi_\cp(s)&\triangleq\min_{\kappa\in\bigotimes_{t\geq 0} \mathcal{P}}\mathbb{E}_{\kappa,\pi}\bigg[\sum^\infty_{t=0}(r_t-g^\pi_\cp)|S_0=s\bigg]\nn\\
        &=\min_{\kp\in\cp}\mathbb{E}_{\kp,\pi}\bigg[\sum^\infty_{t=0}(r_t-g^\pi_\cp)|S_0=s\bigg] . 
\end{align}
\end{theorem}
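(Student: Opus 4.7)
My plan has two parts: for \cref{eq:stationary g} I will combine the discounted-case equivalence \cref{eq:equivalencediscounted} with the uniform convergence of Theorem \ref{thm:uniform} and the robust-average convergence of Theorem \ref{thm:lim of robust}; for \cref{eq:stationary V} I will construct an explicit stationary minimizer directly from the robust Bellman equation of Theorem \ref{thm:bellman}.

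For the average-reward identity, I would begin from \cref{eq:equivalencediscounted}, which gives $V^\pi_{\cp,\gamma}(s)=\min_{\kp\in\cp} V^\pi_{\kp,\gamma}(s)$ for every $\gamma\in[0,1)$. Multiplying both sides by $(1-\gamma)$, the left-hand side tends to $g^\pi_\cp(s)$ as $\gamma\to 1$ by Theorem \ref{thm:lim of robust}, while on the right-hand side Theorem \ref{thm:uniform} guarantees that $(1-\gamma)V^\pi_{\kp,\gamma}(s)\to g^\pi_\kp(s)$ uniformly in $\kp\in\cp$. Combined with compactness of $\cp$ (Assumption \ref{ass:compact}), uniform convergence lets me interchange $\lim_{\gamma\to 1}$ and $\min_{\kp\in\cp}$, so the right-hand side tends to $\min_{\kp\in\cp} g^\pi_\kp(s)$. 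Equating the two limits yields \cref{eq:stationary g}.

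For the relative value function, my strategy is to exhibit a stationary minimizer $\kp^*$. By compactness of each $\cp^a_s$ the support function $\sigma_{\cp^a_s}(V^\pi_\cp)$ is attained at some $p^{*,a}_s\in\cp^a_s$, and I assemble these into a stationary kernel $\kp^*=\{p^{*,a}_s\}\in\cp$. Substituting $\kp^*$ into Theorem \ref{thm:bellman} produces $V^\pi_\cp(s)+g^\pi_\cp=\sum_a\pi(a|s)\bigl(r(s,a)+(p^{*,a}_s)^\top V^\pi_\cp\bigr)$, which is precisely the non-robust average-reward Bellman equation for policy $\pi$ under kernel $\kp^*$ with constant gain $g^\pi_\cp$. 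Under the unichain Assumption \ref{ass:unichain}, this pins down both $g^\pi_{\kp^*}=g^\pi_\cp$ and, by standard iteration of the identity together with the normalization built into $V^\pi_\cp$, the relation $V^\pi_\cp(s)=\mE_{\pi,\kp^*}\bigl[\sum_{t\ge 0}(r_t-g^\pi_\cp)\mid S_0=s\bigr]$. Since stationary sequences form a subset of time-varying ones, the chain $V^\pi_\cp(s)\le \min_{\kp\in\cp}\mE_{\pi,\kp}[\cdots]\le \mE_{\pi,\kp^*}[\cdots]=V^\pi_\cp(s)$ sandwiches the two minima, proving \cref{eq:stationary V}.

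The main obstacle is the second part: unlike the average-reward equality, the time-varying infimum in $V^\pi_\cp$ is taken with $g^\pi_\cp$ (not $g^\pi_\kappa$) inside, so one must verify carefully that iterating the Bellman identity under the single stationary kernel $\kp^*$ produces an absolutely convergent series equal to the robust relative value. Here the unichain assumption is essential, both to guarantee uniqueness of the gain and to ensure the telescoping partial sums converge. The average-reward part, by contrast, follows cleanly from uniform convergence plus compactness of the uncertainty set.
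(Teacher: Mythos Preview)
Your Part~2 is essentially the paper's proof: construct $\kp^*$ from the minimizers of $\sigma_{\cp^a_s}(V^\pi_\cp)$, iterate the robust Bellman identity of Theorem~\ref{thm:bellman} under that fixed kernel, and sandwich. The paper does exactly this (it calls the kernel $\kp^\pi$), and, crucially, it obtains \cref{eq:stationary g} as a \emph{byproduct} of Part~2 rather than by a separate limit argument: once the iteration gives $V^\pi_\cp(s)=\mE_{\pi,\kp^\pi}\bigl[\sum_{t\ge0}(r_t-g^\pi_\cp)\mid s\bigr]$, comparing with the non-robust average-reward Bellman equation for $\kp^\pi$ forces $g^\pi_{\kp^\pi}=g^\pi_\cp$, and then $g^\pi_\cp\le\min_{\kp\in\cp}g^\pi_\kp\le g^\pi_{\kp^\pi}=g^\pi_\cp$ closes the loop. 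You already noticed this inside your Part~2 (``this pins down both $g^\pi_{\kp^*}=g^\pi_\cp$''), so your separate argument for \cref{eq:stationary g} is not needed.

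That is fortunate, because your stand-alone Part~1 argument is circular. You invoke Theorem~\ref{thm:lim of robust} to conclude $(1-\gamma)V^\pi_{\cp,\gamma}\to g^\pi_\cp$, but the paper's proof of Theorem~\ref{thm:lim of robust} opens with ``Due to Theorem~\ref{thm:stationary}, $g^\pi_\cp=\min_{\kp\in\cp}g^\pi_\kp$''---precisely the identity you are trying to establish. Theorem~\ref{thm:uniform} only treats stationary kernels $\kp\in\cp$; together with \cref{eq:equivalencediscounted} it gives $\lim_{\gamma\to1}(1-\gamma)V^\pi_{\cp,\gamma}=\min_{\kp\in\cp}g^\pi_\kp$, but it does not by itself identify that limit with $g^\pi_\cp$, which is defined as a minimum over time-varying sequences $\kappa$. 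There is no independent discounted-limit route to $g^\pi_\cp$ without \cref{eq:stationary g} already in hand. Drop the limit argument and extract \cref{eq:stationary g} from your Part~2, as the paper does.
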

\begin{proof}
From the robust Bellman equation in \Cref{thm:bellman} \footnote{The proof of \Cref{thm:bellman} is independent of \cref{thm:stationary} and does not relay on the results to be showed here.}, we have that 
\begin{align}
    V^\pi_\cp(s)+g^\pi_\cp=\sum_a \pi(a|s)\left(r(s,a)+ \sigma_{\cp^a_s}(V^\pi_\cp)\right). 
\end{align}
Denote by $\arg\min_{p\in\cp^a_s} (p)^\top V^\pi_\cp\triangleq p^a_s$\footnote{We pick one  arbitrarily, if there are multiple minimizers.}, and denote by $\kp^\pi\triangleq \{ p^a_s : s\in\mcs,a\in\mca \}$. 
It then follows that
\begin{align}
    V^\pi_\cp(s)&=\sum_a \pi(a|s)\left(r(s,a)-g^\pi_\cp+ \sigma_{\cp^a_s}(V^\pi_\cp)\right)\nn\\
    &=\sum_a \pi(a|s)(r(s,a)-g^\pi_\cp)+ \sum_a \pi(a|s) \mE_{\kp^\pi}[V^\pi_\cp(S_1)|S_0=s,A_0=a]\nn\\
    &=\sum_a \pi(a|s)(r(s,a)-g^\pi_\cp)+  \mE_{\kp^\pi,\pi}[V^\pi_\cp(S_1)|S_0=s]\nn\\
    &=\sum_a \pi(a|s)(r(s,a)-g^\pi_\cp)+  \mE_{\kp^\pi,\pi}\bigg[\sum_a \pi(a|S_1) (r(S_1,a)-g^\pi_\cp)|S_0=s\bigg]+\mE_{\kp^\pi,\pi}\bigg[\sum_a \pi(a|S_1)\sigma_{\cp^a_{S_1}}(V^\pi_\cp)|S_0=s\bigg] \nn\\
    &=\sum_a\pi(a|s) (r(s,a)-g^\pi_\cp)+  \mE_{\kp^\pi,\pi}\left[r_1-g^\pi_\cp|S_0=s\right]+\mE_{\kp^\pi,\pi}\bigg[\sigma_{\cp^{A_1}_{S_1}}(V^\pi_\cp)|S_0=s\bigg] \nn\\
    &=\sum_a \pi(a|s)(r(s,a)-g^\pi_\cp)+  \mE_{\kp^\pi,\pi}\bigg[r_1-g^\pi_\cp\big|S_0=s\bigg]+\mE_{\kp^\pi,\pi}\bigg[(p^{A_1}_{S_1})^\top V^\pi_\cp|S_0=s\bigg] \nn\\
    &=\mE_{\kp^\pi,\pi}\bigg[r_0-g^\pi_\cp+ r_1-g^\pi_\cp|S_0=s\bigg]+\mE_{\kp^\pi,\pi}[V^\pi_\cp(S_2)|S_0=s] \nn\\
    &\quad\quad......\nn\\
    &=\mE_{\kp^\pi,\pi} \bigg[\sum^\infty_{t=0} (r_t-g^\pi_\cp)|s \bigg].
\end{align}
By the definition, the following always hold:
\begin{align}
        \min_{\kappa\in\bigotimes_{t\geq 0} \mathcal{P}}\mathbb{E}_{\kappa,\pi}\bigg[\sum^\infty_{t=0}(r_t-g^\pi_\cp)|S_0=s\bigg]\leq\min_{\kp\in\cp}\mathbb{E}_{\kp,\pi}\bigg[\sum^\infty_{t=0}(r_t-g^\pi_\cp)|S_0=s\bigg] . 
\end{align}
This hence implies that a stationary transition kernel sequence $\kappa=(\kp^\pi,\kp^\pi,...)$ is one of the worst-case transition kernels for $V^\pi_\cp$. Therefore, \cref{eq:stationary V} can be proved.  

Consider the transition kernel $\kp^\pi$. We denote its non-robust average-reward and the non-robust relative value function by $g^\pi_{\kp^\pi}$ and $V^\pi_{\kp^\pi}$. 
By the non-robust Bellman equation \cite{sutton2018reinforcement}, we have that 
\begin{align}
    V^\pi_{\kp^\pi}(s)=\sum_a \pi(a|s) (r(s,a)-g^\pi_{\kp^\pi})+\mE_{\kp^\pi,\pi}[V^\pi_{\kp^\pi}(S_1)|s].
\end{align}
On the other hand, the robust Bellman equation shows that 
\begin{align}
   V^\pi_\cp(s)=  V^\pi_{\kp^\pi}(s)=\sum_a \pi(a|s)(r(s,a)-g^\pi_\cp)+  \mE_{\kp^\pi,\pi}[V^\pi_{\kp^\pi}(S_1)|s].
\end{align}
These two equations hence implies that 
$
    g^\pi_\cp=g^\pi_{\kp^\pi},
$
and hence the stationary kernel $(\kp^\pi,\kp^\pi,...)$ is also a worst-case kernel of robust average-reward in the time-varying setting. This proves \cref{eq:stationary g}.
\end{proof}

\section{Proof of Theorem \ref{thm:uniform}}
In the proof, unless otherwise specified, we denote by $\|v\|$ the $l_\infty$ norm of a vector $v$, and for a matrix $A$, we denote by $\|A\|$ its matrix norm induced by $l_\infty$ norm, i.e., $\|A\|=\sup_{x\in\mathbb{R}^d}\frac{\|Ax\|_\infty}{\|x\|_\infty}$. 

\begin{lemma}\label{lemma:dis-ave}[Theorem 8.2.3 in \cite{puterman1994markov}]
For any $\kp$, $\gamma$, $\pi$, 
\begin{align}
    V^\pi_{\kp,\gamma}=\frac{1}{1-\gamma}g^\pi_\kp+h^\pi_\kp+f^\pi_\kp(\gamma),
\end{align}
where 
$h^\pi_\kp=H^\pi_\kp r_\pi$, and $f^\pi_\kp(\gamma)=\frac{1}{\gamma}\sum^\infty_{n=1} (-1)^n\left(\frac{1-\gamma}{\gamma} \right)^n (H^\pi_\kp)^{n+1} r_\pi$.
\end{lemma}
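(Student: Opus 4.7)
The stated identity is a Laurent expansion of the resolvent $(I - \gamma \kp^\pi)^{-1}$ around $\gamma = 1$ applied to $r_\pi$, so the plan is to derive it by direct algebraic verification, using the standard identities linking the transition matrix $\kp^\pi$, its limit matrix $\kp^\pi_*$, and its deviation matrix $H^\pi_\kp$.

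First I will write $V^\pi_{\kp,\gamma} = (I - \gamma \kp^\pi)^{-1} r_\pi$ and abbreviate $P = \kp^\pi$, $P_* = \kp^\pi_*$, $H = H^\pi_\kp$. I will then assemble the core algebraic identities $P P_* = P_* P = P_*^2 = P_*$, $P_* H = H P_* = 0$, and $(I - P) H = H (I - P) = I - P_*$. The first set follows directly from the definition $P_* = \lim_{n\to\infty} n^{-1}\sum_{t=0}^{n-1} P^t$ given in the preliminaries; the identity $(I - P) H = I - P_*$ follows from multiplying the definition $H = (I - P + P_*)^{-1}(I - P_*)$ by $(I - P + P_*)$ on the left and using $P_* H = 0$; the commutation with $(I - P)$ on the right comes from the observation that $P$ and $P_*$ commute, hence so does $I - P + P_*$ with $P$, and hence $H$ commutes with $P$.

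Next, changing variables to $z = 1/\gamma$ so that $z - 1 = (1-\gamma)/\gamma$, I aim to establish the resolvent Laurent expansion
\begin{align}
(zI - P)^{-1} = \frac{P_*}{z-1} + \sum_{n=0}^{\infty} (-1)^n (z-1)^n H^{n+1},
\end{align}
valid for $|z - 1|\cdot\|H\| < 1$. I will verify this by multiplying both sides on the left by $(zI - P)$: the singular term yields $P_*$ using $P P_* = P_*$; the $n = 0$ term yields $(zI - P) H = (z - 1) H + I - P_*$; and for $n \geq 1$, the identity $(zI - P) H^{n+1} = (z - 1) H^{n+1} + H^n$, which uses $P_* H^n = 0$ for $n \geq 1$, produces a telescoping pattern whose consecutive contributions cancel pairwise, leaving only a residual $-(z-1)H$ that exactly offsets the corresponding piece from the $n = 0$ term. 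The full sum then collapses to $I$, confirming the expansion.

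Finally I will substitute $(I - \gamma P)^{-1} = \gamma^{-1}(zI - P)^{-1}$ with $z - 1 = (1-\gamma)/\gamma$ and apply both sides to $r_\pi$: the singular term yields $P_* r_\pi /(1-\gamma) = g^\pi_\kp/(1-\gamma)$; isolating the $n = 0$ term in the regular part and redistributing the overall factor $1/\gamma = 1 + (1-\gamma)/\gamma$ between the $n=0$ and $n\geq 1$ contributions produces the $h^\pi_\kp = H r_\pi$ summand plus exactly the tail $f^\pi_\kp(\gamma)$ as written. The main obstacle is not the algebra but justifying convergence: the Neumann-type series converges in operator norm only when $\bigl(\tfrac{1-\gamma}{\gamma}\bigr)\|H^\pi_\kp\| < 1$, so the identity is first established for $\gamma$ in a left-neighborhood of $1$; since both sides are rational functions of $\gamma$ for fixed $\pi$ and $\kp$ and agree on an open set, the identity then extends by analytic continuation to every $\gamma \in [0,1)$ at which the right-hand side makes sense.
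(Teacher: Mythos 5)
The paper gives no proof of this lemma at all---it is imported as Theorem 8.2.3 of \cite{puterman1994markov}---so your resolvent derivation is a reasonable thing to supply, and its core is sound: the identities $\kp^\pi\kp^\pi_*=\kp^\pi_*$, $\kp^\pi_*H^\pi_\kp=H^\pi_\kp\kp^\pi_*=0$, $(I-\kp^\pi)H^\pi_\kp=I-\kp^\pi_*$, the Laurent expansion of $(zI-\kp^\pi)^{-1}$ about $z=1$, and the telescoping verification are all correct, and this is essentially how the cited theorem is proved. The gap is in your final bookkeeping step. Carrying out the substitution $(I-\gamma\kp^\pi)^{-1}=\gamma^{-1}(zI-\kp^\pi)^{-1}$ with $z-1=\frac{1-\gamma}{\gamma}$ and applying it to $r_\pi$, the regular part is $\gamma^{-1}H^\pi_\kp r_\pi+f^\pi_\kp(\gamma)$, not $H^\pi_\kp r_\pi+f^\pi_\kp(\gamma)$: your proposed redistribution $\gamma^{-1}=1+\frac{1-\gamma}{\gamma}$ applied to the $n=0$ term leaves an extra $\frac{1-\gamma}{\gamma}H^\pi_\kp r_\pi$, which involves $H^\pi_\kp$ and therefore cannot be absorbed into the $n=1$ term of $f^\pi_\kp(\gamma)$, which involves $(H^\pi_\kp)^2$. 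What your argument actually proves is
\begin{align}
V^\pi_{\kp,\gamma}=\frac{1}{1-\gamma}g^\pi_\kp+\frac{1}{\gamma}H^\pi_\kp r_\pi+f^\pi_\kp(\gamma),
\end{align}
which differs from the displayed statement by $\frac{1-\gamma}{\gamma}H^\pi_\kp r_\pi$. A two-state sanity check (the period-two chain with reward $1$ in one state, for which $V^\pi_{\kp,\gamma}(1)=\frac{1}{1-\gamma^2}$) confirms the version with the $\gamma^{-1}$ factor and refutes the version without it, so the mismatch lies in the lemma as transcribed rather than in your expansion. The residual is uniformly bounded by $\frac{1-\gamma}{\gamma}h$ and vanishes as $\gamma\to1$, so every downstream use of the lemma (the uniform convergence in \Cref{thm:uniform} and the Lipschitz estimates in \Cref{lemma:Sn,lemma:Snlip,lemma:lip_f}) goes through unchanged; but a proof that claims to land ``exactly'' on the stated form must either correct the middle term or explicitly fold the residual into $f^\pi_\kp$.

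A secondary point: your observation that the series only converges when $\frac{1-\gamma}{\gamma}\|H^\pi_\kp\|<1$ is correct and is in fact more careful than the lemma's ``for any $\gamma$,'' but the appeal to analytic continuation buys nothing---outside that region the right-hand side is a divergent series, so there is no larger domain ``at which the right-hand side makes sense'' to continue to. The honest statement, and the only one the paper actually uses (its later arguments restrict to $\gamma\in[\delta,1]$ with $\frac{1-\delta}{\delta}h\leq k<1$), is that the expansion holds for $\gamma$ in a left neighborhood of $1$.
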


Following Proposition 8.4.6 in \cite{puterman1994markov}, we can show the following lemma.
\begin{lemma}\label{lemma:hbounded}
$H^\pi_\kp$ is continuous on $\Pi\times\mathcal{P}$. If $\Pi$ and $\mathcal{P}$ are compact, $\|H^\pi_\kp\|$ is uniformly bounded on $\Pi\times\mathcal{P}$, i.e., there exists a constant $h$, such that $\|H^\pi_\kp\|\leq h$ for any $\pi,\kp$. 
\end{lemma}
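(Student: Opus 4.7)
The plan is to decompose $H^\pi_\kp = (I - \kp^\pi + \kp^\pi_*)^{-1}(I - \kp^\pi_*)$ into continuous pieces and then appeal to compactness. First, the map $(\pi,\kp) \mapsto \kp^\pi$ is bilinear in $(\pi(a\mid s))_{s,a}$ and $(p^a_{s,s'})$, hence jointly continuous on $\Pi \times \mathcal{P}$. Second, I would invoke the continuity of the Cesaro limit $\kp^\pi_* = \lim_{n\to\infty} n^{-1}\sum_{t=0}^{n-1}(\kp^\pi)^t$ as a function of $\kp^\pi$, which is the content of Proposition 8.4.6 (and its surrounding discussion) in \cite{puterman1994markov}. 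If a self-contained argument is preferred, one can use the resolvent representation $\kp^\pi_* = \lim_{\lambda \downarrow 0} \lambda (\lambda I + I - \kp^\pi)^{-1}$ together with the algebraic characterization of $\kp^\pi_*$ as the spectral projector onto the eigenspace associated with eigenvalue $1$ of $\kp^\pi$; the relevant eigenvalue is isolated and the projector depends continuously on the matrix.

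Once continuity of $\kp^\pi_*$ is in hand, continuity of $I - \kp^\pi + \kp^\pi_*$ is automatic. By a standard result (e.g., Corollary A.8 in \cite{puterman1994markov}), this matrix is invertible for every finite Markov chain, and inversion is a continuous operation on the open set of invertible matrices. Composing continuous maps, $(\pi,\kp)\mapsto H^\pi_\kp$ is continuous on $\Pi \times \mathcal{P}$, and post-composing with the continuous matrix norm $\|\cdot\|$ yields the continuity of $(\pi,\kp) \mapsto \|H^\pi_\kp\|$.

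For the second assertion, $\Pi$ is a finite Cartesian product of probability simplices $\Delta(\mca)$, each of which is compact, so $\Pi$ is compact; $\mathcal{P} = \bigotimes_{s,a}\mathcal{P}^a_s$ is compact by Assumption \ref{ass:compact}. Hence $\Pi \times \mathcal{P}$ is compact, and the continuous real-valued function $(\pi,\kp)\mapsto \|H^\pi_\kp\|$ attains its maximum, which we denote by $h<\infty$. This yields the uniform bound $\|H^\pi_\kp\|\le h$ on $\Pi\times\mathcal{P}$.

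The main obstacle is the continuity of $\kp^\pi_*$: although $\kp^\pi$ varies smoothly, the chain's classification (transient vs.\ recurrent states, number of ergodic classes) can change discontinuously under small perturbations, which is precisely why a direct elementary argument from the series representation is delicate. The spectral-projector or resolvent viewpoint is what lets us bypass the combinatorial classification and obtain continuity of $\kp^\pi_*$ itself, which is exactly the ingredient guaranteed by Puterman's Proposition 8.4.6 and which I would quote rather than reprove from scratch.
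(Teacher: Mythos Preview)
Your approach is correct and essentially identical to the paper's: the paper offers no proof beyond the single sentence ``Following Proposition 8.4.6 in \cite{puterman1994markov}, we can show the following lemma,'' which is precisely the reference you invoke for the continuity of $\kp^\pi_*$ (and hence of $H^\pi_\kp$), after which the uniform bound is just the extreme value theorem on the compact product $\Pi\times\mathcal{P}$. Your decomposition into explicit steps---bilinearity of $(\pi,\kp)\mapsto \kp^\pi$, continuity of matrix inversion, Tychonoff for compactness of $\Pi\times\mathcal{P}$---is in fact more detailed than anything the paper provides.
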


For simplicity, denote by 
\begin{align}
    S^\pi_\infty(\kp,\gamma)\triangleq\frac{1}{\gamma}\sum^\infty_{n=1} (-1)^n\left(\frac{1-\gamma}{\gamma} \right)^n (H^\pi_\kp)^{n+1} r_\pi,\nn\\
    S^\pi_N(\kp,\gamma)\triangleq\frac{1}{\gamma}\sum^N_{n=1} (-1)^n\left(\frac{1-\gamma}{\gamma} \right)^n (H^\pi_\kp)^{n+1} r_\pi.
\end{align}
Clearly $S^\pi_\infty(\kp,\gamma)=f^\pi_\kp(\gamma)$ and $\lim_{N\to\infty}S^\pi_N(\kp,\gamma)=S^\pi_\infty(\kp,\gamma)$ for any specific $\pi,\kp$. 

\begin{lemma}\label{lemma:Sn}
There exists $\delta\in (0,1)$, such that
\begin{align}
    \lim_{N\to\infty}S^\pi_N(\kp,\gamma)=S^\pi_\infty(\kp,\gamma)
\end{align}
uniformly on $\Pi\times \mathcal{P}\times [\delta,1]$. 
\end{lemma}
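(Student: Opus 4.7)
The plan is to reduce the claim to a uniform geometric tail bound, using the uniform operator-norm control on the deviation matrix $H^\pi_\kp$ that the previous lemma provides. Concretely, the difference $S^\pi_\infty(\kp,\gamma)-S^\pi_N(\kp,\gamma)$ is just the tail of a Neumann-type series in the scalar parameter $(1-\gamma)/\gamma$, so if I can make that parameter times $\|H^\pi_\kp\|$ strictly less than one \emph{uniformly} in $(\pi,\kp,\gamma)$, the tail decays like a geometric series at a rate independent of $(\pi,\kp,\gamma)$.

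First I would invoke Lemma~\ref{lemma:hbounded} to fix a constant $h$ with $\|H^\pi_\kp\|\le h$ for all $(\pi,\kp)\in\Pi\times\mathcal P$, and note $\|r_\pi\|_\infty\le 1$ since $r(s,a)\in[0,1]$. Then I would choose
\begin{equation*}
\delta\in\bigl(h/(h+1),\,1\bigr),
\end{equation*}
which ensures $q_\star:=(1-\delta)h/\delta<1$. For $\gamma\in[\delta,1)$, the map $\gamma\mapsto (1-\gamma)/\gamma$ is decreasing, so $(1-\gamma)h/\gamma\le q_\star$. Termwise bounding gives
\begin{equation*}
\bigl\|S^\pi_\infty(\kp,\gamma)-S^\pi_N(\kp,\gamma)\bigr\|
\le \frac{1}{\gamma}\sum_{n=N+1}^{\infty}\Bigl(\tfrac{1-\gamma}{\gamma}\Bigr)^n\|H^\pi_\kp\|^{n+1}\|r_\pi\|_\infty
\le \frac{h}{\delta}\cdot\frac{q_\star^{\,N+1}}{1-q_\star},
\end{equation*}
which depends on neither $\pi$ nor $\kp$ nor $\gamma$, and tends to $0$ as $N\to\infty$. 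The endpoint $\gamma=1$ is a free case: every term in both $S^\pi_N(\kp,1)$ and $S^\pi_\infty(\kp,1)$ carries a factor $(1-\gamma)^n$ with $n\ge 1$, so both are zero and the difference vanishes trivially.

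The only real obstacle is making sure the tail bound holds uniformly in $(\pi,\kp)$, and this is precisely what Lemma~\ref{lemma:hbounded} delivers once one knows that $\Pi\times\mathcal P$ is compact (guaranteed by Assumption~\ref{ass:compact} for $\mathcal P$, and by standard compactness of the stationary policy simplex for $\Pi$); with the uniform bound $h$ in hand, the rest is just the geometric series argument above. I would close by noting that the choice of $\delta$ depends only on $h$, hence only on $\mathcal P$, which is exactly the form required for the downstream uniform convergence of $(1-\gamma)V^\pi_{\kp,\gamma}$ to $g^\pi_\kp$ via Lemma~\ref{lemma:dis-ave}.
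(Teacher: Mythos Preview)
Your proposal is correct and follows essentially the same approach as the paper: both use the uniform bound $\|H^\pi_\kp\|\le h$ from Lemma~\ref{lemma:hbounded} to pick $\delta$ with $(1-\delta)h/\delta<1$, then dominate the series termwise by a geometric series with ratio independent of $(\pi,\kp,\gamma)$. The only cosmetic difference is that the paper packages the final step as an application of the Weierstrass $M$-test, whereas you write out the tail bound directly; the content is the same.
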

\begin{proof}
Note that $\|H^\pi_\kp\|\leq h$, hence there exists $\delta$, s.t. 
\begin{align}
    \frac{1-\delta}{\delta}h\leq k <1
\end{align}
for some constant $k$. 
Then for any $\gamma\geq\delta$, 
\begin{align}
    \frac{1-\gamma}{\gamma}h\leq  \frac{1-\delta}{\delta}h\leq k. 
\end{align}
Moreover, note that 
\begin{align}
    \left\|\frac{1}{\gamma}(-1)^n\left(\frac{1-\gamma}{\gamma} \right)^n (H^\pi_\kp)^{n+1} r\right\|\leq \frac{1}{\gamma}\left(\frac{1-\gamma}{\gamma} \right)^n h^{n+1}\leq \frac{hk^n}{\delta}\triangleq M_n,
\end{align}
which is because $\|A+B\|\leq \|A\|+\|B\|$ for induced $l_\infty$ norm, $\|Ax\|\leq \|A\|\|x\|$   and $\|r_\pi\|_\infty\leq 1$. 

Note that 
\begin{align}
    \sum^\infty_{n=1} M_n =\frac{h}{\delta}\frac{k}{1-k},
\end{align}
hence by Weierstrass $M$-test \cite{rudin2022functional}, $S^\pi_N(\kp,\gamma)$ uniformly converges to $S^\pi_\infty(\kp,\gamma)$ on $\Pi\times\mathcal{P}\times [\delta,1]$.  
\end{proof}

\begin{lemma}\label{lemma:Snlip}
There exists a uniform constant $L$, such that 
\begin{align}
    \|S^\pi_N(\kp,\gamma_1)-S^\pi_N(\kp,\gamma_2)\|\leq L|\gamma_1-\gamma_2|,
\end{align}
for any $N$, $\pi$,  $\kp,\gamma_1,\gamma_2\in[\delta,1]$.
\end{lemma}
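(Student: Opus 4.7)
The plan is to apply the mean value inequality to $S^\pi_N(\mathsf{P},\cdot)$ as a vector-valued function of $\gamma$ on $[\delta,1]$, with a constant $L$ obtained by uniformly bounding the partial derivative in $\gamma$. Writing $c_n(\gamma)\triangleq (-1)^n (1-\gamma)^n/\gamma^{n+1}$, the partial sum is $S^\pi_N(\mathsf{P},\gamma)=\sum_{n=1}^{N} c_n(\gamma)(H^\pi_{\mathsf{P}})^{n+1} r_\pi$, so only the scalar coefficients depend on $\gamma$ and termwise differentiation is straightforward.

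First I would compute
\begin{equation*}
c_n'(\gamma)=-\frac{n(-1)^n(1-\gamma)^{n-1}}{\gamma^{n+1}}-\frac{(n+1)(-1)^n(1-\gamma)^n}{\gamma^{n+2}},
\end{equation*}
which gives
\begin{equation*}
\left\|\tfrac{\partial}{\partial\gamma} S^\pi_N(\mathsf{P},\gamma)\right\|\le \sum_{n=1}^{N}\!\left(\frac{n(1-\gamma)^{n-1}}{\gamma^{n+1}}+\frac{(n+1)(1-\gamma)^n}{\gamma^{n+2}}\right)\!\|(H^\pi_{\mathsf{P}})^{n+1} r_\pi\|.
\end{equation*}
Next I would invoke Lemma \ref{lemma:hbounded} to bound $\|(H^\pi_{\mathsf{P}})^{n+1} r_\pi\|\le h^{n+1}$ (using $\|r_\pi\|_\infty\le 1$), and then, for $\gamma\in[\delta,1]$, use the defining inequality of $\delta$, namely $(1-\delta)h/\delta\le k<1$, to rewrite each term as a multiple of $k^{n-1}$ or $k^n$. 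Concretely,
\begin{equation*}
\frac{n(1-\gamma)^{n-1}}{\gamma^{n+1}}h^{n+1}\le \frac{nh^2}{\delta^2}k^{n-1},\qquad \frac{(n+1)(1-\gamma)^n}{\gamma^{n+2}}h^{n+1}\le \frac{(n+1)h}{\delta^2}k^n.
\end{equation*}

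Summing over all $n\ge 1$ and using $\sum_{n\ge 1} n k^{n-1}=(1-k)^{-2}$ together with the analogous identity for $(n+1)k^n$, the bound is majorized by a finite constant
\begin{equation*}
L\triangleq \frac{h^2}{\delta^2(1-k)^2}+\frac{h}{\delta^2}\sum_{n=1}^{\infty}(n+1)k^n,
\end{equation*}
which is independent of $N$, $\pi$, $\mathsf{P}$, and $\gamma\in[\delta,1]$. Applying the mean value inequality coordinate-wise then yields $\|S^\pi_N(\mathsf{P},\gamma_1)-S^\pi_N(\mathsf{P},\gamma_2)\|\le L|\gamma_1-\gamma_2|$. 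The main obstacle is the uniformity in $N$: the derivative of $c_n$ grows linearly in $n$, so without the geometric factor $k^n$ supplied by the bound on $\|H^\pi_{\mathsf{P}}\|$ and the choice of $\delta$, the partial sums would blow up. This is exactly the role of $\delta$ from the setup of Lemma \ref{lemma:Sn} and why that lemma is invoked first.
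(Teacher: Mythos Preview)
Your proposal is correct and follows essentially the same approach as the paper: establish a uniform bound on $\partial_\gamma S^\pi_N$ over $[\delta,1]$ (using $\|H^\pi_{\mathsf P}\|\le h$ from Lemma~\ref{lemma:hbounded} together with $(1-\gamma)h/\gamma\le k<1$) and then invoke the mean value inequality. The only cosmetic difference is that the paper first writes $S^\pi_N=\tfrac{1}{\gamma}T^\pi_N$ with $T^\pi_N\triangleq\gamma S^\pi_N$, bounds $\nabla T^\pi_N$ and $\|T^\pi_N\|$ separately, and then recombines via $S^\pi_N(\gamma_1)-S^\pi_N(\gamma_2)=\tfrac{\gamma_2-\gamma_1}{\gamma_1\gamma_2}T^\pi_N(\gamma_1)+\tfrac{1}{\gamma_2}\big(T^\pi_N(\gamma_1)-T^\pi_N(\gamma_2)\big)$, whereas you differentiate $S^\pi_N$ directly and handle both product-rule terms at once; your route is slightly more streamlined but otherwise equivalent.
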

\begin{proof}
We first show that $\gamma S^\pi_N(\kp,\gamma)=\sum^N_{n=1} (-1)^n\left(\frac{1-\gamma}{\gamma} \right)^n (H^\pi_\kp)^{n+1} r_\pi\triangleq T^\pi_N(\kp,\gamma)$ is uniformly Lipschitz w.r.t. the $l_\infty$ norm, i.e., 
\begin{align}
    \|T^\pi_N(\kp,\gamma_1)-T^\pi_N(\kp,\gamma_2)\|\leq l|\gamma_1-\gamma_2|,
\end{align}
for any $N$, $\pi$,  $\kp,\gamma_1,\gamma_2\in[\delta,1]$ and some constant $l$.

Clearly, it can be shown by verifying $\nabla T^\pi_N(\kp,\gamma)$ is uniformly bounded for any  $\pi,N,\kp$ or $\gamma$.

First, it can be shown that
\begin{align}
    \nabla T^\pi_N(\kp,\gamma)&=\sum^N_{n=1} (-1)^nn\left(\frac{1-\gamma}{\gamma} \right)^{n-1}\frac{-1}{\gamma^2} (H^\pi_\kp)^{n+1} r_\pi, 
\end{align}
and moreover
\begin{align}
    \|\nabla T^\pi_N(\kp,\gamma)\|&\leq \sum^N_{n=1} n\left(\frac{1-\gamma}{\gamma} \right)^{n-1}\frac{1}{\gamma^2} h^{n+1}\triangleq l_N(\gamma).
\end{align}

Note that 
\begin{align}
    h\frac{1-\gamma}{\gamma}l_N(\gamma)&=\sum^N_{n=1} n\left(\frac{1-\gamma}{\gamma} \right)^{n}\frac{1}{\gamma^2} h^{n+2},
\end{align}
then, we can show that 
\begin{align}
    &\left(1-h\frac{1-\gamma}{\gamma}\right)l_N(\gamma)\nn\\
    &=\sum^N_{n=1} n\left(\frac{1-\gamma}{\gamma} \right)^{n-1}\frac{1}{\gamma^2} h^{n+1}-\sum^N_{n=1} n\left(\frac{1-\gamma}{\gamma} \right)^{n}\frac{1}{\gamma^2} h^{n+2}\nn\\
    &=\frac{1}{\gamma^2}h^2-N\left(\frac{1-\gamma}{\gamma} \right)^{N}\frac{1}{\gamma^2}h^{N+2}+ \sum^{N}_{n=2}\left(\frac{1-\gamma}{\gamma} \right)^{n-1}\frac{1}{\gamma^2} h^{n+1}\nn\\
    &\leq \frac{1}{\gamma^2}h^2+ \frac{h^2}{\gamma^2}\frac{1-\gamma}{\gamma}h\frac{1}{1-\frac{1-\gamma}{\gamma}h}\nn\\
    &=\frac{h^2}{\gamma^2}+ \frac{h^2}{\gamma^2}\frac{1-\gamma}{\gamma}h\frac{1}{1-\frac{1-\gamma}{\gamma}h}.
\end{align}
Hence, we have that 
\begin{align}
    \|\nabla T^\pi_N(\kp,\gamma)\|&\leq l_N(\gamma)\leq \frac{1}{1-h\frac{1-\gamma}{\gamma}} \left(\frac{h^2}{\gamma^2}+ \frac{h^2}{\gamma^2}\frac{1-\gamma}{\gamma}h\frac{1}{1-\frac{1-\gamma}{\gamma}h}\right)\nn\\
    &\leq \frac{1}{1-k}\left(\frac{h^2}{\delta^2}+ \frac{h^2}{\delta^2}\frac{k}{1-k}\right), 
\end{align}
which implies a uniform bound on $\|\nabla T^\pi_N(\kp,\gamma)\|$. 

Now, we have that 
\begin{align}
    &|S^\pi_N(\kp,\gamma_1)-S^\pi_N(\kp,\gamma_2)|\nn\\
    &\leq\frac{|\gamma_2-\gamma_1|}{\gamma_1\gamma_2}\|T^\pi_N(\kp,\gamma_1)\|+\frac{\|T^\pi_N(\kp,\gamma_1)-T^\pi_N(\kp,\gamma_2)\|}{\gamma_2}.
\end{align}
To show $\|T^\pi_N(\kp,\gamma)\|$ is uniformly bounded, we have that
\begin{align}
    \|T^\pi_N(\kp,\gamma)\|&\leq \sum^N_{n=1}\left\|\left(\frac{1-\gamma}{\gamma} \right)^n (H^\pi_\kp)^{n+1} r\right\|\nn\\
    &\leq  \sum^N_{n=1}\left(\frac{1-\gamma}{\gamma} \right)^n h^{n+1}\nn\\
    &\leq \sum^N_{n=1} k^nh\nn\\
    &\leq h\frac{k}{1-k}.
\end{align}
Then, it follows that
\begin{align}
    &\|S^\pi_N(\kp,\gamma_1)-S^\pi_N(\kp,\gamma_2)\|\nn\\
    &=\bigg\|\frac{\gamma_2-\gamma_1}{\gamma_1\gamma_2}T^\pi_N(\kp,\gamma_1)+\frac{T^\pi_N(\kp,\gamma_1)-T^\pi_N(\kp,\gamma_2)}{\gamma_2}\bigg\|\nn\\
    &\leq \left( \frac{1}{\delta^2}h\frac{k}{1-k}+\frac{1}{\delta}\frac{1}{1-k}\left(\frac{h^2}{\delta^2}+ \frac{h^2}{\delta^2}\frac{k}{1-k}\right) \right) |\gamma_1-\gamma_2|\nn\\
    &\triangleq L|\gamma_1-\gamma_2|,
\end{align}
where $L=\left( \frac{1}{\delta^2}h\frac{k}{1-k}+\frac{1}{\delta}\frac{1}{1-k}\left(\frac{h^2}{\delta^2}+ \frac{h^2}{\delta^2}\frac{k}{1-k}\right) \right)$ is a universal constant that does not depend on $N,\kp,\pi$ or $\gamma$. 
\end{proof}

\begin{lemma}\label{lemma:lip_f}
$
    S^\pi_\infty(\kp,\gamma)
$ uniformly converges as $\gamma\to 1$ on $\Pi\times\mathcal{P}$. 
Also, $S^\pi_\infty(\kp,\gamma)$ is $L$-Lipschitz for any $\gamma>\delta$: for any $\pi,\kp$ and any $\gamma_1,\gamma_2\in (\delta,1]$.
\begin{align}
     \|S^\pi_\infty(\kp,\gamma_1)-S^\pi_\infty(\kp,\gamma_2)\|\leq L|\gamma_1-\gamma_2|. 
\end{align}
\end{lemma}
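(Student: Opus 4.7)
The plan is to derive both assertions from the two preceding lemmas about the finite truncation $S^\pi_N$. The Lipschitz bound will be obtained by passing the uniform Lipschitz estimate of Lemma \ref{lemma:Snlip} to the limit in $N$, and the uniform convergence in $\gamma$ will then be an immediate consequence of that Lipschitz bound together with the observation that $S^\pi_\infty(\kp,\gamma)\to 0$ as $\gamma\to 1$.

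First I would fix arbitrary $\pi\in\Pi$, $\kp\in\mathcal P$ and $\gamma_1,\gamma_2\in(\delta,1]$. By Lemma \ref{lemma:Snlip} we have
\[
    \|S^\pi_N(\kp,\gamma_1)-S^\pi_N(\kp,\gamma_2)\|\le L\,|\gamma_1-\gamma_2|
\]
for every $N\ge 1$, with the constant $L$ independent of $\pi,\kp,N$. Lemma \ref{lemma:Sn} ensures that $S^\pi_N(\kp,\gamma_i)\to S^\pi_\infty(\kp,\gamma_i)$ as $N\to\infty$ for $i=1,2$ (pointwise convergence suffices here, though the lemma in fact guarantees uniform convergence). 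Taking $N\to\infty$ on both sides of the displayed inequality and using continuity of the norm yields $\|S^\pi_\infty(\kp,\gamma_1)-S^\pi_\infty(\kp,\gamma_2)\|\le L\,|\gamma_1-\gamma_2|$, which is the desired uniform Lipschitz estimate.

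Second, I would observe that the series definition
\[
    S^\pi_\infty(\kp,\gamma)=\tfrac{1}{\gamma}\sum_{n=1}^{\infty}(-1)^n\Big(\tfrac{1-\gamma}{\gamma}\Big)^n (H^\pi_\kp)^{n+1} r_\pi
\]
vanishes when $\gamma=1$ since every term carries a factor $(1-\gamma)^n$. Equivalently, using $\|H^\pi_\kp\|\le h$ one has the bound $\|S^\pi_\infty(\kp,\gamma)\|\le \tfrac{h}{\gamma}\sum_{n\ge 1}\bigl(h(1-\gamma)/\gamma\bigr)^n$ which tends to $0$ uniformly in $\pi,\kp$ as $\gamma\to 1$. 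Either argument gives a uniform-in-$(\pi,\kp)$ limit of $0$; in fact, applying the Lipschitz bound established above with $\gamma_2=1$ yields $\|S^\pi_\infty(\kp,\gamma)\|\le L\,|\gamma-1|$ for all $\gamma\in(\delta,1]$ and all $(\pi,\kp)$, so uniform convergence on $\Pi\times\mathcal P$ as $\gamma\to 1$ is immediate.

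The main (minor) obstacle is purely bookkeeping: one must verify that the constant $L$ supplied by Lemma \ref{lemma:Snlip} is genuinely independent of $N$ (so that the limit $N\to\infty$ preserves the bound) and that the pointwise convergence $S^\pi_N\to S^\pi_\infty$ holds at the two chosen values of $\gamma$; both are already contained in the quoted lemmas, so no new estimates are needed.
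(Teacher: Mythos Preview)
Your proof is correct and relies on the same two lemmas as the paper's, but the organization differs slightly. The paper first writes a single $3\epsilon$ estimate,
\[
\|S^\pi_\infty(\kp,\gamma_1)-S^\pi_\infty(\kp,\gamma_2)\|\le 2\epsilon + L|\gamma_1-\gamma_2|,
\]
obtained via the triangle inequality and Lemmas~\ref{lemma:Sn} and~\ref{lemma:Snlip}; from this it deduces uniform convergence by Cauchy's criterion (without naming the limit) and then lets $\epsilon\to 0$ to get the Lipschitz bound. You instead establish the Lipschitz bound first by passing $N\to\infty$ directly in the inequality of Lemma~\ref{lemma:Snlip}, and then obtain uniform convergence by observing that $S^\pi_\infty(\kp,1)=0$ and applying the Lipschitz bound with $\gamma_2=1$. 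Your route is a bit more streamlined and has the added benefit of identifying the limit explicitly as $0$, whereas the paper's Cauchy argument is agnostic about the limiting value; both are equally valid.
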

\begin{proof}
From Lemma \ref{lemma:Sn}, for any $\epsilon$, there exists $N_\epsilon$, such that for any $n\geq N_\epsilon$, $\pi,\kp$, $\gamma>\delta$, 
\begin{align}
    \|S^\pi_\infty(\kp,\gamma)-S^\pi_n(\kp,\gamma)\|<\epsilon.
\end{align}
Thus for any $\gamma_1,\gamma_2\in(\delta,1]$, 
\begin{align}\label{eq:31}
    &\|S^\pi_\infty(\kp,\gamma_1)-S^\pi_\infty(\kp,\gamma_2)\|\nn\\
    &\leq \|S^\pi_\infty(\kp,\gamma_1)-S^\pi_n(\kp,\gamma_1)\|+\|S^\pi_n(\kp,\gamma_1)-S^\pi_n(\kp,\gamma_2)\|+\|S^\pi_n(\kp,\gamma_2)-S^\pi_\infty(\kp,\gamma_2)\|\nn\\
    &\leq 2\epsilon+ \|S^\pi_n(\kp,\gamma_1)-S^\pi_n(\kp,\gamma_2)\|\nn\\
    &\leq 2\epsilon +L|\gamma_1-\gamma_2|, 
\end{align}
where the last step is from Lemma \ref{lemma:Snlip}. 

Thus, for any $\epsilon$, there exists $\omega=\max\left\{ \delta, 1-\epsilon\right\}$, such that for any $\gamma_1,\gamma_2>\omega$, 
\begin{align}
    \|S^\pi_\infty(\kp,\gamma_1)-S^\pi_\infty(\kp,\gamma_2)\|<(2+L)\epsilon,
\end{align}
and hence by Cauchy's criterion we conclude that $S^\pi_\infty(\kp,\gamma)$ converges uniformly on $\Pi\times\mathcal{P}$.

On the other hand, since \cref{eq:31} holds for any $\epsilon$, it implies that 
\begin{align}
     \|S^\pi_\infty(\kp,\gamma_1)-S^\pi_\infty(\kp,\gamma_2)\|\leq L|\gamma_1-\gamma_2|,
\end{align}
which completes the proof.
\end{proof}


We now prove Theorem \ref{thm:uniform}.
For any $\kp,\pi$, we have that
\begin{align}
    V^\pi_{\kp,\gamma}=\frac{1}{1-\gamma}g^\pi_\kp+h^\pi_\kp+f^\pi_\kp(\gamma).
\end{align}
It then follows that
\begin{align}
    (1-\gamma)V^\pi_{\kp,\gamma}=g^\pi_\kp+(1-\gamma)h^\pi_\kp+(1-\gamma)f^\pi_\kp(\gamma).
\end{align}
Clearly $(1-\gamma)h^\pi_\kp\to 0$ uniformly on $\Pi\times\mathcal{P}$ because $\|h^\pi_\kp\|=\|H^\pi_\kp r_\pi\|\leq h$ is uniformly bounded. Then, 
\begin{align}
    &\|(1-\gamma_1)f^\pi_\kp(\gamma_1)-(1-\gamma_2)f^\pi_\kp(\gamma_2)\|\nn\\
    &\leq \|(1-\gamma_1)f^\pi_\kp(\gamma_1)-(1-\gamma_1)f^\pi_\kp(\gamma_2)\|+\|(1-\gamma_1)f^\pi_\kp(\gamma_2)-(1-\gamma_2)f^\pi_\kp(\gamma_2)\|\nn\\
    &\leq (1-\gamma_1)L|\gamma_1-\gamma_2|+\|f^\pi_\kp(\gamma_2)\| |\gamma_1-\gamma_2|.
\end{align}
For any $\pi,\kp,\gamma>\delta$, 
\begin{align}\label{eq:cf}
    \|f^\pi_\kp(\gamma)\|&=\bigg\| \frac{1}{\gamma}\sum^\infty_{n=1} (-1)^n\left(\frac{1-\gamma}{\gamma} \right)^n (H^\pi_\kp)^{n+1} r_\pi\bigg\|\nn\\
    &\leq \bigg|\frac{1}{\gamma}\sum^\infty_{n=1} \left(\frac{1-\gamma}{\gamma}\right)^nh^{n+1} \bigg|\nn\\
    &\leq \frac{h}{\delta} \frac{1-\gamma}{\gamma}h \frac{1}{1-\frac{1-\gamma}{\gamma}h}\nn\\
    &\leq \frac{h}{\delta}   \frac{k}{1-k}\nn\\
    &\triangleq c_f.
\end{align}
Hence,
$(1-\gamma)f^\pi_\kp(\gamma)\to 0$ uniformly on $\Pi\times\mathcal{P}$ due to the fact  that $\|f^\pi_\kp(\gamma)\|$ is uniformly bounded for any $\pi,\gamma>\delta,\kp$.

Then we have that
$
    \lim_{\gamma\to 1}(1-\gamma)V^\pi_{\kp,\gamma}=g^\pi_\kp
$
uniformly on $\mathcal{P}\times \Pi$. This completes the proof of \Cref{thm:uniform}.

\section{Proof of Theorem \ref{thm:lim of robust}}


We first show a lemma which allows us to interchange the order of $\lim$ and $\max$. 

\begin{lemma}\label{lemma:limmax}
If a function $f(x,y)$ converges uniformly to $F(x)$ on $\mathcal{X}$ as $y\to y_0$, then 
\begin{align}
    \max_x\lim_{y\to y_0} f(x,y) = \lim_{y\to y_0}\max_xf(x,y) .
\end{align}
\end{lemma}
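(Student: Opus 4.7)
The plan is to prove the identity by first showing that $\max_x f(x,y)$ converges to $\max_x F(x)$ as $y\to y_0$, and then observing that pointwise $\lim_{y\to y_0} f(x,y) = F(x)$, so the right-hand side equals $\max_x F(x)$ as well.

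First, I would unpack uniform convergence: for any $\epsilon > 0$, there exists a neighborhood $U$ of $y_0$ such that for all $y \in U$ and all $x \in \mathcal{X}$, $|f(x,y) - F(x)| < \epsilon$. From this inequality, applied uniformly in $x$, the key step is the elementary observation
\begin{align}
    \bigl|\max_x f(x,y) - \max_x F(x)\bigr| \leq \sup_{x \in \mathcal{X}} |f(x,y) - F(x)| < \epsilon,
\end{align}
which follows because for any functions $u,v$ on $\mathcal{X}$ we have $\max_x u(x) \leq \max_x v(x) + \sup_x |u(x) - v(x)|$ and vice versa. Hence $\lim_{y\to y_0} \max_x f(x,y) = \max_x F(x)$.

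Next, since uniform convergence implies pointwise convergence, $\lim_{y\to y_0} f(x,y) = F(x)$ for every fixed $x \in \mathcal{X}$, so $\max_x \lim_{y\to y_0} f(x,y) = \max_x F(x)$. Combining the two equalities yields the claim. In the application in the paper, $\mathcal{X}$ will be $\Pi$ (the set of stationary policies) for Theorem \ref{thm:lim of robust}, and the max will actually be a min over the uncertainty set $\cp$; the argument transfers verbatim to $\min$ by the analogous inequality $|\min_x u(x) - \min_x v(x)| \leq \sup_x |u(x) - v(x)|$.

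I do not anticipate any genuine obstacle here; the lemma is a textbook-level consequence of uniform convergence. The only subtlety worth flagging is that one should interpret $\max$ as $\sup$ in case the maximizer is not attained (e.g., when $\mathcal{X}$ is not compact), but under Assumption \ref{ass:compact} together with continuity of $f(\cdot, y)$ the maximum is attained, so no care is needed in the present context.
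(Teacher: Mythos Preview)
Your proposal is correct and follows essentially the same approach as the paper: both arguments use uniform convergence to bound $\lvert\max_x f(x,y)-\max_x F(x)\rvert$ by $\epsilon$ and then identify $\max_x F(x)$ with $\max_x\lim_{y\to y_0}f(x,y)$. The only cosmetic difference is that the paper carries out an explicit two-case analysis with maximizers $x_y=\arg\max_x f(x,y)$ and $x'=\arg\max_x F(x)$, whereas you invoke the cleaner inequality $\lvert\max_x u(x)-\max_x v(x)\rvert\le\sup_x\lvert u(x)-v(x)\rvert$ directly.
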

\begin{proof}
For each $f(x,y)$, denote by  $\arg\max_x f(x,y)=x_y$, and hence $f(x_y,y)\geq f(x,y)$ for any $x,y$. Also denote by $\arg\max_x F(x)=x'$.
Now because $f(x,y)$ uniformly converges to $F(x)$, then for any $\epsilon$, there exists $\delta'$, such that $\forall |y-y_0|<\delta'$, 
\begin{align}
    |f(x,y)-F(x)|\leq\epsilon
\end{align}
for any $x$. 
Now consider $|f(x_y,y)-F(x')|$ for $|y-y_0|<\delta'$. If  $f(x_y,y)-F(x')>0$, then 
\begin{align}
   |f(x_y,y)-F(x')|=f(x_y,y)-F(x')=f(x_y,y)-F(x_y)+F(x_y)-F(x')\leq \epsilon;
\end{align}
On the other hand if $f(x_y,y)-F(x')<0$, then
\begin{align}
   |f(x_y,y)-F(x')|=F(x')-f(x_y,y)=F(x')-f(x',y)+f(x',y)-f(x_y,y)\leq \epsilon.
\end{align}
Hence, we showed that for any $\epsilon$, there exists $\delta'$, such that $\forall |y-y_0|<\delta'$,
\begin{align}
    |f(x_y,y)-F(x')|=|\max_x f(x,y) -\max_x F(x)|\leq \epsilon,
\end{align}
and hence 
\begin{align}
    \lim_{y\to y_0} \max_x f(x,y) =\max_x F(x)=\max_x \lim_{y\to y_0}  f(x,y),
\end{align}
and this completes the proof. 
\end{proof}

Then, we show that the robust discounted value function converges uniformly to the robust average-reward as the discounted factor approaches $1$. 
\begin{theorem}[Restatement of Theorem \ref{thm:lim of robust}]
The  robust discounted value function converges uniformly to the robust average-reward  on $\Pi$: 
\begin{align}
    \lim_{\gamma\to 1}(1-\gamma)V^\pi_{\mathcal{P},\gamma}=g^\pi_{\mathcal{P}}.
\end{align}
\end{theorem}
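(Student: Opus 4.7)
The plan is to reduce the robust uniform convergence claim to the non-robust uniform convergence already established in \Cref{thm:uniform} by exploiting that, for any stationary policy $\pi$, both the robust discounted value function and the robust average-reward can be written as a minimum over stationary transition kernels. Concretely, by \cref{eq:equivalencediscounted} and \Cref{thm:stationary},
\begin{align*}
(1-\gamma) V^\pi_{\cp,\gamma} \;=\; \min_{\kp \in \cp} (1-\gamma) V^\pi_{\kp,\gamma}, \qquad g^\pi_\cp \;=\; \min_{\kp \in \cp} g^\pi_\kp,
\end{align*}
so the theorem becomes the statement that the limit $\gamma \to 1$ and the outer $\min_{\kp \in \cp}$ can be interchanged, with the resulting error bound uniform in $\pi$.

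Next I would invoke \Cref{thm:uniform}: for every $\epsilon > 0$ there exists $\gamma^\star \in [\delta, 1)$ such that for every $\gamma \in (\gamma^\star, 1)$, every $\pi \in \Pi$, and every $\kp \in \cp$,
\begin{align*}
\bigl\| (1-\gamma) V^\pi_{\kp,\gamma} - g^\pi_\kp \bigr\|_\infty < \epsilon.
\end{align*}
The key elementary fact is that $|\min_{x \in X} f(x) - \min_{x \in X} h(x)| \le \sup_{x \in X} |f(x) - h(x)|$, which follows from $f(x) \ge h(x) - \|f - h\|_\infty$ for every $x$, and vice versa. Applying this component-wise in $s$ with $X = \cp$ yields
\begin{align*}
\bigl\| (1-\gamma) V^\pi_{\cp,\gamma} - g^\pi_\cp \bigr\|_\infty \le \epsilon
\end{align*}
for every $\pi \in \Pi$ and every $\gamma \in (\gamma^\star, 1)$. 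Since $\gamma^\star$ depends only on $\epsilon$ and not on $\pi$, this is exactly uniform convergence of $(1-\gamma) V^\pi_{\cp,\gamma}$ to $g^\pi_\cp$ on $\Pi$.

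No serious obstacle remains: the genuine analytical work has been done in proving \Cref{thm:uniform}, which lifts the classical point-wise convergence to a statement uniform over the compact set $\Pi \times \cp$. The only care needed is in passing from the time-varying worst case appearing in the definitions of $V^\pi_{\cp,\gamma}$ and $g^\pi_\cp$ to the stationary worst case; this is handled for the average-reward side by \Cref{thm:stationary} and for the discounted side by \cref{eq:equivalencediscounted}. As an alternative framing, the interchange of $\min$ and $\lim$ above can equivalently be viewed as the $\min$ analog of \Cref{lemma:limmax}, since uniform convergence of $(1-\gamma) V^\pi_{\kp,\gamma}$ in the kernel variable $\kp$ is precisely the hypothesis required there.
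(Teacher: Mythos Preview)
Your proposal is correct and follows essentially the same route as the paper: reduce both robust quantities to minima over stationary kernels via \cref{eq:equivalencediscounted} and \Cref{thm:stationary}, then use the uniform convergence of \Cref{thm:uniform} to interchange $\min_{\kp\in\cp}$ and $\lim_{\gamma\to 1}$. The only cosmetic difference is that the paper packages the interchange step as \Cref{lemma:limmax}, whereas you invoke the equivalent elementary inequality $|\min f-\min h|\le\sup|f-h|$ directly (and even note the connection to \Cref{lemma:limmax} yourself).
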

\begin{proof}
Due to Theorem \ref{thm:stationary}, for any stationary policy $\pi$, $g^\pi_\cp(s)=\min_{\kp\in\cp}g^\pi_\kp(s)$ under the stationary model. Hence from the uniform convergence in Theorem \ref{thm:uniform}, we first show the following:
\begin{align}\label{eq:54}
    g^\pi_{\mathcal{P}}&=\min_{\kp\in\mathcal{P}} g^\pi_\kp\nn\\
    &=\min_{\kp\in\mathcal{P}}\lim_{\gamma\to 1}(1-\gamma)V^\pi_{\kp,\gamma}\nn\\
    &\overset{(a)}{=}\lim_{\gamma\to 1}\min_{\kp\in\mathcal{P}}(1-\gamma)V^\pi_{\kp,\gamma}\nn\\
    &=\lim_{\gamma\to 1}(1-\gamma)V^\pi_{\mathcal{P},\gamma},
\end{align}
where $(a)$ is because Lemma \ref{lemma:limmax}. Moreover, note that $\lim_{\gamma\to 1}(1-\gamma)V^\pi_{\kp,\gamma}=g^\pi_\kp$ uniformly on $\Pi\times\cp$, hence the convergence in \eqref{eq:54} is also uniform on $\Pi$.  
Thus, we complete the proof. 
\end{proof}


\section{Proof of Theorem \ref{thm:evaluation}}

\begin{theorem}[Restatement of Theorem \ref{thm:evaluation}]
$V_T$ generated by Algorithm \ref{alg:evaluatoin} converges to the robust average-reward $g^\pi_{\mathcal{P}}$ as $T\to\infty$.
\end{theorem}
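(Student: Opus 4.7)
The plan is to view Algorithm~\ref{alg:evaluatoin} as a sequence of scaled robust discounted Bellman updates with an adaptive discount factor, and to track how closely its iterates follow the corresponding moving fixed points; the limit is then identified using \Cref{thm:lim of robust}. Define
\begin{equation*}
\mathcal{B}_\gamma V(s) := \mathbb{E}_\pi\!\left[(1-\gamma) r(s,A) + \gamma\, \sigma_{\mathcal{P}_s^A}(V)\right],
\end{equation*}
so the algorithm reads $V_{t+1} = \mathcal{B}_{\gamma_t} V_t$. Because each $\sigma_{\mathcal{P}_s^A}$ is the support function of a subset of $\Delta(\mcs)$, it is $1$-Lipschitz in sup norm, so $\mathcal{B}_\gamma$ is a $\gamma$-contraction. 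Positive homogeneity of $\sigma_{\mathcal{P}_s^A}$, together with the standard robust discounted Bellman equation $V^\pi_{\cp,\gamma}(s) = \mathbb{E}_\pi[r(s,A)+\gamma \sigma_{\mathcal{P}_s^A}(V^\pi_{\cp,\gamma})]$, shows by direct substitution that the unique fixed point of $\mathcal{B}_\gamma$ is $V^\star_\gamma := (1-\gamma) V^\pi_{\cp,\gamma}$.

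Set $a_t := \|V_t - V^\star_{\gamma_t}\|_\infty$ and $\epsilon_t := \|V^\star_{\gamma_t} - V^\star_{\gamma_{t+1}}\|_\infty$. The contraction of $\mathcal{B}_{\gamma_t}$ at its fixed point, followed by the triangle inequality, gives the scalar recurrence
\begin{equation*}
a_{t+1} \le \gamma_t a_t + \epsilon_t.
\end{equation*}
With $\gamma_s = (s+1)/(s+2)$ the products telescope: $\prod_{r=s+1}^{t}\gamma_r = (s+2)/(t+2)$, so unrolling yields
\begin{equation*}
(t+2)\, a_{t+1} \;\le\; a_0 + \sum_{s=0}^{t} (s+2)\, \epsilon_s.
\end{equation*}
Since \Cref{thm:lim of robust} gives $V^\star_{\gamma_t} \to g^\pi_{\cp}$ uniformly, it suffices to establish $a_t \to 0$ to conclude $V_T \to g^\pi_{\cp}$ by a final triangle inequality.

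The main obstacle and the remaining technical step is a sharp rate $\epsilon_t = O(1/t^2)$. I would obtain this by showing that $(1-\gamma) V^\pi_{\kp,\gamma}$ is Lipschitz in $\gamma$ with a constant \emph{uniform} over $\kp\in\cp$. By \Cref{lemma:dis-ave},
\begin{equation*}
(1-\gamma) V^\pi_{\kp,\gamma} = g^\pi_\kp + (1-\gamma)\, h^\pi_\kp + (1-\gamma)\, f^\pi_\kp(\gamma),
\end{equation*}
and the uniform bound $\|h^\pi_\kp\|\le h$ from \Cref{lemma:hbounded} combined with the uniform boundedness and Lipschitz property of $f^\pi_\kp(\gamma)$ in $\gamma$ from \Cref{lemma:lip_f} furnish a Lipschitz constant $C$ independent of $\kp,\pi$. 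Passing to the minimum over $\kp$ preserves Lipschitz continuity through $|\min_\kp F_1(\kp) - \min_\kp F_2(\kp)| \le \sup_\kp |F_1(\kp)-F_2(\kp)|$, so $\epsilon_t \le C|\gamma_t - \gamma_{t+1}| = C/((t+2)(t+3))$. Substituting back, $\sum_{s=0}^t (s+2)\epsilon_s = O(\log t)$, hence $a_{t+1} = O((\log t)/t)\to 0$.

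The subtlety worth emphasizing is why the $O(1/t^2)$ rate is needed: the cruder estimate $\epsilon_t \le 2\|V^\star_{\gamma_t} - g^\pi_{\cp}\| = o(1)$, obtained directly from \Cref{thm:lim of robust}, is too weak, since feeding $\epsilon_s = o(1)$ into $(t+2)a_{t+1}\le a_0+\sum(s+2)\epsilon_s$ only gives $a_t = O(1)$ and fails to pin down the limit. The uniform-in-$\kp$ Lipschitz regularity of $(1-\gamma) V^\pi_{\kp,\gamma}$, which is precisely the quantitative refinement behind the uniform convergence in \Cref{thm:uniform}, is what converts the qualitative convergence of the moving fixed points into a strong enough summability of $(s+2)\epsilon_s$ to force $a_t\to 0$.
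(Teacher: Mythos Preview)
Your proposal is correct and follows essentially the same route as the paper: track the error $a_t=\|V_t-(1-\gamma_t)V^\pi_{\cp,\gamma_t}\|_\infty$, derive the recursion $a_{t+1}\le\gamma_t a_t+\epsilon_t$ from the contraction of the (scaled) robust Bellman operator, bound $\epsilon_t\le K|\gamma_{t+1}-\gamma_t|$ via the uniform-in-$\kp$ Lipschitz regularity of $(1-\gamma)V^\pi_{\kp,\gamma}$ supplied by \Cref{lemma:dis-ave,lemma:hbounded,lemma:lip_f}, and then combine with \Cref{thm:lim of robust}. The only cosmetic differences are that the paper obtains the Lipschitz bound on the moving fixed points through a worst-case-kernel case analysis rather than your cleaner $|\min_\kp F_1-\min_\kp F_2|\le\sup_\kp|F_1-F_2|$ step, and it invokes an external lemma (Lemma~8 of \cite{tewari2007bounded}) in place of your explicit telescoping unrolling yielding the $O((\log t)/t)$ rate.
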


\begin{proof}
From discounted robust Bellman equation \cite{nilim2004robustness}, it can be shown that 
\begin{align}
    (1-\gamma_t)V^\pi_{\cp,\gamma_{t}}=(1-\gamma_t)\sum_a \pi(a|s) (r(s,a)+\gamma_t \sigma_{\mathcal{P}^a_s}( V^\pi_{\cp,\gamma_{t}})).
\end{align}
Then we can show that for any $s\in \mcs$,
\begin{align}
    &|V_{t+1}(s)-(1-\gamma_{t+1})V^\pi_{\cp,\gamma_{t+1}}(s)| \nn\\
    &=|V_{t+1}(s)-(1-\gamma_t)V^\pi_{\cp,\gamma_{t}}(s)+(1-\gamma_t)V^\pi_{\cp,\gamma_{t}}(s)-(1-\gamma_{t+1})V^\pi_{\cp,\gamma_{t+1}}(s)|\\
    &\leq |(1-\gamma_t)V^\pi_{\cp,\gamma_{t}}(s)-(1-\gamma_{t+1})V^\pi_{\cp,\gamma_{t+1}}(s)|+|V_{t+1}(s)-(1-\gamma_t)V^\pi_{\cp,\gamma_{t}}(s)|\nn\\
    &=|(1-\gamma_t)V^\pi_{\cp,\gamma_{t}}(s)-(1-\gamma_{t+1})V^\pi_{\cp,\gamma_{t+1}}(s)|\nn\\
    &\quad+\bigg|\sum_a \pi(a|s) \bigg((1-\gamma_t)r(s,a)+\gamma_t \sigma_{\mathcal{P}^a_s}(V_t)-((1-\gamma_{t})r(s,a)+\gamma_{t} \sigma_{\mathcal{P}^a_s}((1-\gamma_t)V^\pi_{\cp,\gamma_{t}})) \bigg)\bigg|\nn\\
    &=|(1-\gamma_t)V^\pi_{\cp,\gamma_{t}}(s)-(1-\gamma_{t+1})V^\pi_{\cp,\gamma_{t+1}}(s)|+\bigg|\sum_a \pi(a|s) \bigg(\gamma_t\sigma_{\mathcal{P}^a_s}(V_t)-\gamma_{t} \sigma_{\mathcal{P}^a_s}((1-\gamma_t)V^\pi_{\cp,\gamma_{t}}) \bigg)\bigg|\nn\\
    &=|(1-\gamma_t)V^\pi_{\cp,\gamma_{t}}(s)-(1-\gamma_{t+1})V^\pi_{\cp,\gamma_{t+1}}(s)|+\gamma_{t}\bigg|\sum_a \pi(a|s) \bigg(\sigma_{\mathcal{P}^a_s}(V_t)- \sigma_{\mathcal{P}^a_s}((1-\gamma_t)V^\pi_{\cp,\gamma_{t}})\bigg)\bigg|.
\end{align}
If we denote by $\Delta_t\triangleq \|V_{t}-(1-\gamma_t)V^\pi_{\cp,\gamma_{t}}\|_\infty$, then 
\begin{align}\label{eq:56}
    \Delta_{t+1}\leq \|(1-\gamma_t)V^\pi_{\cp,\gamma_{t}}-(1-\gamma_{t+1})V^\pi_{\cp,\gamma_{t+1}}\|_\infty +\gamma_{t}\max_s\bigg\{\sum_a \pi(a|s) \bigg|\sigma_{\mathcal{P}^a_s}(V_t)- \sigma_{\mathcal{P}^a_s}((1-\gamma_t)V^\pi_{\cp,\gamma_{t}})\bigg|\bigg\}.
\end{align}

It can be easily verified that $\sigma_{\mathcal{P}^a_s}(V)$ is a $1$-Lipschitz function, thus the second term in \eqref{eq:56} can be further bounded as
\begin{align}
    &\sum_a \pi(a|s) \bigg|\sigma_{\mathcal{P}^a_s}(V_t)- \sigma_{\mathcal{P}^a_s}((1-\gamma_t)V^\pi_{\cp,\gamma_{t}})\bigg|\nn\\
    &\leq \sum_a \pi(a|s) \|V_t-(1-\gamma_t)V^\pi_{\cp,\gamma_{t}} \|_\infty\nn\\
    &=\|V_t-(1-\gamma_t)V^\pi_{\cp,\gamma_{t}} \|_\infty,
\end{align}
and hence 
\begin{align}
    \Delta_{t+1}\leq \|(1-\gamma_t)V^\pi_{\cp,\gamma_{t}}-(1-\gamma_{t+1})V^\pi_{\cp,\gamma_{t+1}}\|_\infty +\gamma_{t}\Delta_t.
\end{align}
Recall that 
\begin{align}
    (1-\gamma_t)V^\pi_{\cp,\gamma_{t}}=(1-\gamma_t)\min_{\kp} V^\pi_{\kp,\gamma_t}.
\end{align}
Let $s_t^*\triangleq\arg\max_s|(1-\gamma_t)V^\pi_{\cp,\gamma_{t}}(s)-(1-\gamma_{t+1})V^\pi_{\cp,\gamma_{t+1}}(s)| $. Then it follows that
\begin{align}
    \|(1-\gamma_t)V^\pi_{\cp,\gamma_{t}}-(1-\gamma_{t+1})V^\pi_{\cp,\gamma_{t+1}}\|_\infty=|(1-\gamma_t)V^\pi_{\cp,\gamma_{t}}(s_t^*)-(1-\gamma_{t+1})V^\pi_{\cp,\gamma_{t+1}}(s_t^*)|.
\end{align}

Note that from \cite{nilim2004robustness,iyengar2005robust}, for any stationary policy $\pi$, there exists a stationary model $\kp$ such that $V^\pi_{\cp,\gamma}(s)=\mE_{\kp,\pi}\bigg[ \sum^{\infty}_{t=0}\gamma^tr_t|S_0=s\bigg]\triangleq V^\pi_{\kp,\gamma}$. Hence in the following, for each $\gamma_t$, we denote the worst-case transition kernel of $V^\pi_{\cp,\gamma_{t}}$ by $\kp_t$. 

If $(1-\gamma_t)V^\pi_{\cp,\gamma_{t}}(s_t^*)\geq (1-\gamma_{t+1})V^\pi_{\cp,\gamma_{t+1}}(s_t^*)$, then
\begin{align}\label{eq:36}
    &|(1-\gamma_t)V^\pi_{\cp,\gamma_{t}}(s_t^*)-(1-\gamma_{t+1})V^\pi_{\cp,\gamma_{t+1}}(s_t^*)|\nn\\&=\min_{\kp} (1-\gamma_{t})V^\pi_{\kp,\gamma_t}(s_t^*)-\min_{\kp} (1-\gamma_{t+1})V^\pi_{\kp,\gamma_{t+1}}(s_t^*)\nn\\
    &=(1-\gamma_t){V}^\pi_{\kp_t,\gamma_t}(s_t^*)- (1-\gamma_{t+1}){V}^\pi_{\kp_{t+1},\gamma_{t+1}}(s_t^*)\nn\\
    &=(1-\gamma_t){V}^\pi_{\kp_t,\gamma_t}(s_t^*)-(1-\gamma_t){V}^\pi_{\kp_{t+1},\gamma_{t}}(s_t^*)+(1-\gamma_t){V}^\pi_{\kp_{t+1},\gamma_{t}}(s_t^*)- (1-\gamma_{t+1}){V}^\pi_{\kp_{t+1},\gamma_{t+1}}(s_t^*)\nn\\
    &\overset{(a)}{\leq} (1-\gamma_t){V}^\pi_{\kp_{t+1},\gamma_t}(s_t^*)-(1-\gamma_{t+1}){V}^\pi_{\kp_{t+1},\gamma_{t+1}}(s_t^*)\nn\\
    &\leq \|(1-\gamma_t){V}^\pi_{\kp_{t+1},\gamma_t}-(1-\gamma_{t+1}){V}^\pi_{\kp_{t+1},\gamma_{t+1}}\|_\infty,
\end{align}
where   $(a)$ is due to $(1-\gamma_t){V}^\pi_{\kp_{t},\gamma_{t}}(s_t^*)=\min_{\kp} (1-\gamma_t)V^\pi_{\kp,\gamma_{t}}(s_t^*)\leq (1-\gamma_t)V^\pi_{\kp_{t+1},\gamma_{t}}(s_t^*)$.

Now, according to Lemma \ref{lemma:dis-ave},
\begin{align}
    (1-\gamma_t){V}^\pi_{\kp_{t+1},\gamma_t}&=g^\pi_{\kp_{t+1}}+(1-\gamma_t)h^\pi_{\kp_{t+1}}+(1-\gamma_t)f^\pi_{\kp_{t+1}}(\gamma_t),\\
    (1-\gamma_{t+1}){V}^\pi_{\kp_{t+1},\gamma_{t+1}}&=g^\pi_{\kp_{t+1}}+(1-\gamma_{t+1})h^\pi_{\kp_{t+1}}+(1-\gamma_{t+1})f^\pi_{\kp_{t+1}}(\gamma_{t+1}).
\end{align}
Hence, for any $\gamma_t>\delta$, \cref{eq:36} can be further bounded as
\begin{align}\label{eq:65}
     &\|(1-\gamma_t){V}^\pi_{\kp_{t+1},\gamma_t}-(1-\gamma_{t+1}){V}^\pi_{\kp_{t+1},\gamma_{t+1}}\|_\infty\nn\\
    &=\|(\gamma_{t+1}-\gamma_t)h^\pi_{\kp_{t+1}}+(1-\gamma_t)f^\pi_{\kp_{t+1}}(\gamma_t)-(1-\gamma_{t+1})f^\pi_{\kp_{t+1}}(\gamma_{t+1})\|_\infty\nn\\
    &\leq (\gamma_{t+1}-\gamma_t)\|h^\pi_{\kp_{t+1}}\|_\infty+ \|f^\pi_{\kp_{t+1}}(\gamma_t)-f^\pi_{\kp_{t+1}}(\gamma_{t+1})\|_\infty+\| \gamma_{t+1}f^\pi_{\kp_{t+1}}(\gamma_{t+1})-\gamma_tf^\pi_{\kp_{t+1}}(\gamma_{t})\|_\infty\nn\\
    &\overset{(a)}{\leq} h (\gamma_{t+1}-\gamma_t)+L(\gamma_{t+1}-\gamma_t)+\| \gamma_{t+1}f^\pi_{\kp_{t+1}}(\gamma_{t+1})-\gamma_tf^\pi_{\kp_{t+1}}(\gamma_{t})\|_\infty\nn\\
    &\leq h(\gamma_{t+1}-\gamma_t)+L(\gamma_{t+1}-\gamma_t)+\| \gamma_{t+1}f^\pi_{\kp_{t+1}}(\gamma_{t+1})-\gamma_{t+1}f^\pi_{\kp_{t+1}}(\gamma_{t})\|_\infty+\|\gamma_{t+1}f^\pi_{\kp_{t+1}}(\gamma_{t})-\gamma_{t}f^\pi_{\kp_{t+1}}(\gamma_{t}) \|_\infty\nn\\
    &\leq h(\gamma_{t+1}-\gamma_t)+L(\gamma_{t+1}-\gamma_t)+\gamma_{t+1}\|f^\pi_{\kp_{t+1}}(\gamma_{t+1})-f^\pi_{\kp_{t+1}}(\gamma_{t})\|_\infty+\|f^\pi_{\kp_{t+1}}(\gamma_{t}) \|_\infty (\gamma_{t+1}-\gamma_t)\nn\\
    &\overset{(b)}{\leq} (h+L+\gamma_{t+1}L+\sup_{\pi,\kp,\gamma} \|f^\pi_{\kp}(\gamma) \|_\infty) (\gamma_{t+1}-\gamma_t)\nn\\
    &\leq K  (\gamma_{t+1}-\gamma_t),
\end{align}
where $(a)$ is from Lemma \ref{lemma:lip_f} for any $\gamma_t>\delta$, $c_f$ is defined in \eqref{eq:cf} and $K\triangleq h+2L+c_f$ is a uniform constant; And $(b)$ is from Lemma \ref{lemma:lip_f}. 

Similarly, the inequality also holds for the case when $(1-\gamma_t)V^\pi_{\cp,\gamma_{t}}(s_t^*)\leq (1-\gamma_{t+1})V^\pi_{\cp,\gamma_{t+1}}(s_t^*)$. Thus we have that for any $t$ such that $\gamma_t>\delta$, 
\begin{align}
    \Delta_{t+1}\leq K(\gamma_{t+1}-\gamma_t)+\gamma_{t}\Delta_t,
\end{align}
where $K$ is a uniform constant. 

Following Lemma 8 from \cite{tewari2007bounded}, we have that $\Delta_t\to 0$. Note that  
\begin{align}
    \| V_t-g^\pi_{\mathcal{P}}\|_\infty\leq \|V_t- (1-\gamma_t)V^\pi_{\cp,\gamma_{t}}\|_\infty +\|(1-\gamma_t)V^\pi_{\cp,\gamma_{t}}-g_{\mathcal{P}}^\pi \|_\infty=\Delta_t+\|(1-\gamma_t)V^\pi_{\cp,\gamma_{t}}-g_{\mathcal{P}}^\pi \|_\infty.
\end{align}
Together with Theorem \ref{thm:lim of robust}, we further have that 
\begin{align}
    \lim_{t\to\infty}\|V_t-g_\cp^\pi\|_\infty=0,
\end{align}
which completes the proof.

\end{proof}

\section{Proof of Theorem \ref{thm:val_converges}}

Note that the  optimal robust average-reward is defined as 
\begin{align}
    g^*_{\cp}(s)\triangleq \max_{\pi} g^\pi_{\cp}(s).
\end{align}
We further define
\begin{align}
    V^*_{\cp,\gamma}(s)\triangleq \max_{\pi} V^\pi_{\cp,\gamma}(s).
\end{align}

\begin{theorem}[Restatement of Theorem \ref{thm:val_converges}]
$V_T$ generated by Algorithm \ref{alg:valueiteration} converges to the optimal robust average-reward $g^*_\cp$ as $T\to\infty$. 
\end{theorem}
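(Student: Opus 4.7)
\textbf{Proof proposal for Theorem \ref{thm:val_converges}.}

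The plan is to mirror the structure of the proof of Theorem \ref{thm:evaluation}, but with the policy-averaged Bellman operator replaced by the optimality (max-over-action) operator, and to show that the iterates $V_t$ track the scaled optimal robust discounted value function $(1-\gamma_t)V^*_{\cp,\gamma_t}$, which itself converges to $g^*_\cp$ as $t\to\infty$. First I would upgrade Theorem \ref{thm:lim of robust} to an optimal-control version: since $(1-\gamma)V^\pi_{\cp,\gamma}\to g^\pi_\cp$ uniformly on $\Pi$, Lemma \ref{lemma:limmax} (applied with $x=\pi$) lets me interchange the $\max_\pi$ and the $\lim_{\gamma\to 1}$, giving $(1-\gamma)V^*_{\cp,\gamma}\to g^*_\cp$. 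Throughout I will use the standard fact that for robust discounted MDPs there exists a stationary deterministic optimal policy $\pi^*_\gamma$ and a worst-case stationary kernel $\kp_\gamma$ realizing $V^*_{\cp,\gamma}$.

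Next, define $\Delta_t\triangleq \|V_t-(1-\gamma_t)V^*_{\cp,\gamma_t}\|_\infty$. Because $(1-\gamma_t)V^*_{\cp,\gamma_t}$ is the fixed point of the robust discounted Bellman optimality operator (scaled by $1-\gamma_t$) at discount $\gamma_t$, and because $\sigma_{\cp^a_s}$ is $1$-Lipschitz and $\max_a$ is non-expansive (i.e.\ $|\max_a f(a)-\max_a g(a)|\leq \max_a|f(a)-g(a)|$), I would peel off one step of the recursion to obtain
\[
\Delta_{t+1}\;\leq\;\bigl\|(1-\gamma_t)V^*_{\cp,\gamma_t}-(1-\gamma_{t+1})V^*_{\cp,\gamma_{t+1}}\bigr\|_\infty+\gamma_t\Delta_t.
\]
The main technical step is then to show that the drift term is $O(\gamma_{t+1}-\gamma_t)$ with a uniform constant. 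For a fixed state $s$, if $(1-\gamma_t)V^*_{\cp,\gamma_t}(s)\geq (1-\gamma_{t+1})V^*_{\cp,\gamma_{t+1}}(s)$, I pick a deterministic optimal policy $\pi^*_{\gamma_{t+1}}$ at discount $\gamma_{t+1}$; then using $V^*_{\cp,\gamma_t}(s)\leq V^{\pi^*_{\gamma_{t+1}}}_{\cp,\gamma_t}(s)$ reduces the drift at $s$ to $(1-\gamma_t)V^{\pi^*_{\gamma_{t+1}}}_{\cp,\gamma_t}(s)-(1-\gamma_{t+1})V^{\pi^*_{\gamma_{t+1}}}_{\cp,\gamma_{t+1}}(s)$, which is a fixed-policy expression. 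The symmetric inequality handles the opposite sign. For this fixed policy I then apply Lemma \ref{lemma:dis-ave} and the uniform Lipschitz/boundedness estimates of Lemma \ref{lemma:lip_f}, exactly as in \eqref{eq:65}, to obtain a uniform $K$ with drift bounded by $K(\gamma_{t+1}-\gamma_t)$.

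With $\Delta_{t+1}\leq \gamma_t\Delta_t+K(\gamma_{t+1}-\gamma_t)$ and $\gamma_t=\frac{t+1}{t+2}$, Lemma 8 of \cite{tewari2007bounded} (invoked identically as in the proof of Theorem \ref{thm:evaluation}) forces $\Delta_t\to 0$. The triangle inequality
\[
\|V_T-g^*_\cp\|_\infty\;\leq\;\Delta_T+\bigl\|(1-\gamma_T)V^*_{\cp,\gamma_T}-g^*_\cp\bigr\|_\infty
\]
then yields $V_T\to g^*_\cp$, since the second term vanishes by the optimal-control analogue of Theorem \ref{thm:lim of robust} established in the first paragraph.

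The main obstacle I anticipate is the drift bound on $\|(1-\gamma_t)V^*_{\cp,\gamma_t}-(1-\gamma_{t+1})V^*_{\cp,\gamma_{t+1}}\|_\infty$: unlike the policy-evaluation case where a single policy $\pi$ is fixed across discount factors, the optimal policy $\pi^*_\gamma$ and the associated worst-case kernel $\kp_\gamma$ vary with $\gamma$. The reduction above, swapping in the optimal policy of one discount factor and using the corresponding optimality/minimality inequality, is the key device that reduces the optimal-control drift to a fixed-policy drift, at which point the Lipschitz machinery built in Lemmas \ref{lemma:hbounded}--\ref{lemma:lip_f} applies uniformly and yields the universal constant $K$.
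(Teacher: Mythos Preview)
Your overall strategy matches the paper's proof almost line for line: establish $(1-\gamma)V^*_{\cp,\gamma}\to g^*_\cp$ from the uniform convergence of Theorem~\ref{thm:lim of robust}, derive the one-step recursion $\Delta_{t+1}\leq \gamma_t\Delta_t + \|(1-\gamma_t)V^*_{\cp,\gamma_t}-(1-\gamma_{t+1})V^*_{\cp,\gamma_{t+1}}\|_\infty$ using non-expansiveness of $\max_a$ and $1$-Lipschitzness of $\sigma_{\cp^a_s}$, bound the drift by $K(\gamma_{t+1}-\gamma_t)$, and conclude via Lemma~8 of \cite{tewari2007bounded} and the triangle inequality. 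The paper in fact does not spell out the drift bound, writing only ``similar to \eqref{eq:65}''; you go further and attempt the policy-reduction step explicitly.

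However, your policy-reduction step has the inequality backwards. Since $V^*_{\cp,\gamma_t}=\max_\pi V^\pi_{\cp,\gamma_t}$, the correct relation is $V^*_{\cp,\gamma_t}(s)\geq V^{\pi^*_{\gamma_{t+1}}}_{\cp,\gamma_t}(s)$, not $\leq$; picking $\pi^*_{\gamma_{t+1}}$ therefore yields a \emph{lower} bound on the drift, not an upper bound. The fix is to swap the roles: when $(1-\gamma_t)V^*_{\cp,\gamma_t}(s)\geq(1-\gamma_{t+1})V^*_{\cp,\gamma_{t+1}}(s)$, pick the optimal policy $\pi^*_{\gamma_t}$ at discount $\gamma_t$ and use $V^*_{\cp,\gamma_{t+1}}(s)\geq V^{\pi^*_{\gamma_t}}_{\cp,\gamma_{t+1}}(s)$, which upper-bounds the drift by $(1-\gamma_t)V^{\pi^*_{\gamma_t}}_{\cp,\gamma_t}(s)-(1-\gamma_{t+1})V^{\pi^*_{\gamma_t}}_{\cp,\gamma_{t+1}}(s)$. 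This is the mirror image of \eqref{eq:36}: for the $\min$ over kernels one freezes the minimizer at the \emph{later} index, whereas for the $\max$ over policies one freezes the maximizer at the \emph{earlier} index. After this fixed-policy reduction you still need the kernel reduction of \eqref{eq:36} before invoking \eqref{eq:65}, since \eqref{eq:65} applies to a fixed policy \emph{and} a fixed kernel; your phrase ``exactly as in \eqref{eq:65}'' should be read as including that second reduction. With these corrections the argument goes through and coincides with the paper's.
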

\begin{proof}

Firstly, from the uniform convergence in Theorem \ref{thm:lim of robust}, it can be shown that 

\begin{align}\label{eq:limv_t^*}
    \lim_{t\to\infty}(1-\gamma_t)V^*_{\cp,\gamma_t}=g^*_\cp.
\end{align}
We then show that for any $s\in\mcs$,
\begin{align}\label{eq:51}
    &|V_{t+1}(s)-(1-\gamma_{t+1})V^*_{\cp,\gamma_{t+1}}(s)| \nn\\
    &\leq|V_{t+1}(s)-(1-\gamma_t)V^*_{\cp,\gamma_t}(s)|+|(1-\gamma_t)V^*_{\cp,\gamma_t}(s)-(1-\gamma_{t+1})V^*_{\cp,\gamma_{t+1}}(s)|\nn\\
    &\overset{(a)}{=}|(1-\gamma_t)V^*_{\cp,\gamma_t}(s)-(1-\gamma_{t+1})V^*_{\cp,\gamma_{t+1}}(s)|\nn\\
    &\quad+\bigg|\max_a \bigg((1-\gamma_t)r(s,a)+\gamma_t \sigma_{\mathcal{P}^a_s}(V_t)\bigg)-\max_a\bigg(((1-\gamma_{t})r(s,a)+\gamma_{t} \sigma_{\mathcal{P}^a_s}((1-\gamma_t)V^*_{\cp,\gamma_t})) \bigg)\bigg|\nn\\
    &\leq|(1-\gamma_t)V^*_{\cp,\gamma_t}(s)-(1-\gamma_{t+1})V^*_{\cp,\gamma_{t+1}}(s)|\nn\\
    &\quad+\max_a \bigg|(1-\gamma_t)r(s,a)+\gamma_t \sigma_{\mathcal{P}^a_s}(V_t)-((1-\gamma_{t})r(s,a)+\gamma_{t} \sigma_{\mathcal{P}^a_s}((1-\gamma_t)V^*_{\cp,\gamma_t})) \bigg|,
\end{align}
where $(a)$ is because the optimal robust Bellman equation, and  the last inequality is from the fact that  $|\max_x f(x)-\max_x g(x)|\leq \max_x|f(x)-g(x)|$. 

Hence \cref{eq:51} can be further bounded as
\begin{align}
    &|V_{t+1}(s)-(1-\gamma_{t+1})V^*_{\cp,\gamma_{t+1}}(s)| \nn\\
    &\leq |(1-\gamma_t)V^*_{\cp,\gamma_t}(s)-(1-\gamma_{t+1})V^*_{\cp,\gamma_{t+1}}(s)|+\gamma_t\max_a \bigg| \sigma_{\mathcal{P}^a_s}(V_t)- \sigma_{\mathcal{P}^a_s}((1-\gamma_t)V^*_{\cp,\gamma_t}) \bigg|.
\end{align}

If we denote by $\Delta_t\triangleq \|V_{t}-(1-\gamma_t)V^*_{\cp,\gamma_t}\|_\infty$, then 
\begin{align}
    \Delta_{t+1}\leq \|(1-\gamma_t)V^*_{\cp,\gamma_t}-(1-\gamma_{t+1})V^*_{\cp,\gamma_{t+1}}\|_\infty +\gamma_{t}\max_{s.a}  \bigg|\sigma_{\mathcal{P}^a_s}(V_t)- \sigma_{\mathcal{P}^a_s}((1-\gamma_t)V^*_{\cp,\gamma_t})\bigg|.
\end{align}
Since the support function $\sigma_{\cp^a_s}(V)$ is $1$-Lipschitz, then it can be shown that for any $s,a$, 
\begin{align}
    \bigg|\sigma_{\mathcal{P}^a_s}(V_t)- \sigma_{\mathcal{P}^a_s}((1-\gamma_t)V^*_{\cp,\gamma_t})\bigg|\leq \|V_t-(1-\gamma_t)V^*_{\cp,\gamma_t} \|_\infty.
\end{align}
Hence 
\begin{align}
    \Delta_{t+1}\leq \|(1-\gamma_t)V^*_{\cp,\gamma_t}-(1-\gamma_{t+1})V^*_{\cp,\gamma_{t+1}}\|_\infty +\gamma_t \Delta_t.
\end{align}
Similar to \eqref{eq:65} in Theorem \ref{thm:evaluation}, we can show that 
\begin{align}
    \|(1-\gamma_t)V^*_{\cp,\gamma_t}-(1-\gamma_{t+1})V^*_{\cp,\gamma_{t+1}}\|_\infty\leq K|\gamma_t-\gamma_{t+1}|,
\end{align}
and similar to Lemma 8 from \cite{tewari2007bounded}, 
\begin{align}
    \lim_{t\to\infty} \Delta_t=0. 
\end{align}
Moreover, note that 
\begin{align}
    \|V_t-g^*_\cp \|_\infty\leq \|V_t-(1-\gamma_t)V^*_{\cp,\gamma_t} \|_\infty+ \|(1-\gamma_t)V^*_{\cp,\gamma_t}-g^*_\cp \|_\infty=\Delta_t+\|(1-\gamma_t)V^*_{\cp,\gamma_t}-g^*_\cp \|_\infty,
\end{align}
which together with \cref{eq:limv_t^*} implies that 
\begin{align}
    \|V_t-g^*_\cp \|_\infty\to 0,
\end{align}
and hence it completes the proof.

\end{proof}
\begin{lemma}\label{lemma:deter opt policy}
There exists a deterministic optimal policy, i.e., $\exists \pi^*\in\Pi_D$, s.t. $g^{\pi^*}_\cp=g^*_\cp=\max_{\pi\in\Pi}g^\pi_\cp$. 
\end{lemma}
\section{Proof of Lemma \ref{lemma:deter opt policy}}
\begin{lemma}(Restatement of \Cref{lemma:deter opt policy}).
There exists a deterministic optimal policy, i.e., $\exists \pi^*\in\Pi_D$, s.t. $g^{\pi^*}_\cp=g^*_\cp=\max_{\pi\in\Pi}g^\pi_\cp$. 
\end{lemma}
\begin{proof}
Assume that there is no deterministic optimal robust policy, i.e., there exists a strictly random policy $\pi_r\in\Pi$, such that for any deterministic policy $\pi\in\Pi_D$, 
\begin{align}
    g^{\pi_r}_\cp > g^\pi_\cp. 
\end{align}
According to \cref{thm:lim of robust}, we have that
\begin{align}
    \lim_{\gamma\to 1} (1-\gamma)V^{\pi_r}_{\cp,\gamma} &= g^{\pi_r}_\cp,\\
    \lim_{\gamma\to 1} (1-\gamma)V^{\pi}_{\cp,\gamma} &= g^{\pi}_\cp, \forall \pi\in\Pi_D. 
\end{align}
Since there are only finite number of deterministic policies, there exists $\delta<1$, such that for any $\gamma>\delta$, 
\begin{align}
    V^{\pi_r}_{\cp,\gamma} > V^{\pi}_{\cp,\gamma}, \forall \pi\in\Pi_D. 
\end{align}
This implies that for $\gamma>\delta$, the random policy $\pi_r$ is better than all the deterministic policies, i.e., 
\begin{align}\label{eq:91}
    V^{\pi_r}_{\cp,\gamma} > \max_{\pi\in\Pi_D}V^{\pi}_{\cp,\gamma}. 
\end{align}
However, Theorem 3.1 of \cite{iyengar2005robust} implies that there exists deterministic optimal robust policy, i.e., 
\begin{align}
    \max_{\pi\in\Pi_D}V^{\pi}_{\cp,\gamma}=\max_{\pi\in\Pi}V^{\pi}_{\cp,\gamma}\geq V^{\pi_r}_{\cp,\gamma},
\end{align}
which contradicts to \eqref{eq:91}. Hence it implies that there exists a deterministic optimal robust policy, and completes the proof.
\end{proof}

\section{Proof of Theorem \ref{thm:blackwell}}
\begin{theorem}[Restatement of Theorem \ref{thm:blackwell}]
There  exists $0<\delta<1$, such that for any $\gamma>\delta$, a deterministic optimal robust policy for robust discounted value function $V^*_{\cp,\gamma}$ is also an optimal policy for robust average-reward, i.e.,
\begin{align}
    V^{\pi^*}_{\cp,\gamma}=V^*_{\cp,\gamma}.
\end{align}
Moreover, 
when $\arg\max_{\pi\in\Pi^D} g^\pi_\cp$ is a singleton, there exists a unique Blackwell optimal policy. 
\end{theorem}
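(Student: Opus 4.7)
The plan is to leverage two structural facts: (i) the set of deterministic stationary policies $\Pi^D$ is finite, and (ii) by Theorem \ref{thm:lim of robust} the rescaled robust discounted value function $(1-\gamma)V^\pi_{\cp,\gamma}$ converges to $g^\pi_\cp$ uniformly on $\Pi$ as $\gamma\to 1$. Together these allow a gap-and-perturbation argument that entirely sidesteps the rational-function-in-$\gamma$ machinery of \cite{blackwell1962discrete,tewari2007bounded}, which is unavailable here because the $\min$ over a general compact uncertainty set destroys rationality.

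First I would define $\Pi^* \triangleq \arg\max_{\pi \in \Pi^D} g^\pi_\cp$, which is non-empty because Theorem \ref{thm:val_converges} and the greedy step in Algorithm \ref{alg:valueiteration} produce a deterministic average-reward optimal policy. Since $\Pi^D$ is finite and each $\pi \notin \Pi^*$ fails $g^\pi_\cp(s) = g^*_\cp(s)$ at some $s$, the quantity
\begin{align}
\epsilon_0 \triangleq \min_{\pi \in \Pi^D \setminus \Pi^*}\, \max_{s \in \mcs}\,\bigl( g^*_\cp(s) - g^\pi_\cp(s) \bigr)
\end{align}
is strictly positive (with the convention $\epsilon_0 = +\infty$ if $\Pi^D = \Pi^*$, in which case the theorem is trivial). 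By Theorem \ref{thm:lim of robust}, choose $\delta \in (0,1)$ so that $\|(1-\gamma)V^\pi_{\cp,\gamma} - g^\pi_\cp\|_\infty < \epsilon_0/4$ simultaneously for every $\gamma > \delta$ and every $\pi \in \Pi$; it is exactly the \emph{uniformity} of Theorem \ref{thm:lim of robust} that makes a single $\delta$ work across all policies at once.

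Next, fix $\gamma > \delta$ and let $\pi^*_\gamma \in \Pi^D$ be a deterministic optimal policy of the robust discounted problem (its existence follows from standard robust discounted MDP theory, Appendix \ref{app:robustmdp}). I would argue by contradiction, supposing $\pi^*_\gamma \notin \Pi^*$, and fix any $\tilde\pi \in \Pi^*$. Chaining the $\gamma$-discounted optimality of $\pi^*_\gamma$ with both sides of the uniform bound gives, for every $s$,
\begin{align}
g^{\pi^*_\gamma}_\cp(s) + \tfrac{\epsilon_0}{4} > (1-\gamma)V^{\pi^*_\gamma}_{\cp,\gamma}(s) \ge (1-\gamma)V^{\tilde\pi}_{\cp,\gamma}(s) > g^{\tilde\pi}_\cp(s) - \tfrac{\epsilon_0}{4} = g^*_\cp(s) - \tfrac{\epsilon_0}{4},
\end{align}
so $g^{\pi^*_\gamma}_\cp(s) > g^*_\cp(s) - \epsilon_0/2$ for every $s$. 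However the definition of $\epsilon_0$ combined with $\pi^*_\gamma \notin \Pi^*$ supplies a witness state $s_0$ with $g^{\pi^*_\gamma}_\cp(s_0) \le g^*_\cp(s_0) - \epsilon_0$, a contradiction. Hence $\pi^*_\gamma \in \Pi^*$, giving the first claim.

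The uniqueness part then falls out immediately: if $\Pi^* = \{\pi^*\}$, the inclusion $\pi^*_\gamma \in \Pi^*$ forces $\pi^*_\gamma = \pi^*$ for every $\gamma > \delta$, regardless of tie-breaking in the discounted argmax, so $\pi^*$ is the unique Blackwell-optimal policy. The main obstacle I anticipate is conceptual rather than computational: one must resist the temptation to mimic the classical contradiction via zeros of a rational difference function and instead recognise that uniform convergence is precisely the tool that converts a pointwise limit into a policy-independent separation of optimal from suboptimal average-reward values, which is the crux that makes the argument work for \emph{every} compact uncertainty set.
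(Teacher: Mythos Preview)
Your proposal is correct and follows essentially the same route as the paper's proof: both exploit the finiteness of $\Pi^D$ to isolate a strictly positive gap between the robust average-reward of optimal versus suboptimal deterministic policies, then invoke the uniform convergence of Theorem~\ref{thm:lim of robust} to conclude that for $\gamma$ sufficiently close to $1$ any deterministic discounted-optimal policy must lie in $\Pi^*$. The only cosmetic differences are that the paper argues directly (showing $V^{\pi^*_i}_{\cp,\gamma}>V^{\pi_j}_{\cp,\gamma}$ for all $\pi^*_i\in\Pi^*$, $\pi_j\notin\Pi^*$) rather than by contradiction, and cites Theorem~\ref{thm:opt_eq} rather than Theorem~\ref{thm:val_converges} for the existence of a deterministic average-reward optimal policy; your state-wise bookkeeping of the gap via $\max_s$ is arguably a bit more careful than the paper's scalar-looking inequalities.
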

\begin{proof}
According to \Cref{lemma:deter opt policy}, there exists $\pi^*\in\Pi^D$ such that 
\begin{align}
    g^*_\cp=g^{\pi^*}_\cp.
\end{align}
Assume the robust average-reward of all deterministic policies are sorted in a descending order:
\begin{align}
    g^*_\cp=g^{\pi^*_1}_\cp=g^{\pi^*_2}_\cp=...=g^{\pi^*_m}_\cp > g^{\pi_1}_\cp\geq...\geq g^{\pi_n}_\cp
\end{align}
for all $\pi_i^*,\pi_i\in\Pi^D$, and we define $\Pi^*=\{\pi^*_i:i=1,...,m \}$.
Denote by $d=g^{\pi_i^*}_\cp - g^{\pi_1}_\cp$.

From Theorem \ref{thm:lim of robust}, we know that for any $\pi\in\Pi^D$, 
\begin{align}
    \lim_{\gamma\to 1} (1-\gamma)V^\pi_{\cp,\gamma}=g^\pi_\cp. 
\end{align}
Because the set $\Pi^D$ is finite, for any $\epsilon<\frac{d}{2}$, there exists $\delta'<1$, such that for any $\gamma>\delta'$, $\pi^*_i$ and $\pi_j$, \begin{align}
   | (1-\gamma)V^{\pi_i^*}_{\cp,\gamma}-g^*_\cp|<\epsilon,\\
   | (1-\gamma)V^{\pi_j}_{\cp,\gamma}-g^{\pi_j}_\cp|<\epsilon.
\end{align}
It hence implies that 
\begin{align}
    (1-\gamma)V^{\pi_i^*}_{\cp,\gamma}\geq(d-2\epsilon)+(1-\gamma)V^{{\pi_j}}_{\cp,\gamma}>(1-\gamma)V^{{\pi_j}}_{\cp,\gamma},
\end{align}
and 
\begin{align}\label{eq:vorder}
   V^{\pi_i^*}_{\cp,\gamma}>V^{{\pi_j}}_{\cp,\gamma}.
\end{align}

Note that from Theorem 3.1 in \cite{iyengar2005robust}, i.e., $\max_{\pi\in\Pi^D}V^\pi_{\cp,\gamma}=V^*_{\cp,\gamma}$, we have that for any $\gamma$, there exists a deterministic policy $\pi\in\Pi^D$, such that $V^*_{\cp,\gamma}=V^\pi_{\cp,\gamma}$. Together with \eqref{eq:vorder}, it implies that all the possible optimal robust polices of $V^\pi_{\cp,\gamma}$ belong to $\left\{ \pi^*_1,...\pi^*_m\right\}$, i.e., the set $\Pi^*$.  Hence, there exists $\pi^*_j\in\Pi^*$, such that 
\begin{align}
    V^{\pi_j^*}_{\cp,\gamma}=\max_{\pi\in\Pi^D}V^\pi_{\cp,\gamma}=V^*_{\cp,\gamma}.
\end{align}

For the second part, when the optimal robust policy of robust average-reward is unique, i.e., $\Pi^*=\left\{ \pi^*\right\}$. Then from the results above, there exists $\delta'$, such that for any $\gamma>\delta'$, $V^{\pi^*}_{\cp,\gamma}>V^{\pi}_{\cp,\gamma}$ for any $\pi^*\neq \pi\in\Pi^D$, and hence $\pi^*$ is the optimal policy for discounted robust MDPs, which is the unique Blackwell optimal policy.

\end{proof}

\section{Proof of Results for Direct Approach}
Recall that
\begin{align}
     V^\pi_\cp(s)\triangleq\min_{\kappa\in\bigotimes_{t\geq 0}\cp}\mathbb{E}_{\kappa,\pi}\bigg[\sum^\infty_{t=0}(r_t-g^\pi_\cp)\big |S_0=s\bigg],
\end{align}
where 
\begin{align}
    g^\pi_\cp=\min_{\kappa\in\bigotimes_{t\geq 0} \mathcal{P}} \lim_{n\to\infty}\mathbb{E}_{\kappa,\pi}\left[\frac{1}{n}\sum_{t=0}^{n-1}r_t|S_0=s\right].
\end{align}

We first show that the robust relative function is always finite. 
\begin{lemma}
For any $\pi$, $V^\pi_\cp$ is finite. 
\end{lemma}
\begin{proof}
According to Theorem \ref{thm:stationary}, $V^\pi_\cp=\min_{\kp\in\cp} V^\pi_\kp=\min_{\kp\in\cp}\mathbb{E}_{\kp,\pi}\bigg[\sum^\infty_{t=0}(r_t-g^\pi_\cp) \bigg]$. 
Note that $V^\pi_\cp$ can be rewritten as
\begin{align}
    V^\pi_\cp&=\min_{\kp\in\cp}\mathbb{E}_{\kp,\pi}\bigg[\sum^\infty_{t=0}(r_t-g^\pi_\cp)\bigg]\nn\\
    &=\min_{\kp\in\cp}\mathbb{E}_{\kp,\pi}\bigg[\lim_{n\to\infty}\sum^n_{t=0}(r_t-g^\pi_\cp)\bigg]\nn\\
    &=\min_{\kp\in\cp}\mathbb{E}_{\kp,\pi}\bigg[\lim_{n\to\infty}\sum^n_{t=0}(r_t-g^\pi_\kp+g^\pi_\kp-g^\pi_\cp)\bigg]\nn\\
    &=\min_{\kp\in\cp}\mathbb{E}_{\kp,\pi}\bigg[\lim_{n\to\infty}(R_n-ng^\pi_\kp+ng^\pi_\kp-ng^\pi_\cp)\bigg],
\end{align}
where $R_n=\sum^{n}_{t=0}r_t$. 
Note that for any $\kp\in\cp$ and $n$, $ng^\pi_\kp\geq ng^\pi_\cp$, hence 
\begin{align}
    \lim_{n\to\infty}(R_n-ng^\pi_\kp+ng^\pi_\kp-ng^\pi_\cp)\geq \lim_{n\to\infty}(R_n-ng^\pi_\kp),
\end{align}
and thus the lower bound of $V^\pi_\cp$ can be derived as follows,
\begin{align}
    V^\pi_\cp&\geq \min_{\kp\in\cp}\mathbb{E}_{\kp,\pi}\bigg[\sum^\infty_{t=0}(r_t-g^\pi_\kp)\bigg]\nn\\
    &=\min_{\kp\in\cp}V^\pi_\kp\nn\\
    &=\min_{\kp\in\cp}H^\pi_\kp r_\pi.
\end{align}
which is finite due to the fact that $H^\pi_\kp$ is continuous on the compact set $\mathcal{P}$. 

From Theorem \ref{thm:stationary}, we denote the stationary worst-case transition kernel of $g^\pi_\cp$ by $\kp_g$. Then the upper bound of $V^\pi_\cp$ can be bounded by noting that 
\begin{align}
    V^\pi_\cp&= \min_{\kp\in\cp}\mathbb{E}_{\kp,\pi}\bigg[\sum^\infty_{t=0}(r_t-g^\pi_{\kp_g})\bigg]\nn\\
    &\leq \mE_{\kp_g,\pi} \bigg[\sum^\infty_{t=0}(r_t-g^\pi_{\kp_g})\bigg]\nn\\
    &=V^\pi_{\kp_g},
\end{align}
which is also finite and $\kp_g$ denotes the worst-case transition kernel of $g^\pi_\cp$. 
Hence we show that $V^\pi_\cp$ is finite for any $\pi$ and hence complete the proof. 
\end{proof}

After showing that   the robust relative value function is well-defined, we show the following robust Bellman equation for average-reward robust MDPs. 
\begin{theorem}[Restatement of Theorem \ref{thm:bellman}]
For any $s$ and $\pi$, $(V^\pi_\cp,g^\pi_\cp)$ is a solution to the following robust Bellman equation:
\begin{align}
    V(s)+g=\sum_a \pi(a|s)\left(r(s,a)+ \sigma_{\cp^a_s}(V)\right). 
\end{align}
\end{theorem}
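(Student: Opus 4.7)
The plan is to derive the robust Bellman equation from the definition of $V^\pi_\cp$ by a one-step dynamic programming decomposition, with the $(s,a)$-rectangularity of $\cp$ doing the heavy lifting. Writing a time-varying kernel as $\kappa = (\kp_0, \kp_1, \ldots)$ and conditioning on $A_0$ and $S_1$, I would expand
\begin{align*}
\mE_{\kappa,\pi}\Bigl[\sum_{t=0}^\infty (r_t - g^\pi_\cp) \mid S_0 = s\Bigr] = \sum_a \pi(a|s)(r(s,a) - g^\pi_\cp) + \sum_{a,s'} \pi(a|s)(p_0)^a_{s,s'}\, \mE_{\kappa',\pi}\Bigl[\sum_{t=0}^\infty (r_t - g^\pi_\cp) \mid S_0 = s'\Bigr],
\end{align*}
where $\kappa' = (\kp_1, \kp_2, \ldots)$ still ranges over $\bigotimes_{t\geq 0} \cp$. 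Since this tail sequence is unconstrained by the choice of $\kp_0$, minimizing over $\kappa'$ for each continuation state $s'$ independently recovers $V^\pi_\cp(s')$ by definition.

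Next, the $(s,a)$-rectangularity $\cp = \bigotimes_{s,a}\cp^a_s$ lets the remaining minimum over $\kp_0$ factor through the sum over actions:
\begin{align*}
\min_{\kp_0 \in \cp}\, \sum_{a,s'} \pi(a|s)(p_0)^a_{s,s'} V^\pi_\cp(s') = \sum_a \pi(a|s)\,\sigma_{\cp^a_s}(V^\pi_\cp).
\end{align*}
Combining this with the previous display and moving $g^\pi_\cp$ to the left-hand side yields the asserted identity $V^\pi_\cp(s) + g^\pi_\cp = \sum_a \pi(a|s)\bigl(r(s,a) + \sigma_{\cp^a_s}(V^\pi_\cp)\bigr)$.

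The main obstacle will be justifying the interchange of the infimum, the expectation, and the infinite sum, because $\sum_{t=0}^\infty(r_t - g^\pi_\cp)$ need not be absolutely convergent and the infimum is taken over an infinite product of kernels. To handle this rigorously I would obtain the equation as a $\gamma \to 1$ limit of the discounted robust Bellman equation
\begin{align*}
V^\pi_{\cp,\gamma}(s) = \sum_a \pi(a|s)\bigl(r(s,a) + \gamma\,\sigma_{\cp^a_s}(V^\pi_{\cp,\gamma})\bigr),
\end{align*}
which holds by the standard contraction result of \cite{nilim2004robustness,iyengar2005robust}. Subtracting $g^\pi_\cp/(1-\gamma)$ from both sides and combining the uniform convergence $\lim_{\gamma\to 1}(1-\gamma)V^\pi_{\cp,\gamma} = g^\pi_\cp$ from \Cref{thm:lim of robust} with the Laurent-type expansion of \Cref{lemma:dis-ave} evaluated at a worst-case stationary kernel (whose existence follows from compactness of $\cp$), I expect to show that $V^\pi_{\cp,\gamma} - g^\pi_\cp/(1-\gamma)$ converges to $V^\pi_\cp$, after which the $1$-Lipschitz continuity of $\sigma_{\cp^a_s}$ transfers the identity to the limit.
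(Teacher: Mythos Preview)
Your direct one-step decomposition is exactly the paper's proof: split off $r_0 - g^\pi_\cp$, condition on $(A_0,S_1)$, decouple the first kernel $\kp_0$ from the tail $\kappa'=(\kp_1,\kp_2,\ldots)$, recognize the inner minimum over $\kappa'$ as $V^\pi_\cp(s')$, and then use $(s,a)$-rectangularity to factor the outer minimum over $\kp_0$ into $\sum_a\pi(a|s)\sigma_{\cp^a_s}(V^\pi_\cp)$. The paper carries out precisely this formal computation and presents it as the complete argument, without further discussion of the interchange you flag.

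Your backup plan via the $\gamma\to 1$ limit, however, would be circular in the paper's logical structure: \Cref{thm:lim of robust} is proved using \Cref{thm:stationary}, and \Cref{thm:stationary} is proved by invoking \Cref{thm:bellman} (the paper explicitly footnotes this dependency). So you cannot appeal to \Cref{thm:lim of robust} here. There is also a genuine gap in the step ``$V^\pi_{\cp,\gamma}-g^\pi_\cp/(1-\gamma)\to V^\pi_\cp$'': the Laurent expansion of \Cref{lemma:dis-ave} holds for a \emph{fixed} kernel, whereas the minimizing kernel for $V^\pi_{\cp,\gamma}$ may drift with $\gamma$, so evaluating the expansion at ``a worst-case stationary kernel'' does not immediately control the robust quantity. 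If you want to shore up the rigor of the direct argument, a finite-horizon truncation plus the finiteness of $V^\pi_\cp$ (which the paper proves separately) is a safer route than the discounted detour.
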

\begin{proof}
From the definition, 
\begin{align}
     V^\pi_\cp(s)=\min_{\kappa\in\bigotimes_{t\geq 0}\cp}\mathbb{E}_{\kappa,\pi}\bigg[\sum^\infty_{t=0}(r_t-g^\pi_\cp)\big| S_0=s\bigg],
\end{align}
hence 
\begin{align}
    V^\pi_\cp(s)&=\min_{\kappa\in\bigotimes_{t\geq 0}\cp}\mathbb{E}_{\kappa,\pi}\bigg[\sum^\infty_{t=0}(r_t-g^\pi_\cp)\big| S_0=s\bigg]\nn\\
    &=\min_{\kappa\in\bigotimes_{t\geq 0}\cp}\mathbb{E}_{\kappa,\pi}\bigg[(r_0-g^\pi_\cp)+\sum^\infty_{t=1}(r_t-g^\pi_\cp)\big| S_0=s\bigg]\nn\\
    &=\min_{\kappa\in\bigotimes_{t\geq 0}\cp} \left\{\sum_a \pi(a|s)r(s,a)-g^\pi_\cp +\mathbb{E}_{\kappa,\pi}\bigg[\sum^\infty_{t=1}(r_t-g^\pi_\cp)\big| S_0=s\bigg] \right\}\nn\\
    &=\sum_a \pi(a|s)\left(r(s,a)-g^\pi_\cp \right)+ \min_{\kappa\in\bigotimes_{t\geq 0}\cp}\left\{  \sum_{a,s'} \pi(a|s) \kp^a_{s,s'} \mathbb{E}_{\kappa,\pi}\bigg[\sum^\infty_{t=1}(r_t-g^\pi_\cp)|S_1=s'\bigg] \right\}\nn\\
    &=\sum_a \pi(a|s)\left(r(s,a)-g^\pi_\cp \right)+\min_{\kp_0\in\cp} \min_{\kappa=(\kp_1,...)\in\bigotimes_{t\geq 1}\cp}\left\{  \sum_{a,s'} \pi(a|s) (\kp_0)^a_{s,s'} \mathbb{E}_{\kappa,\pi}\bigg[\sum^\infty_{t=1}(r_t-g^\pi_\cp)|S_1=s'\bigg] \right\}\nn\\
    &=\sum_a \pi(a|s)\left(r(s,a)-g^\pi_\cp \right)+\min_{\kp_0\in\cp} \left\{ \sum_{a,s'} \pi(a|s) (\kp_0)^a_{s,s'} \min_{\kappa=(\kp_1,...)\in\bigotimes_{t\geq 1}\cp}\left\{  \mathbb{E}_{\kappa,\pi}\bigg[\sum^\infty_{t=1}(r_t-g^\pi_\cp)|S_1=s'\bigg] \right\}\right\}\nn\\
    &{=}\sum_a \pi(a|s)\left(r(s,a)-g^\pi_\cp \right)+  \sum_a \pi(a|s) \sum_{s'}\min_{p^a_{s,s'}\in\cp^a_s}p^a_{s,s'} V^\pi_{\cp}(s') \nn\\
    &=\sum_a \pi(a|s)\left(r(s,a)-g^\pi_\cp \right)+  \sum_a \pi(a|s) \sigma_{\cp^a_s}\left(V^\pi_\cp\right)\nn\\
    &=\sum_a \pi(a|s)\left(r(s,a)-g^\pi_\cp + \sigma_{\cp^a_s}(V^\pi_\cp)\right).
\end{align} This hence completes the proof.
\end{proof}

\begin{theorem}\label{thm:optimal}[Restatement of Theorem \ref{thm:opt_eq}, Part 1]
For any $(g,V)$ that is a solution to  
$
   \max_{a}\left\{r(s,a)-g+\sigma_{\cp^a_s}(V)-V(s) \right\}=0, \forall s,
$  then
$
    g=g^*_\cp.
$
\end{theorem}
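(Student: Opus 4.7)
My plan is to prove the two inequalities $g \geq g^*_\cp$ and $g \leq g^*_\cp$ separately, by rewriting the optimality condition as $V(s) + g = \max_a\{r(s,a) + \sigma_{\cp^a_s}(V)\}$ and then unrolling this equation along a suitable non-robust Markov chain.

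For the direction $g \geq g^*_\cp$, I would fix an arbitrary policy $\pi$ and bound the $\max_a$ from below by $\sum_a \pi(a|s)(\cdot)$. Using compactness of each $\cp^a_s$ (\Cref{ass:compact}), for every $(s,a)$ I would pick $(p_V)^a_s \in \arg\min_{p\in\cp^a_s} p^\top V$; by $(s,a)$-rectangularity this assembles into a kernel $\kp_V\in\cp$, and $\sigma_{\cp^a_s}(V) = (p_V)^a_s \cdot V$. Writing $\kp_V^\pi$ for the $\pi$-induced stochastic matrix, this yields the vector inequality $V + g\mathbf{1} \geq r_\pi + \kp_V^\pi V$, i.e.\ $V \geq r_\pi - g\mathbf{1} + \kp_V^\pi V$. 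Iterating $n$ times gives
\[
V \;\geq\; \sum_{t=0}^{n-1}(\kp_V^\pi)^t r_\pi \;-\; n g\mathbf{1} \;+\; (\kp_V^\pi)^n V.
\]
Dividing by $n$ and sending $n\to\infty$, the terms $V/n$ and $(\kp_V^\pi)^n V/n$ vanish (since $V$ is bounded and $\kp_V^\pi$ is stochastic), and the Cesàro average converges to $g^\pi_{\kp_V}$. Thus $g\mathbf{1} \geq g^\pi_{\kp_V} \geq \min_{\kp\in\cp} g^\pi_\kp = g^\pi_\cp$, where the last equality uses \Cref{thm:stationary}. Taking $\sup_\pi$ gives $g \geq g^*_\cp$.

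For the reverse direction $g \leq g^*_\cp$, I would invoke the greedy policy $\pi^*(s) \in \arg\max_a\{r(s,a)+\sigma_{\cp^a_s}(V)\}$, for which the optimality equation holds with equality: $V + g\mathbf{1} = r_{\pi^*} + \sigma_{\cp^{\pi^*}}(V)$. Now the key swap is in the other direction: for \emph{any} kernel $\kp\in\cp$ and any $s$, $p^{\pi^*(s)}_s \cdot V \geq \sigma_{\cp^{\pi^*(s)}_s}(V)$, so $V + g\mathbf{1} \leq r_{\pi^*} + \kp^{\pi^*} V$. Unrolling the same way gives $g \leq g^{\pi^*}_\kp$ for every $\kp\in\cp$, hence $g \leq \min_{\kp\in\cp}g^{\pi^*}_\kp = g^{\pi^*}_\cp \leq g^*_\cp$, again by \Cref{thm:stationary}. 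Combining the two bounds yields $g = g^*_\cp$, and as a bonus $g^{\pi^*}_\cp = g^*_\cp$, confirming that the greedy $\pi^*$ is optimal (which sets up part~2 of \Cref{thm:opt_eq}).

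The main obstacle will be justifying the limit $\frac{1}{n}\sum_{t=0}^{n-1}(\kp_V^\pi)^t r_\pi \to g^\pi_{\kp_V}$ coordinatewise and handling the residual $(\kp_V^\pi)^n V/n \to 0$ cleanly; both are standard for stochastic matrices but must be stated carefully. A smaller but important point is to verify that the argmin kernels $\kp_V$ and the kernels used on the other side both lie in $\cp$, which requires both compactness of the per-state-action sets and $(s,a)$-rectangularity to glue the selections together.
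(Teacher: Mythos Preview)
Your proposal is correct and follows essentially the same argument as the paper: both directions are obtained by unrolling the equation along a non-robust chain, using the $V$-minimizing kernel $\kp_V$ for $g\geq g^*_\cp$ and the greedy policy for $g\leq g^*_\cp$. The only cosmetic difference is that for the upper bound the paper fixes the single stationary worst-case kernel $\kp^\tau_{\text{ave}}$ (supplied by \Cref{thm:stationary}) before unrolling, whereas you unroll for every $\kp\in\cp$ and then minimize; the two are equivalent.
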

\begin{proof}
In this proof, for two vectors $v,w\in\mathbb{R}^n$,  $v\geq w$   denotes that $v(s)\geq w(s)$ entry-wise.

Let $B(g,V)(s)\triangleq \max_{a}\left\{r(s,a)-g+\sigma_{\cp^a_s}(V)-V(s) \right\}$. Since $(g,V)$ is a solution to \eqref{eq:opt eq}, hence for any $a\in\mca$  and any $s\in\mcs$,
\begin{align}
    r(s,a)-g+\sigma_{\cp^a_s}(V)-V(s)\leq 0, 
\end{align}
from which it follows that for any policy $\pi$,
\begin{align}\label{eq:105}
    g(s)\geq r_\pi(s)+\sum_a \pi(a|s) \sigma_{\cp^a_s}(V)-V(s)\triangleq r_\pi(s)+\sum_a \pi(a|s)(p^a_s)^\top V-V(s),
\end{align}
where $r_\pi(s)\triangleq \sum_a \pi(a|s)r(s,a)$,    $p^a_s\triangleq \arg\min_{p\in\cp^a_s}p^\top V$, and $\kp_V=\{ p^a_s: s\in\mcs,a\in\mca\}$. We also denotes the state transition matrix induced by $\pi$ and $\kp_V$ by $\kp^\pi_V$.

Using these notations, and rewrite \cref{eq:105}, we have that
\begin{align}\label{eq:106}
    g\mathbbm 1\geq r_\pi+(\kp_V^\pi-I)V.
\end{align}

Since the inequality in \cref{eq:106} holds entry-wise, all entries of $\kp_V^\pi$ are positive, then by multiplying   both sides of \cref{eq:106} by $\kp^\pi_V$, we have that
\begin{align}\label{eq:107}
    g\mathbbm 1=g\kp^\pi_V \mathbbm{1} \geq \kp^\pi_V r_\pi+\kp^\pi_V (\kp^\pi_V-I)V.
\end{align}
Multiplying the both sides of \cref{eq:107} by $\kp^\pi_V$, and repeatedly doing that,  we have that 
\begin{align}
g\mathbbm 1&\geq (\kp^\pi_V)^2r_\pi+ (\kp^\pi_V)^2(\kp^\pi_V-I)V,\\
&\quad\vdots\hspace{2cm} \vdots\\
    g\mathbbm 1&\geq (\kp^\pi_V)^{n-1}r_\pi+ (\kp^\pi_V)^{n-1}(\kp^\pi_V-I)V. \label{eq:110}
\end{align}
Summing up these inequalities from \cref{eq:106} to \cref{eq:110}, we have that 
\begin{align}
    ng \mathbbm 1&\geq (I+\kp^\pi_V+...+(\kp^\pi_V)^{n-1})r_\pi+ (I+\kp^\pi_V+...+(\kp^\pi_V)^{n-1})(\kp^\pi_V-I)V,
\end{align}
and from which, it follows that
\begin{align}
    g\mathbbm 1&\geq \frac{1}{n}(I+\kp^\pi_V+...+(\kp^\pi_V)^{n-1})r_\pi+\frac{1}{n} (I+\kp^\pi_V+...+(\kp^\pi_V)^{n-1})(\kp^\pi_V-I)V \nn\\
    &=\frac{1}{n}(I+\kp^\pi_V+...+(\kp^\pi_V)^{n-1})r_\pi+\frac{1}{n}((\kp^\pi_V)^n-I)V.
\end{align}
It can be easily verified that $\lim_{n\to\infty}\frac{1}{n}((\kp^\pi_V)^n-I)V=0$, and hence it implies that 
\begin{align}\label{eq:102}
    g\mathbbm 1&\geq \lim_{n\to\infty}\frac{1}{n}(I+\kp^\pi_V+...+(\kp^\pi_V)^{n-1})r_\pi\nn\\
    &=\lim_{n\to\infty} \frac{1}{n}\mE_{\kp^\pi_V,\pi}\bigg[\sum^n_{t=0} r_t  \bigg]\nn\\
    &= g^\pi_{\kp^\pi_V}\mathbbm 1\nn\\
    &\geq g^\pi_\cp\mathbbm 1. 
\end{align}
Since \cref{eq:102} holds for any policy $\pi$, it follows that
$
    g\geq g^*_\cp. 
$
On the other hand, since $B(g,V)=0$, there exists a policy $\tau$ such that
\begin{align}
    g\mathbbm 1= r_\tau +(\kp^\tau_V-I)V, 
\end{align}
where $r_\tau, \kp^\tau_V$ are similarly defined as for $\pi$.
From Theorem \ref{thm:stationary}, there exists a stationary transition kernel $\kp_{\text{ave}}^\tau$ such that $g^\tau_\cp=g^\tau_{\kp_{\text{ave}}^\tau}$. We denote the state transition matrix induced by $\tau$ and $\kp_{\text{ave}}^\tau$ by $\kp^\tau$. Then because $\kp^\tau_V$ is the worst-case transition of $V$, it follows that
\begin{align}
    \kp^\tau_V V\leq \kp^\tau V.
\end{align}
Thus
\begin{align}
    g\mathbbm 1\leq r_\tau +(\kp^\tau-I)V.
\end{align}
Similarly, we have that 
\begin{align}
    g\mathbbm 1\leq (\kp^\tau)^{j-1} r_\tau + (\kp^\tau)^{j-1}(\kp^\tau-I)V,
\end{align}
for $j=2,...,n$. 
Summing these inequalities together we have that 
\begin{align}
    ng\mathbbm 1&\leq (I+\kp^\tau+...+(\kp^\tau)^{n-1}) r_\tau + (I+\kp^\tau+...+(\kp^\tau)^{n-1})(\kp^\tau)^{n-1}(\kp^\tau-I)V\nn\\
    &=(I+\kp^\tau+...+(\kp^\tau)^{n-1}) r_\tau + ((\kp^\tau)^n-I) V.
\end{align}
Hence
\begin{align}
    g\mathbbm 1\leq \lim_{n\to\infty} \frac{1}{n}\mE_{\kp_{\text{ave}}^\tau,\tau}\bigg[\sum^n_{t=0} r_t  \bigg]= g^\tau_{\kp_{\text{ave}}^\tau}\mathbbm 1= g^\tau_\cp\mathbbm 1\leq g^*_\cp\mathbbm 1. 
\end{align}
Thus 
$
    g=g^*_\cp,
$ and this concludes the proof.
\end{proof}
 
\begin{theorem}[Restatement of Theorem \ref{thm:opt_eq}, Part 2]
For any $(g,V)$ that is a solution to  
\begin{align} 
   \max_{a}\left\{r(s,a)-g+\sigma_{\cp^a_s}(V)-V(s) \right\}=0, \forall s,
\end{align}
if we set
\begin{align} 
\pi^*(s)=\arg\max_a \left\{ r(s,a)+\sigma_{\cp^a_s}(V)\right\}
\end{align}
for any $s\in\mcs$, then $\pi^*$ is an optimal robust policy. 
\end{theorem}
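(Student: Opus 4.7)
The plan is to use the characterization $g = g^*_\cp$ from Part 1 together with a matching inequality $g \le g^{\pi^*}_\cp$; once both hold we immediately get $g^{\pi^*}_\cp = g^*_\cp$, which is exactly what ``$\pi^*$ is optimal'' means.

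First I would feed the definition of $\pi^*$ back into the optimality equation. Writing $p^a_s \triangleq \arg\min_{p\in\cp^a_s} p^\top V$ (breaking ties arbitrarily), letting $\kp_V = \{p^a_s\}$, and letting $\kp^{\pi^*}_V$ denote the Markov matrix induced by $\pi^*$ and $\kp_V$, the equation $r(s,\pi^*(s)) - g + \sigma_{\cp^{\pi^*(s)}_s}(V) - V(s) = 0$ becomes, in vector form,
\begin{equation*}
V + g\mathbbm{1} \;=\; r_{\pi^*} + \kp^{\pi^*}_V V .
\end{equation*}

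Next I would convert this identity into an inequality against an \emph{arbitrary} stationary kernel $\tilde{\kp}\in\cp$. Since $\kp_V$ is the entrywise minimizer of $p\mapsto p^\top V$ over each $\cp^a_s$, we have $\kp^{\pi^*}_V V \le \tilde{\kp}^{\pi^*} V$ coordinate-wise, and therefore
\begin{equation*}
g\mathbbm{1} \;\le\; r_{\pi^*} + (\tilde{\kp}^{\pi^*} - I)\, V .
\end{equation*}
Now I would reuse the telescoping trick already carried out in the proof of Theorem \ref{thm:optimal} (Part 1): multiply both sides by the non-negative powers $(\tilde{\kp}^{\pi^*})^{k-1}$ for $k=1,\ldots,n$, add, and divide by $n$, obtaining
\begin{equation*}
g\mathbbm{1} \;\le\; \tfrac{1}{n}\bigl(I + \tilde{\kp}^{\pi^*} + \cdots + (\tilde{\kp}^{\pi^*})^{n-1}\bigr) r_{\pi^*} + \tfrac{1}{n}\bigl((\tilde{\kp}^{\pi^*})^n - I\bigr) V .
\end{equation*}
Since $V$ is bounded and $(\tilde{\kp}^{\pi^*})^n$ is stochastic, the second term vanishes as $n\to\infty$, while the first term converges to $g^{\pi^*}_{\tilde{\kp}}\mathbbm{1}$. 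Hence $g \le g^{\pi^*}_{\tilde{\kp}}$ for every $\tilde{\kp}\in\cp$, and minimizing over $\tilde{\kp}$ combined with the stationary/time-varying equivalence in Theorem \ref{thm:stationary} yields $g \le g^{\pi^*}_\cp$.

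Finally, combining $g = g^*_\cp$ (Part 1) with $g^{\pi^*}_\cp \le g^*_\cp$ (by definition of $g^*_\cp$) and the bound just obtained $g^{\pi^*}_\cp \ge g$, I conclude $g^{\pi^*}_\cp = g^*_\cp$, so $\pi^*$ attains the optimum and is an optimal robust policy. The only delicate step is the inequality reversal: we must be sure that the worst-case kernel for the \emph{relative value} $V$ serves as a uniform lower comparator against \emph{every} stationary kernel, so that the resulting inequality is valid for all $\tilde{\kp}\in\cp$ before the Cesaro limit is taken; this is exactly where the $(s,a)$-rectangularity of $\cp$ is used, guaranteeing that the entrywise minimizer $\kp_V$ itself lies in $\cp$.
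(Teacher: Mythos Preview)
Your argument is correct. You extract from the optimality equation the fixed-$\pi^*$ identity $V+g\mathbbm 1=r_{\pi^*}+\kp^{\pi^*}_V V$, relax $\kp_V$ to an arbitrary $\tilde{\kp}\in\cp$, run the same Ces\`aro telescoping as in Part~1 to obtain $g\le g^{\pi^*}_{\tilde{\kp}}$ for every $\tilde{\kp}$, and then combine this with $g=g^*_\cp$ from Part~1 and Theorem~\ref{thm:stationary} to conclude $g^{\pi^*}_\cp=g^*_\cp$. Each step is sound; in particular your ``delicate step'' is handled exactly as in the paper's own proof of Part~1 (the policy $\tau$ there is precisely your $\pi^*$).

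The paper argues differently. Instead of redoing the Ces\`aro argument, it observes that the greedy policy $\pi^*$ turns the optimality equation into the fixed-$\pi^*$ robust Bellman equation $r_{\pi^*}-g+\sigma_{\cp^{\pi^*}}(V)-V=0$, and then invokes Theorem~\ref{thm:bellman} to note that $(g^{\pi^*}_\cp,V^{\pi^*}_\cp)$ is \emph{also} a solution to that same equation; the conclusion $g^{\pi^*}_\cp=g^*_\cp$ is then drawn by appealing to the uniqueness of the $g$-component established (implicitly) in Part~1. Your route is more explicit and self-contained---it does not require Theorem~\ref{thm:bellman} and spells out exactly where $(s,a)$-rectangularity and Theorem~\ref{thm:stationary} enter---while the paper's route is structurally cleaner once one accepts that the $g$-component of the policy Bellman equation is unique. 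Substantively, both reduce to the same averaging inequality that already appears in the second half of the Part~1 proof.
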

\begin{proof}
Note that for any stationary policy $\pi$, we denote by $\sigma_{\cp^{\pi}}(V)\triangleq (\sum_a \pi(a|s_1)\sigma_{\cp^{a}_{s_1}}(V),...,\sum_a \pi(a|s_{|\mcs|})\sigma_{\cp^a_{s_{|\mcs|}}}(V))$ being a vector in $\mathbb{R}^{|\mcs|}$. Then \cref{eq:112} is equivalent to  
\begin{align}
    r_{\pi^*}+\sigma_{\cp^{\pi^*}}(V)=\max_{\pi}\left\{r_{\pi}+\sigma_{\cp^{\pi}}(V) \right\}. 
\end{align}

Hence, 
\begin{align}
    r_{\pi^*}-g+\sigma_{\cp^{\pi^*}}(V)-V=\max_{\pi}\left\{r_{\pi}-g+\sigma_{\cp^{\pi}}(V) -V\right\}. 
\end{align}
Since $(g,V)$ is a solution to \eqref{eq:opt eq}, it follows that 
\begin{align}\label{eq:115}
    r_{\pi^*}-g+\sigma_{\cp^{\pi^*}}(V)-V=0. 
\end{align}
According to the robust Bellman equation \cref{eq:bellman}, $(g^{\pi^*}_\cp, V^{\pi^*}_\cp)$ is a solution to \cref{eq:115}. Thus from Theorem \ref{thm:optimal}, $g^{\pi^*}_\cp=g^*_\cp$, and hence $\pi^*$ is an optimal robust policy. 
\end{proof}

\begin{theorem}[Restatement of Theorem \ref{thm:conv of vi}]
$(w_T,V_t)$ in Algorithm \ref{alg:relative} converges to a solution of \cref{eq:opt eq}.  
\end{theorem}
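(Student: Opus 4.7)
\textbf{Proof proposal for Theorem \ref{thm:conv of vi}.}
The plan is to recast Algorithm \ref{alg:relative} as a fixed-point iteration and apply a Banach-style argument in the span semi-norm. Define the optimal robust Bellman operator
\begin{align*}
  (TV)(s) \triangleq \max_{a\in\mca}\bigl\{r(s,a)+\sigma_{\cp^a_s}(V)\bigr\},
\end{align*}
so that line 4 of the algorithm reads $V_{t+1}=Tw_t$, and line 5 reads $w_{t+1}=Tw_t - (Tw_t)(s^*)\mathbbm 1 \triangleq \tilde T w_t$. Because every $p\in\Delta(\mcs)$ satisfies $p^\top\mathbbm 1=1$, one has the shift-invariance $T(V+c\mathbbm 1)=TV+c\mathbbm 1$, hence $w_t$ lives in the closed subspace $W_{s^*}\triangleq\{w:w(s^*)=0\}$ for $t\ge 1$, and the iterates only depend on $w_t$ up to an additive constant.

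The main technical step will be to show that $T$ is a \emph{span contraction}: there exists $\alpha\in(0,1)$ such that
\begin{align*}
  sp\bigl(TV_1-TV_2\bigr)\le \alpha\, sp(V_1-V_2)\quad\text{for all }V_1,V_2.
\end{align*}
To establish this, I will use Assumption \ref{ass:unichain} together with compactness of $\cp$ (Assumption \ref{ass:compact}) to extract a uniform lower bound $\beta \triangleq \min_{s,a,s',p\in\cp^a_s} p_{s,s'}>0$. For any $s_1$, picking the maximizing action $a^*$ for $TV_1(s_1)$ and then the $V_2$-minimizing transition $p\in\cp^{a^*}_{s_1}$ gives the one-sided bound $TV_1(s_1)-TV_2(s_1)\le p^\top(V_1-V_2)\le \max(V_1-V_2)-\beta\, sp(V_1-V_2)$; a symmetric argument at $s_2$ yields $TV_1(s_2)-TV_2(s_2)\ge \min(V_1-V_2)+\beta\, sp(V_1-V_2)$. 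Subtracting and taking the sup over $s_1,s_2$ yields the claim with $\alpha=1-2\beta<1$.

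Once the span contraction is in hand, I will argue that $\tilde T$ is a genuine contraction on the Banach space $(W_{s^*},\|\cdot\|_\infty)$: for $w\in W_{s^*}$ one has $\|w\|_\infty\le sp(w)\le 2\|w\|_\infty$, and since $\tilde Tw_1-\tilde Tw_2$ also lies in $W_{s^*}$ (both vanish at $s^*$), $\|\tilde Tw_1-\tilde Tw_2\|_\infty\le sp(Tw_1-Tw_2)\le\alpha\, sp(w_1-w_2)\le 2\alpha\|w_1-w_2\|_\infty$; rescaling $\beta$ if needed (e.g.\ iterating $\tilde T$ a bounded number of times) produces a strict contraction. Banach's theorem then yields a unique fixed point $w\in W_{s^*}$ with $w_t\to w$ and consequently $V_t=Tw_{t-1}\to Tw\triangleq V$ by continuity of $T$. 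Setting $g\triangleq (Tw)(s^*)=\max_a\{r(s^*,a)+\sigma_{\cp^a_{s^*}}(w)\}$, the fixed-point identity $\tilde Tw=w$ rearranges exactly into the displayed equation of the theorem, which is equivalent to \cref{eq:opt eq}. Finally, the Cauchy property $\|w_t-w_{t+1}\|_\infty\to 0$ makes the while-loop terminate for every $\epsilon>0$, and a standard a posteriori estimate $\|w_t-w\|_\infty\le \frac{1}{1-\alpha}sp(w_t-w_{t+1})\le\frac{\epsilon}{1-\alpha}$ delivers convergence of the output to $(w,V)$ as $\epsilon\to 0$.

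The \emph{main obstacle} will be the span contraction in the robust setting: unlike the non-robust RVI analysis in \cite{puterman1994markov}, the relevant transition probability here is a minimizer over the uncertainty set, so the probability used to bound $TV_1-TV_2$ in the two one-sided inequalities need not be the same. The choice of ``wrong'' minimizer in each bound (the $V_2$-minimizer in the upper bound for $V_1-V_2$, and vice versa) is what makes the argument work, and verifying this carefully, along with extracting the uniform lower bound $\beta>0$ from compactness plus strict positivity, is the delicate part of the proof.
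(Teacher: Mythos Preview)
Your proposal is correct and follows essentially the same route as the paper: both establish that the robust optimal Bellman operator is a span-seminorm contraction by choosing the ``cross'' minimizing transitions (your $V_2$-minimizer for the upper bound on $TV_1-TV_2$, and vice versa---exactly the paper's $p^{a_v,u}_{\su}$ and $p^{a_u,v}_{\sm}$), extract a uniform positive lower bound on transition probabilities from Assumption~\ref{ass:unichain} plus compactness, and then feed this into the standard RVI convergence machinery. The only cosmetic differences are that the paper uses the Dobrushin-type factor $1-\sum_i\min\{p_i,q_i\}$ rather than your $1-2\beta$, and defers the final step to Theorems~8.5.2--8.5.7 of \cite{puterman1994markov} instead of spelling out the Banach argument on $W_{s^*}$ as you do.
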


\begin{proof}
We first denote the update operator as \begin{align}
    Lv(s)\triangleq \max_a (r(s,a)+\sigma_{\cp^a_s}(v)). 
\end{align}
Now, consider $sp(Lv-Lu)$. Denote by $\acute{s}\triangleq \arg\max_s (Lv(s)-Lu(s))$ and ${\sm}\triangleq \arg\min_s (Lv(s)-Lu(s))$. Also denote by $a_v\triangleq\arg\max_a (r(\su,a)+\sigma_{\cp^a_{\su}}(v))$ and $a_u\triangleq \arg\max_a (r(\su,a)+\sigma_{\cp^a_{\su}}(u))$ 
Then 
\begin{align}\label{eq:76}
    Lv(\su)-Lu(\su)&=\max_a (r(\su,a)+\sigma_{\cp^a_{\su}}(v))-\max_a (r(\su,a)+\sigma_{\cp^a_{\su}}(u))\nn\\
    &\triangleq r(\su,a_v)+\sigma_{\cp^{a_v}_{\su}}(v)-  (r(\su,a_u)+\sigma_{\cp^{a_u}_{\su}}(u))\nn\\
    &\leq r(\su,a_v)+\sigma_{\cp^{a_v}_{\su}}(v)-  (r(\su,a_v)+\sigma_{\cp^{a_v}_{\su}}(u))\nn\\
    &=\sigma_{\cp^{a_v}_{\su}}(v)-\sigma_{\cp^{a_v}_{\su}}(u)\nn\\
    &\triangleq (p^{a_v,v}_{\su})^\top v -(p^{a_v,u}_{\su})^\top u,
\end{align}
where $p^{a_v,v}_{\su}=\arg\min_{p\in\cp^{a_v}_{\su}} p^\top v$ and $p^{a_v,u}_{\su}=\arg\min_{p\in\cp^{a_v}_{\su}} p^\top u$. 
Thus \cref{eq:76} can be further bounded as 
\begin{align}
    &Lv(\su)-Lu(\su)\nn\\
    &\leq (p^{a_v,v}_{\su})^\top v -(p^{a_v,u}_{\su})^\top u\nn\\
    & \leq (p^{a_v,u}_{\su})^\top (v-u). 
\end{align}
Similarly, 
\begin{align}
    Lv(\sm)-Lu(\sm)\geq (p^{a_u,v}_{\sm})^\top (v-u). 
\end{align}
Thus 
\begin{align}
    sp(Lv-Lu)&\leq (p^{a_v,u}_{\su})^\top (v-u)-(p^{a_u,v}_{\sm})^\top (v-u).
\end{align}
Now denote by $v-u\triangleq (x_1,x_2,...,x_n)$, $p^{a_v,u}_{\su}=(p_1,...,p_n)$
and $p^{a_u,v}_{\sm}=(q_1,...,q_n)$. Further denote by $b_i\triangleq \min \{p_i, q_i \}$ Then 
\begin{align}
    &\sum^n_{i=1} p_ix_i- \sum^n_{i=1} q_ix_i\nn\\
    &=\sum^n_{i=1} (p_i-b_i)x_i -\sum^n_{i=1} (q_i-b_i)x_i\nn\\
    &\leq \sum^n_{i=1} (p_i-b_i)\max \{x_i\} -\sum^n_{i=1} (q_i-b_i)\min\{x_i\}\nn\\
    &=\sum^n_{i=1} (p_i-b_i)sp(x)+ \bigg(\sum^n_{i=1} (p_i-b_i)-\sum^n_{i=1} (q_i-b_i)\bigg)\min\{x_i\}\nn\\
    &=\bigg(1-\sum^n_{i=1}b_i\bigg)sp(x). 
\end{align}
Thus we showed that 
\begin{align}
    sp(Lv-Lu)\leq \bigg(1-\sum^n_{i=1}b_i\bigg)sp(v-u). 
\end{align}
Now from Assumption \ref{ass:unichain}, and following Theorem 8.5.3 from \cite{puterman1994markov}, it can be shown that there exists $1>\lambda>0$, such that for any $a,u,v$, 
\begin{align}
    \sum^n_{i=1} b_i \geq \lambda. 
\end{align}
Further, following Theorem 8.5.2 in \cite{puterman1994markov}, it can be shown that $L$ is a $J$-step contraction operator for some integer $J$, i.e., 
\begin{align}
    sp(L^Jv-L^Ju)\leq (1-\lambda) sp(v-u). 
\end{align}

Then, it can be shown that the relative value iteration converges to a solution of the optimal equation similar to the  relative value iteration for non-robust MDPs under the average-reward criterion (Theorem 8.5.7 in \cite{puterman1994markov}, Section 1.6.4 in\cite{sigaud2013markov}), and hence $(w_t,V_t)$ converges to a solution to \cref{eq:opt eq} as $\epsilon\to 0$. 
\end{proof}

\end{document}